%% file: main.tex
\documentclass[10pt]{article} %

\usepackage{etoolbox}
\newcommand{\arxiv}[1]{\iftoggle{iclr}{}{#1}}
\newcommand{\iclr}[1]{\iftoggle{iclr}{#1}{}}
\newtoggle{iclr}
\global\toggletrue{iclr}
\global\togglefalse{iclr}

\iclr{
\PassOptionsToPackage{dvipsnames}{xcolor}
\usepackage{icml2026}
}

\usepackage[utf8]{inputenc} %
\usepackage[T1]{fontenc}    %
\usepackage{url}            %
\usepackage{booktabs}       %
\usepackage{amsfonts}       %
\usepackage{nicefrac}       %
\usepackage{microtype}      %
\usepackage{bm}
\usepackage{bbm}
\usepackage{multirow}
\usepackage{wrapfig}
\usepackage{algorithmicx}
\usepackage{algpseudocode}
\usepackage{mdframed}
\usepackage{subcaption}
\usepackage{enumitem}
\iclr{\setlist[itemize]{noitemsep, topsep=0pt}}
\iclr{\setlist[enumerate]{noitemsep, topsep=0pt}}

\newtoggle{draft}
\togglefalse{draft}

\usepackage{mathrsfs}

\usepackage{algorithm}
\usepackage{verbatim}
\usepackage{float} 

\usepackage{multicol}
\usepackage{siunitx}
\usepackage{colortbl}
\usepackage{xcolor}
\definecolor{hl}{RGB}{245,245,245}
\newcolumntype{H}{>{\columncolor{hl}}} %

\definecolor{sd}{RGB}{232, 233, 251}
\definecolor{fm}{RGB}{252, 232, 216}

\usepackage{setspace}

\usepackage{transparent}

\usepackage{inconsolata}
\usepackage[scaled=.90]{helvet}
\usepackage{xspace}
\usepackage[most]{tcolorbox}
\definecolor{rliableolive}{HTML}{BBCC33}
\tcbset{
  aibox/.style={
    width=\linewidth,
    top=10pt,
    bottom=4pt,
    colback=blue!6!white,
    colframe=black,
    colbacktitle=black,
    enhanced,
    center,
    attach boxed title to top left={yshift=-0.1in,xshift=0.15in},
    boxed title style={boxrule=0pt,colframe=white,},
  }
}
\newtcolorbox{AIbox}[2][]{aibox,title=#2,colback=rliableolive!10!white,#1}
\tcbuselibrary{listings,breakable}
\newtcolorbox{genbox}[2][]{enhanced, breakable,
  title={#2}, halign title=center,
  colback=teal!5, colframe=teal, coltext=black, coltitle=white,
  fonttitle=\bfseries, arc=1mm, boxrule=1pt, boxsep=1pt,
  left=4pt, right=4pt, top=2pt, bottom=2pt, toptitle=3pt, bottomtitle=3pt,
  #1}
\lstdefinestyle{trprose}{language={}, basicstyle=\ttfamily\scriptsize,
  columns=fullflexible, keepspaces=true, breaklines=true, breakindent=0pt,
  aboveskip=2pt, belowskip=2pt}
\lstdefinestyle{trcpp}{language=C++, basicstyle=\ttfamily\scriptsize,
  columns=fullflexible, keepspaces=true, breaklines=true, breakindent=0pt,
  showstringspaces=false, tabsize=2,
  keywordstyle=\color{blue!55!black}\bfseries,
  commentstyle=\color{green!45!black},
  stringstyle=\color{red!55!black},
  morecomment=[l][\color{violet!60!black}]{\#},
  aboveskip=3pt, belowskip=3pt}
\usepackage[most]{tcolorbox}

\tcbset{
  examplebox/.style={
    width=\linewidth,
    colback=gray!5,
    colframe=black,
    coltitle=black,
    fonttitle=\bfseries,
    boxrule=0.4pt,
    arc=3pt,
    left=6pt,
    right=6pt,
    top=6pt,
    bottom=6pt,
    enhanced
  }
}
\tcbuselibrary{breakable}

\usepackage{pifont}

\input{arxiv_style}
\input{macros}

\newcommand{\tailrl}{TailRL}                             % method name
\newcommand{\gtailrl}{g_{\mathrm{TailRL}}^{(N)}}          % finite-sample estimator (no baseline)
\newcommand{\gtailrlhat}{\hat g_{\mathrm{TailRL}}^{(N)}}  % with baseline (mean-centered)
\newcommand{\ind}{\mathbbm{1}}
\renewcommand{\comment}[1]{}  % SR: renew, not new: the verbatim package added in the sync also defines \comment

\iclr{%
\hypersetup{%
  colorlinks=true,%
  citecolor=[RGB]{50,100,170},%
  linkcolor=[RGB]{50,100,170},%
  urlcolor=[RGB]{255,102,178}%
}%
}

\usepackage{etoc}

\iclr{
\usepackage{hyperref}
}

\usepackage{color-edits}
\addauthor{ys}{MidnightBlue}

\arxiv{
\usepackage[final]{showlabels}

}

\usepackage{placeins}
\usepackage{caption}
\makeatletter
\let\OldStatex\Statex
\renewcommand{\Statex}[1][3]{%
  \setlength\@tempdima{\algorithmicindent}%
  \OldStatex\hskip\dimexpr#1\@tempdima\relax}
\makeatother

\usepackage{accents}
\usepackage{wrapfig}
\usepackage{parskip}
\usepackage{graphicx}
\usepackage{tikz}
\usetikzlibrary{decorations.pathreplacing}
\usepackage{varwidth}

\let\oldparagraph\paragraph
\arxiv{\renewcommand{\paragraph}[1]{\oldparagraph{#1.}}}
\iclr{\renewcommand{\paragraph}[1]{\textbf{#1.}}}

\newcommand{\paragraphi}[1]{\par\noindent\emph{#1.}}

\algrenewcommand\algorithmicrequire{\textbf{Input:}}
\algrenewcommand\algorithmicensure{\textbf{Return:}}

\newcommand{\algcomment}[1]{\textcolor{blue!70!black}{\footnotesize{\texttt{\textbf{//
          #1}}}}}

\iclr{
\icmltitlerunning{Tail-Likelihood Reinforcement Learning}
}

\arxiv{
    \title{Tail-Likelihood Reinforcement Learning}
  
    \author{
      \textbf{Shrinivas Ramasubramanian}$^{1}$ \quad
      \textbf{Daman Arora}$^{1}$ \quad
      \textbf{Fahim Tajwar}$^{1}$ \quad
      \textbf{Guanning Zeng}$^{1}$ \\
      \textbf{Qingyang Wu}$^{4}$ \quad
      \textbf{Zhongzhu Zhou}$^{4}$ \quad
      \textbf{Chenfeng Xu}$^{4}$ \\
      \textbf{Haiwen Feng}$^{2,3}$ \quad
      \textbf{Yuda Song}$^{1}$ \quad
      \textbf{Aarti Singh}$^{1}$ \quad
      \textbf{Ruslan Salakhutdinov}$^{1}$ \\
      \textbf{J. Andrew Bagnell}$^{5,1}$ \quad
      \textbf{Jeff Schneider}$^{1,\dagger}$ \quad
      \textbf{Andrea Zanette}$^{1,\dagger}$ \\[2mm]
      {\small
      $^{1}$Carnegie Mellon University \quad
      $^{2}$University of California, Berkeley \quad
      $^{3}$Impossible, Inc. \\
      $^{4}$Together AI \quad
      $^{5}$Aurora Innovation
      } \\
      \vspace{1mm}
      \texttt{\{shrinivr, jeff4, azanette\}@andrew.cmu.edu} \\
    }
}

\hypersetup{
  pdftitle={Tail-Likelihood Reinforcement Learning},
  pdfauthor={Shrinivas Ramasubramanian and Daman Arora and Fahim Tajwar and Guanning Zeng and Qingyang Wu and Zhongzhu Zhou and Chenfeng Xu and Haiwen Feng and Yuda Song and Aarti Singh and Ruslan Salakhutdinov and Drew Bagnell and Jeff Schneider and Andrea Zanette},
  pdfsubject={Reinforcement learning post-training beyond expected reward maximization},
  pdfkeywords={reinforcement learning, RLVR, post-training, Best-of-k, inference-time scaling, tail likelihood},
}

\begin{document}
\etocdepthtag.toc{maintext}

%% ---- arXiv-mode title + optional header logo ----
\arxiv{
\maketitle
\renewcommand{\thefootnote}{}
\footnotetext{$^{\dagger}$Joint advising.}
\footnotetext{Project website, code, and other assets: \url{https://zanette-labs.github.io/TailRL-website/}}
\renewcommand{\thefootnote}{\arabic{footnote}}
% Uncomment the two lines below if you have equal-contribution authors:
% \renewcommand{\thefootnote}{}
% \footnotetext{$^\dagger$ Joint advising.}
% \renewcommand{\thefootnote}{\arabic{footnote}}

% ---- Header logo block (remove this whole block if you don't want a logo) ----
\thispagestyle{fancy}
\fancyhead{}
\lhead{\raisebox{-0.7cm}{\includegraphics[height=0.4cm]{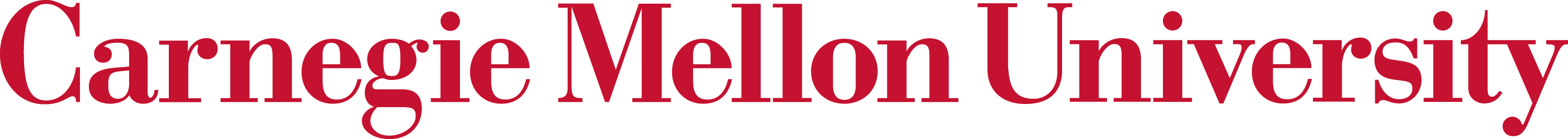}}}
\renewcommand{\headrulewidth}{0pt}
}

%% ---- ICML-mode title + author list ----
\iclr{
\twocolumn[
  \icmltitle{Your Paper Title Here}

  \icmlsetsymbol{equal}{*}

  \begin{icmlauthorlist}
    \icmlauthor{First Author}{yyy}
    \icmlauthor{Second Author}{comp}
  \end{icmlauthorlist}

  \icmlaffiliation{yyy}{Department, University, Location, Country}
  \icmlaffiliation{comp}{Company, Location, Country}

  \icmlcorrespondingauthor{First Author}{first@your.edu}

  \icmlkeywords{Machine Learning, ICML}

  \vskip 0.3in
]
\printAffiliationsAndNotice{}
}

\noindent

\begin{abstract}

Reinforcement learning typically optimizes average reward. For generative policies, the average can hide an important distinction: two policies can achieve the same mean reward while having very different chances of producing a rare but high-reward rollout. This matters as sampling increases during training and inference, since its benefit depends on retaining probability mass on high-reward outcomes.
We propose to optimize this coverage directly. Rather than considering only expected reward, we consider all of its upper tails: for each reward threshold, how likely is the policy to exceed it? This turns a continuous reward into a family of binary success events. We introduce \textbf{Tail-Likelihood Reinforcement Learning (\tailrl{})}, which maximizes the log-probability of exceeding a randomly chosen reward threshold. Its gradient gives more weight to rare, high-reward rollouts and can be interpreted as a mixture of Best-of-$k$ gradients. \tailrl{} requires only a simple modification to the advantage function, making it compatible with existing reinforcement learning pipelines. Across object localization, maze navigation, GUI grounding, and code optimization, TailRL leverages rare high-reward training samples to avoid suboptimal solutions and yields models that benefit more from additional samples at inference time.

\end{abstract}

\section{Introduction}
\label{sec:intro}

Reinforcement learning (RL) typically optimizes the expected reward of a policy
\citep{williams1992,shao2024deepseekmath,ahmadian2024back,yu2025dapo,zheng2025groupsequencepolicyoptimization}.
For generative policies, however, the mean reward does not fully characterize performance: two policies with similar mean reward can have very different probabilities of producing rare but exceptionally good rollouts.
This distinction matters whenever additional samples can be drawn, both during training and at inference time.

Recent work has exposed this problem directly: standard RL training can progressively lose coverage over rare, high-reward rollouts, often visible as a degradation in Best-of-$k$ performance
\citep{cui2025entropy,yue2025doesreinforcementlearningreally,wu2025invisibleleashrlvrescape,dang2025assessing,kirk2024understandingeffectsrlhfllm}.
Once these rollouts become sufficiently unlikely, they are rarely sampled again, making further policy improvement increasingly difficult.
The same loss of coverage limits inference-time scaling: a policy may perform well with a single sample while gaining little from drawing many
\citep{walder2025pkpo,chen2025passktrainingadaptivelybalancing,yang2025depthbreadthsynergyrlvrunlocking}.
Thus, optimizing only the mean reward can discard information about the upper tail of the reward distribution that is crucial for both training and inference scaling.

\begin{figure}[t]
\centering
\includegraphics[width=\textwidth]{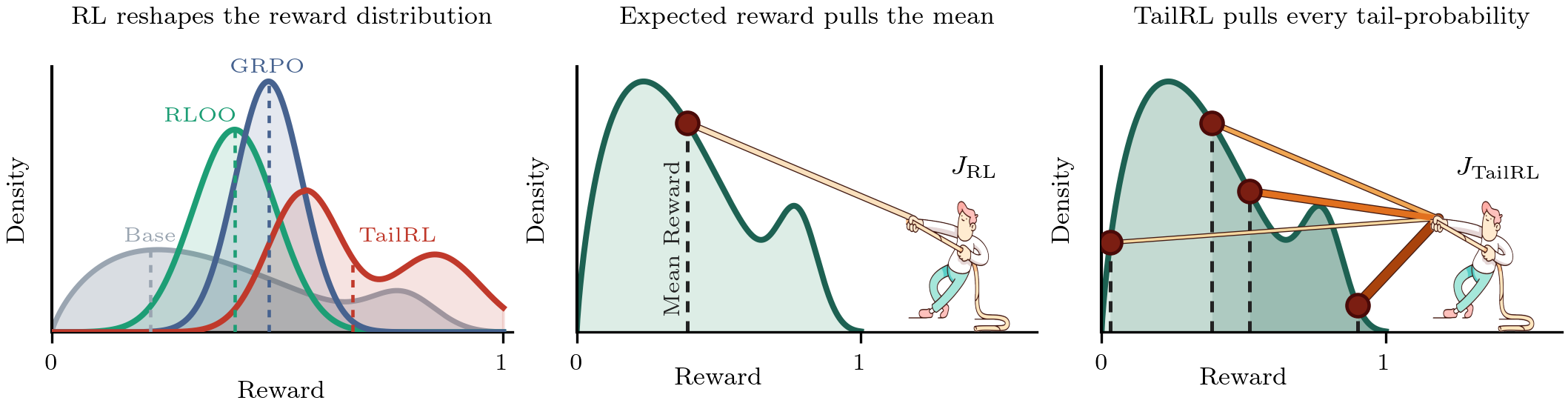}
\caption{\textbf{Expected reward pulls one rope; \tailrl{} pulls them all.}
Left: RL post-training reshapes the reward distribution of the base policy.
Expected-reward methods (GRPO, RLOO) shift the distribution and sharpen it around the mean, while \tailrl{} shifts it further and grows a heavy high-reward tail.
Middle: the expected-reward objective $J_{\mathrm{RL}}$ improves the distribution through a single handle, its mean.
Right: \tailrl{} increases the tail probability $p_\theta(x,\tau)$ across reward thresholds $\tau$, placing greater emphasis on rarer, higher-reward outcomes.}
\label{fig:tug-teaser}
\end{figure}

For binary rewards, MaxRL \citep{tajwar2026maximumlikelihoodreinforcementlearning} offers one way to address this problem.
Rather than maximizing the probability of success, MaxRL maximizes its log-probability, placing greater emphasis on rare successes.
Its gradient decomposes into a harmonic mixture of Pass@$k$ gradients, directly connecting likelihood maximization to a mixture of Pass@$k$ objectives.

How should this principle extend to \textbf{continuous rewards}?
Our starting point is simple: every reward threshold defines a binary event.
Given an input $x$, a rollout $z\sim\pi_\theta(\cdot\mid x)$, and a threshold $\tau$, we ask whether the reward exceeds $\tau$.
The corresponding \emph{tail probability} is
\begin{equation}
p_\theta(x,\tau)
:=
\Pr_{z\sim\pi_\theta(\cdot\mid x)}
\bigl(r(x,z)>\tau\bigr).
\end{equation}
A continuous reward can  be viewed as a family of binary success events, one for every threshold $\tau$. This perspective leads to \textbf{Tail-Likelihood Reinforcement Learning (\tailrl{})}.
For rewards in $[0,1]$, \tailrl{} maximizes the expected log-likelihood of exceeding a uniformly chosen reward threshold:
\begin{equation}
J_{\mathrm{\tailrl{}}}(\theta;x)
=
\int_0^1 \log p_\theta(x,\tau),d\tau.
\label{eq}
\end{equation}
Unlike expected-reward maximization, which acts on a single summary of the reward distribution, \tailrl{} explicitly optimizes upper-tail probabilities across reward levels.
We show that the gradient of \tailrl{} decomposes into a harmonic mixture of Best-of-$k$ gradients (\cref{sec:tailrl-best-of-k-expansion}), directly connecting the training objective to coverage of high-reward rollouts and inference-time scaling.
MaxRL emerges as the binary-reward special case.
Despite its different objective, \tailrl{} admits a simple critic-free policy-gradient estimator that differs from standard RL only in its advantage calculation, allowing it to be implemented by swapping the advantage function in an existing RL pipeline.
Across object localization (\cref{sec:exp-imagenet}), maze navigation (\cref{sec:exp-maze}), GUI grounding (\cref{sec:exp-gui}), and code optimization (\cref{sec:exp-pie}), \tailrl{} leverages rare high-reward samples during training to avoid suboptimal solutions and produces policies that benefit more from additional samples at inference time.

Our contributions are as follows.

\begin{enumerate}
\item \textbf{A likelihood objective for continuous rewards.} \tailrl{} maximizes the log-probability of exceeding a uniformly drawn reward threshold, which weights each reward level by the inverse of how often the policy reaches it. It introduces no threshold or weighting hyperparameter and reduces exactly to MaxRL for binary rewards.
\item \textbf{Alignment with inference-time scaling.} The gradient of \tailrl{} decomposes harmonically over Best-of-$k$ gradients (\cref{sec:tailrl-best-of-k-expansion}),
\[
\nabla_{\theta} J_{\tailrl}(\theta; x) = \sum_{k=1}^{\infty}\frac{1}{k}\,\nabla_\theta\,\text{Best-of-}k(\theta; x),
\]
so \tailrl{} improves Best-of-$k$ at every inference budget without choosing one in advance.
\item \textbf{An unbiased finite-rollout estimator.} A group of $N$ rollouts defines an order-$N$ truncation of \tailrl{} that interpolates from expected reward ($N=1$) to the population objective ($N\to\infty$), and we derive closed-form rollout weights that estimate its gradient without bias (\cref{sec:estimator}). Unlike REINFORCE, where more rollouts only reduce variance, here the rollout budget selects the objective being optimized.
\item \textbf{Strong empirical results.} \tailrl{} matches supervised objectives that observe the ground truth on object localization (\cref{sec:exp-imagenet}), learns from initial policies with $0.01\%$ success where expected-reward baselines fail on maze navigation (\cref{sec:exp-maze}), matches RLOO's Pass@1024 on GUI grounding with $128$--$256\times$ fewer inference rollouts (\cref{sec:exp-gui}), and reaches a $7.7\times$ Best-of-1024 speedup on code optimization where GRPO and RLOO collapse onto copying the input (\cref{sec:exp-pie}).
\end{enumerate}

\section{Preliminaries}
\label{sec:prelim}

We primarily focus on optimizing continuous reward reinforcement learning, where for each input $x$, a rollout $z$ is generated by the policy $\pi_{\theta}$. A rollout may be a generated response, program, or trajectory. A deterministic reward function provides a scalar feedback for an input rollout pair $r(x,z)\in[0,1]$. \cref{app:reward-range} treats a more general bounded reward range.

Training ultimately averages over $x\sim\rho$.
To keep the notation light, we write each objective for a fixed input $x$ and leave the outer average over inputs implicit.
We define the policy's score-function as
$S(x,z):=\nabla_\theta\log\pi_\theta(z\mid x)$. Standard reinforcement learning maximizes  expected reward and the score-function identity \citep{williams1992} gives its policy gradient as follows,
\begin{equation}
\begin{gathered}
J_{\mathrm{RL}}(\theta;x)
:=
\mathbb{E}_{z\sim\pi_\theta(\cdot\mid x)}
\!\left[r(x,z)\right],
\\[3pt]
\nabla_\theta J_{\mathrm{RL}}(\theta;x)
=
\mathbb{E}_{z\sim\pi_\theta(\cdot\mid x)}
\!\left[r(x,z)S(x,z)\right].
\end{gathered}
\label{eq:jrl}
\end{equation}

Thus, the policy gradient is a weighted combination of the score-function and the weights are determined by the rollout's reward. During training, critic-free rollout based policy gradient methods draw $N$ independent rollouts $z_1,\ldots,z_N\sim\pi_\theta(\cdot\mid x)$.
Critic-free methods such as GRPO \citep{shao2024deepseekmath} and RLOO \citep{ahmadian2024back} compute a finite rollout estimate of \cref{eq:jrl}. PKPO \citep{walder2025pkpo} optimizes Pass@$k$ and Best-of-$k$ for binary and continuous rewards using a similar finite-rollout based estimation framework.
\Cref{sec:estimator} shows that \tailrl{} uses the same template and changes only how these advantages are computed, and \cref{app:advantage-comparison} places the three advantage functions side by side.

\subsection{Inference-Time Selection}
\label{sec:prelim-selection}

At deployment, additional inference compute can be used to sample several rollouts and select the one with the highest reward. For $k$ independent rollouts, this performance is measured by probability of at-least one success (Pass@$k$) when rewards are binary and expected maximum reward among (Best-of-$k$) when rewards are  continuous:
\begin{align}
& \text{Pass@}k(\theta;x) := 1 - \left(\Pr_{z\sim\pi_{\theta}(\cdot\mid x)}\bigl(r(x,z)=0\bigr)\right)^{k},
&&
\text{(binary reward)}
\label{eq:passk-bestk}\\
& \text{Best-of-}k(\theta; x) := \mathbb{E}_{\{z_i\}_{i=1}^{k} \sim \pi_{\theta}(\cdot \mid x)}\!\left[\max_{1\le i\le k} r(x,z_i)\right],
&&
\text{(continuous reward)}
\label{eq:bok}
\end{align}
At $k=1$, both reduce to the mean reward, $ J_{\mathrm{RL}}(\theta;x)$. \Cref{app:passk-bestk} gives the estimators we typically use to compute both empirically. Both Pass@$k$ and Best-of-$k$ are non-decreasing in $k$, and as $k\to\infty$ each approaches the maximum reward in their support.

Unlike mean reward, $\text{Best-of-}k$ depends strongly on the upper part of the reward distribution. Two policies with the same mean can scale differently with additional training or inference compute if one assigns more probability to high-reward rollouts.

\subsection{Policy as a Generative Model of Rewards}
\label{sec:prelim-reward-dist}

A policy and reward function together define a distribution over rewards.
For an input $x$, the policy samples a rollout $z$ and the reward function assigns its reward:
\begin{equation}
z\sim\pi_\theta(\cdot\mid x),
\qquad
r=r(x,z)\in[0,1].
\label{eq:reward-generative}
\end{equation}
Because the reward function is deterministic, all randomness in the reward comes from the policy.
The induced reward distribution for any event $A$ ($A\subseteq[0,1]$) over the support of rewards is defined as, 
\begin{equation}
\Pr_{z\sim\pi_\theta(\cdot\mid x)}
\!\left(r(x,z)\in A\right) =  \mathbb{E}_{z\sim\pi_{\theta}(\cdot \mid x)}[\ind_{\{r(x,z) \in A \}}]
\label{eq:reward-mass}
\end{equation}
In this view, the policy is a generative model over rewards, and training manipulates the probability mass over reward values. Expected reward reinforcement learning uses only the mean reward to shape this reward distribution.

Different policies can have the same mean reward while assigning very different probabilities to high-reward outcomes.
This becomes visible in their difference in Best-of-$k$ and Pass@$k$ performance. This motivates objectives that act on the entirety of the reward distribution rather than only its mean. 
We begin with binary rewards, where the distribution is completely determined by a single success probability.

\subsection{MaxRL for Binary Rewards}
\label{sec:prelim-maxrl}

For a binary reward $r(x,z)\in\{0,1\}$,  expected reward equals the probability of success.
\begin{equation}
q_\theta(x)
:=
\Pr_{z\sim\pi_\theta(\cdot\mid x)}
\!\left(r(x,z)=1\right).
\label{eq:binary-p}
\end{equation}
The reward distribution is therefore a Bernoulli distribution over success and failures. The reward distribution is fully determined by $q_\theta(x)$.
Standard reinforcement learning maximizes $q_\theta(x)$, whereas MaxRL \citep{tajwar2026maximumlikelihoodreinforcementlearning} maximizes $J_{\mathrm{MaxRL}}(\theta;x) = \log q_\theta(x)$ and its gradient is,
\begin{equation}
\nabla_\theta J_{\mathrm{MaxRL}}(\theta;x)
=
\frac{1}{q_\theta(x)}
\mathbb{E}_{z\sim\pi_\theta(\cdot\mid x)}
\!\left[
\ind_{\{r(x,z)=1\}}S(x,z)
\right].
\label{eq:maxrl-gradient}
\end{equation}
The factor $1/q_\theta(x)$ gives greater weight to inputs on which success is rare. MaxRL also connects to inference-time sampling.  $\nabla_\theta J_{\mathrm{MaxRL}}(\theta;x)$ decomposes as a harmonic mixture of Pass@$k$ gradients.

\begin{equation}
\nabla_\theta  J_{\mathrm{MaxRL}}(\theta;x)
=
\sum_{k=1}^{\infty}
\frac{1}{k}
\nabla_\theta\mathrm{Pass@}k(\theta; x)
\label{eq:binary-harmonic}
\end{equation}

\section{Tail-Likelihood Reinforcement Learning}
\label{sec:tailrl-objective}

\begin{figure}[t]
\centering
\includegraphics[width=\textwidth]{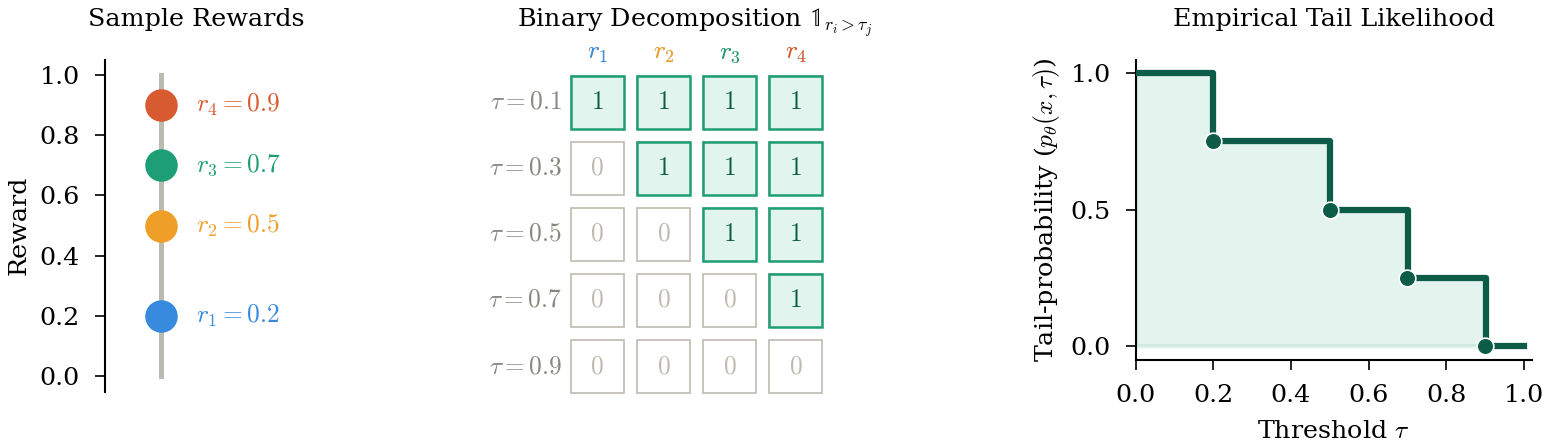}
\caption{A continuous reward decomposes into threshold events.
A rollout with reward $r_i$ clears every threshold below $r_i$.
Across a group of rollouts, these binary outcomes estimate the tail-probability $p_\theta(x,\tau)$ at every reward threshold.
\tailrl{} maximizes the average log-probability along this curve.}
\label{fig:teaser}
\end{figure}

We motivate \tailrl{} by expressing expected reward in terms of the policy's upper-tail probabilities.
For rewards in $[0,1]$, define the tail probability at threshold $\tau$ as
\begin{equation}
p_\theta(x,\tau)
:=
\Pr_{z\sim\pi_\theta(\cdot\mid x)}
\left(r(x,z)>\tau\right).
\label{eq:tailrl-tail-probability}
\end{equation}
The expected reward is exactly the area under this tail-probability curve:
\begin{equation}
J_{\mathrm{RL}}(\theta;x)
=
\mathbb E[r(x,z)]
=
\int_0^1 p_\theta(x,\tau)\,d\tau.
\label{eq:expected-reward-tail}
\end{equation}

\textbf{Tail-Likelihood Reinforcement Learning (\tailrl{})} instead applies the MaxRL likelihood principle at every reward threshold.
It maximizes the average log-probability of exceeding a uniformly sampled threshold:
\begin{equation}
J_{\mathrm{\tailrl{}}}(\theta;x)
:=
\int_0^1 \log p_\theta(x,\tau)\,d\tau
=
\mathbb E_{\tau\sim\mathrm{Unif}[0,1]}
\left[\log p_\theta(x,\tau)\right].
\label{eq:tailrl-objective}
\end{equation}
Equivalently, expected-reward RL aggregates tail probabilities arithmetically, whereas \tailrl{} aggregates them geometrically.
This makes small tail probabilities more influential, as seen directly from its gradient
\begin{equation}
\nabla_\theta J_{\mathrm{\tailrl{}}}(\theta;x)
=
\int_0^1
\frac{1}{p_\theta(x,\tau)}
\nabla_\theta p_\theta(x,\tau)
\,d\tau.
\label{eq:tailrl-threshold-gradient}
\end{equation}
The higher the reward threshold, the smaller the tail-probability and higher is the weight for the gradient. Thus reward thresholds that are difficult to reach receive larger weight during optimization.

\subsection{Harmonic Decomposition over Best-of-\texorpdfstring{$k$}{k}}
\label{sec:tailrl-best-of-k-expansion}

Although \tailrl{} is defined through reward thresholds, it has an exact interpretation in terms of inference-time sampling.
Recall that $\mathrm{Best\text{-}of\text{-}k}(\theta;x)$ is the expected maximum reward among $k$ independent rollouts from $\pi_\theta(\cdot\mid x)$.

\begin{theorem}[Best-of-$k$ decomposition]
\label{thm:tailrl-harmonic-decomposition}
The \tailrl{} objective decomposes as
\begin{equation}
J_{\mathrm{\tailrl{}}}(\theta;x)
=
\sum_{k=1}^{\infty}
\frac{\mathrm{Best\text{-}of\text{-}k}(\theta;x)-1}{k},
\label{eq:tailrl-harmonic-objective-limit}
\end{equation}
and, under the regularity conditions of \cref{lem:tailrl-regularity}, its gradient satisfies
\begin{equation}
\nabla_\theta J_{\mathrm{\tailrl{}}}(\theta;x)
=
\sum_{k=1}^{\infty}
\frac{1}{k}
\nabla_\theta \mathrm{Best\text{-}of\text{-}k}(\theta;x).
\label{eq:tailrl-harmonic-gradient-limit}
\end{equation}
\end{theorem}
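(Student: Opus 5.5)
The plan is to expand $\log p$ pointwise in $\tau$ as a power series and exchange sum and integral. Best-of-$k$ has a tail-integral representation. Since $\max_i r(x,z_i)\in[0,1]$ and the rollouts are i.i.d., we have $\Pr(\max_i r(x,z_i)>\tau)=1-(1-p_\theta(x,\tau))^k$, so
\[
\mathrm{Best\text{-}of\text{-}k}(\theta;x)=\int_0^1\bigl(1-(1-p_\theta(x,\tau))^k\bigr)\,d\tau ,
\]
which generalizes \cref{eq:expected-reward-tail}. Hence $\mathrm{Best\text{-}of\text{-}k}(\theta;x)-1=-\int_0^1 (1-p_\theta(x,\tau))^k\,d\tau$.

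For each $\tau$ with $p:=p_\theta(x,\tau)\in(0,1]$, the series $\log p=-\sum_{k\ge1}(1-p)^k/k$ holds. All terms are nonpositive, so Tonelli's theorem (monotone convergence) justifies exchanging $\int_0^1$ and $\sum_k$:
\[
J_{\mathrm{\tailrl{}}}=-\sum_{k}\frac1k\int_0^1(1-p_\theta)^k\,d\tau=\sum_k\frac{\mathrm{Best\text{-}of\text{-}k}-1}{k}.
\]
Thresholds with $p=0$ (i.e.\ $\tau\ge$ the essential max reward) need care. There $\log p=-\infty$ and each summand equals $-1/k$, so both sides equal $-\infty$ consistently in the extended reals. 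This is still valid by monotone convergence, and the identity \cref{eq:tailrl-harmonic-objective-limit} holds in $[-\infty,0]$.

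For the gradient, I would differentiate termwise: $\nabla\,\mathrm{Best\text{-}of\text{-}k}=\int_0^1 k(1-p)^{k-1}\nabla p\,d\tau$, so
\[
\sum_k\frac1k\nabla\mathrm{Best\text{-}of\text{-}k}=\int_0^1\sum_k(1-p)^{k-1}\nabla p\,d\tau=\int_0^1\frac{\nabla p}{p}\,d\tau ,
\]
which matches \cref{eq:tailrl-threshold-gradient}. The main obstacle is justifying the two interchanges: differentiating under the $\tau$-integral, and swapping $\nabla$ with the infinite sum. I expect \cref{lem:tailrl-regularity} supplies exactly what is needed, namely that $p_\theta(x,\tau)$ is differentiable in $\theta$ for a.e.\ $\tau$ and that $\|\nabla_\theta p_\theta(x,\tau)\|/p_\theta(x,\tau)$ is dominated by an integrable function, locally uniformly in $\theta$. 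Given this, the partial sums $\sum_{k\le K}(1-p)^{k-1}\nabla p$ are bounded in norm by $\|\nabla p\|/p$, which is integrable. Dominated convergence then gives the swap of sum and integral in the gradient expression.

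To pass from "the termwise-differentiated series converges" to "it equals $\nabla J$", I would apply the standard theorem on differentiating series of functions. It requires the series of gradients to converge locally uniformly in $\theta$, and the same domination bound, uniform in a neighborhood, gives this. The alternative is to differentiate $J=\int\log p$ directly under the integral, again by domination, and then compare the two integral expressions as above. The second route is cleaner, and it is the one I would write.
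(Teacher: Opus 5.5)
Your proposal is correct and follows essentially the same route as the paper. Both use the layer-cake formula for Best-of-$k$ and monotone convergence of the Maclaurin partial sums of $-\log p_\theta(x,\tau)$ for the objective; for the gradient, both apply dominated convergence to the truncated weights $\bigl(1-(1-p)^T\bigr)/p \uparrow 1/p$ under the bound $\|\nabla_\theta p_\theta\|/p_\theta \le C$ from \cref{lem:tailrl-regularity}, together with differentiating $\int_0^1 \log p_\theta\,d\tau$ directly under the integral. One small addition on your side: you treat thresholds with $p_\theta(x,\tau)=0$ explicitly in the extended reals, whereas the paper states the series identity only when the objective is finite.
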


Thus, \tailrl{} combines Best-of-$k$ learning signals across all inference budgets.
The harmonic weights $1/k$ arise automatically from the logarithm, so \tailrl{} does not require selecting a target inference budget in advance.
The proof is given in \cref{app:tailrl-harmonic-population}.

\subsection{Recovery of MaxRL for Binary Rewards}
\label{sec:tailrl-binary-recovery}

For binary rewards, every nontrivial reward threshold defines the same success event.
Let
\[
q_\theta(x)
:=
\Pr_{z\sim\pi_\theta(\cdot\mid x)}
\left(r(x,z)=1\right)
\]
denote the probability of success.
For every $\tau\in[0,1)$,
$p_\theta(x,\tau)=q_\theta(x)$, and therefore
\begin{equation}
J_{\mathrm{\tailrl{}}}(\theta;x)
=
\int_0^1 \log q_\theta(x)\,d\tau
=
\log q_\theta(x)
=
J_{\mathrm{MaxRL}}(\theta;x).
\label{eq:tailrl-maxrl-population}
\end{equation}
Moreover, for binary rewards $\mathrm{Best\text{-}of\text{-}k}$ coincides with $\mathrm{Pass@}k$, so the harmonic Best-of-$k$ decomposition above reduces exactly to the harmonic Pass@$k$ decomposition of MaxRL.
Thus, \tailrl{} directly generalizes MaxRL from binary to continuous rewards.

\subsection{Probabilistic Interpretation of \tailrl{}}
\label{sec:tailrl-derivation}

A direct extension of MaxRL would define only $r(x,z)=1$ as success and treat every lower reward as the same failure.
Exact success may be rare or unattainable.
Lowering the success threshold makes the event more common, but still treats all rewards on either side of the threshold as equivalent.
Once a rollout crosses the threshold, the objective has no preference for improving it further.
The resulting policy can perform well at the chosen threshold while remaining poor at higher reward levels (\cref{app:imagenet-binarization}).
We therefore need a likelihood event that preserves the continuous reward signal.

A continuous reward defines such an event at every reward threshold.
For $\tau\in[0,1)$, define the tail-probability
\begin{equation}
p_\theta(x,\tau)
:=
\Pr_{z\sim\pi_\theta(\cdot\mid x)}
\!\left(r(x,z)>\tau\right).
\label{eq:tailrl-tail-probability-restated}
\end{equation}

\paragraph{Independent quality audit}
We combine the tail events by assigning each reward threshold an independent rollout.
The audit passes only if every rollout clears its assigned threshold.
Independent rollouts are necessary: reusing one rollout would collapse the nested events to the hardest threshold.

For an audit with $L$ equally spaced thresholds, let
$\tau_\ell=(\ell-1)/L$ and draw
$z_1,\ldots,z_L\overset{\mathrm{i.i.d.}}{\sim}\pi_\theta(\cdot\mid x)$.
Define the event that the policy passes the audit as
\begin{equation}
E_L
:=
\bigcap_{\ell=1}^{L}
\left\{
r(x,z_\ell)>\tau_\ell
\right\}.
\label{eq:tailrl-audit-event}
\end{equation}
Because the rollouts are independent, the audit probability factorizes:
\begin{equation}
\Pr_{z_1...z_L \sim \pi_\theta(\cdot\mid x)}(E_L\mid x)
=
\prod_{\ell=1}^{L}
p_\theta(x,\tau_\ell).
\label{eq:tailrl-audit-probability}
\end{equation}
The number of thresholds controls only the resolution of the audit. Requiring invariance to the change in scale of the objective due to increasing resolution of the audit, while agreeing with ordinary log-likelihood when $L=1$, uniquely gives the normalization $1/L$:
\begin{equation}
\frac{1}{L}
\log\Pr_\theta(E_L\mid x)
=
\frac{1}{L}
\sum_{\ell=1}^{L}
\log p_\theta(x,\tau_\ell).
\label{eq:tailrl-audit-rate}
\end{equation}

Each midpoint $\tau_\ell$ represents an interval of width $1/L$.
The right-hand side of \cref{eq:tailrl-audit-rate} is therefore a Riemann sum over the reward range.
Letting $L\to\infty$ gives
\begin{equation}
J_{\mathrm{\tailrl{}}}(\theta;x)
:=
\lim_{L\to\infty}
\frac{1}{L}
\log\Pr_\theta(E_L\mid x)
=
\int_0^1
\log p_\theta(x,\tau)
\,d\tau,
\label{eq:tailrl-objective-audit-limit}
\end{equation}

We call \cref{eq:tailrl-objective} the population-level \tailrl{} objective.
Equivalently, \tailrl{} maximizes the expected log-probability of clearing a uniformly sampled reward threshold. 

\tailrl{} uses the full support of the reward distribution rather than selecting an arbitrary cut-off.  Uniform sampling of threshold $\tau$ assigns equal importance to the log-likelihood of their tail-events. It introduces no weighing hyperparameter.
More generally, one could sample $\tau$ from a non-uniform distribution inducing a re-weighting of the log-likelihood terms.
Any strictly positive normalized weighting is exactly equivalent to applying  \tailrl{} after a monotone transformation of the rewards (\cref{prop:tailrl-threshold-reparameterization}).
We use the uniform distribution throughout and leave task-specific reward shaping to future work.
The audit defines the population objective; it does not yet prescribe the finite-rollout estimator used for training (\cref{sec:estimator}).

\section{Estimating the \tailrl{} Gradient}
\label{sec:estimator}

Training observes only a finite group of rollouts during training. The population-level objective is inestimable from finite rollouts.
We show that a rollout budget of $N$ naturally defines an order-$N$ truncation of the \tailrl{} objective and admits a simple unbiased policy-gradient estimator.

\subsection{From Finite Rollouts to a Finite-Order Objective}
\label{sec:tailrl-finite-order}

For an order $T\geq 1$, define
\begin{equation}
J_{\mathrm{\tailrl{}}}^{(T)}(\theta;x)
:=
\sum_{k=1}^{T}
\frac{\mathrm{Best\text{-}of\text{-}k}(\theta;x)-1}{k}.
\label{eq:tailrl-harmonic-family}
\end{equation}
Its gradient is
\begin{equation}
\nabla_\theta J_{\mathrm{\tailrl{}}}^{(T)}(\theta;x)
=
\sum_{k=1}^{T}
\frac{1}{k}
\nabla_\theta
\mathrm{Best\text{-}of\text{-}k}(\theta;x).
\label{eq:tailrl-harmonic-gradient-main}
\end{equation}
At $T=1$, this has the standard expected-reward gradient, while
$J_{\mathrm{\tailrl{}}}^{(T)}(\theta;x)\to J_{\mathrm{\tailrl{}}}(\theta;x)$
as $T\to\infty$.

Equivalently, the finite-order gradient can be written directly in terms of tail probabilities:
\begin{equation}
\nabla_\theta J_{\mathrm{\tailrl{}}}^{(T)}(\theta;x)
=
\int_0^1
\frac{1-(1-p_\theta(x,\tau))^T}
     {p_\theta(x,\tau)}
\nabla_\theta p_\theta(x,\tau)
\,d\tau.
\label{eq:finite-tail-weight-gradient}
\end{equation}
The threshold weight equals $1$ at $T=1$ and approaches
$1/p_\theta(x,\tau)$ as $T\to\infty$.
Thus, increasing $T$ smoothly moves the objective from expected-reward RL toward population \tailrl{}, progressively emphasizing reward levels that are harder to reach.
\Cref{fig:weight-vs-survival} visualizes this interpolation and compares it with PKPO.

For binary rewards, the finite-order family and its estimator reduce exactly to their MaxRL counterparts; see \cref{app:tailrl-binary-recovery}.

\subsection{Finite-Rollout Estimator}
\label{sec:estimator-finite}
Critic-free methods express policy gradients as weighted combination of the score-function for different rollouts. In this section we seek to express the exact weights that allows us to give an unbiased estimate of the gradient of the order-$N$ truncated objective.

Suppose we draw $N$ independent rollouts
$z_1,\ldots,z_N\sim\pi_\theta(\cdot\mid x)$.
At each reward threshold, we divide one unit of credit equally among the sampled rollouts that exceed that threshold.
A rollout accumulates this credit over every threshold below its reward:
\begin{equation}
\omega(r(x,z_i))
:=
\int_0^{r(x,z_i)}
\frac{d\tau}
{\sum_{j=1}^{N}\ind_{\{r(x,z_j)>\tau\}}}.
\label{eq:tailrl-finite-weight}
\end{equation}
Thresholds cleared by fewer rollouts therefore contribute more weight.

The weights can be computed exactly after sorting the rewards.
Let
$r_{(1)}\leq\cdots\leq r_{(N)}$
denote the sorted rewards and set $r_{(0)}:=0$.
Then
\begin{equation}
\omega(r_{(i)})
=
\omega(r_{(i-1)})
+
\frac{r_{(i)}-r_{(i-1)}}{N-i+1},
\qquad
\omega(r_{(0)})=0.
\label{eq:tailrl-weight-recurrence}
\end{equation}

The resulting policy-gradient estimator has the standard score-function form:
\begin{equation}
\gtailrl(x)
:=
\sum_{i=1}^{N}
\omega(r(x,z_i))\,S(x,z_i).
\label{eq:tailrl-finite-estimator}
\end{equation}
Thus, \tailrl{} differs from a standard critic-free policy-gradient method only in how sampled rewards are converted into rollout weights.

\begin{theorem}[Unbiased finite-rollout estimator]
\label{thm:tailrl-finite-unbiasedness}
For $N$ independent rollouts,
\begin{equation}
\mathbb E[\gtailrl(x)]
=
\nabla_\theta
J_{\mathrm{\tailrl{}}}^{(N)}(\theta;x).
\label{eq:tailrl-finite-unbiasedness}
\end{equation}
\end{theorem}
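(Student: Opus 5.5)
We first rewrite the estimator as an integral over thresholds and then compare it, threshold by threshold, with the tail-probability form \cref{eq:finite-tail-weight-gradient} of $\nabla_\theta J^{(N)}_{\mathrm{\tailrl{}}}$. Write $K(\tau):=\sum_{j=1}^N \ind_{\{r(x,z_j)>\tau\}}$ for the number of rollouts clearing $\tau$. Exchanging the finite sum with the integral in \cref{eq:tailrl-finite-weight} gives
\[
\gtailrl(x)=\int_0^1 \sum_{i=1}^N \frac{\ind_{\{r(x,z_i)>\tau\}}}{K(\tau)}\,S(x,z_i)\,d\tau ,
\]
with the convention $0/0=0$ when $K(\tau)=0$. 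Because the reward lies in $[0,1]$ and the integrand is bounded by $\sum_i\|S(x,z_i)\|$, Fubini (with integrable score, as assumed in \cref{lem:tailrl-regularity}) lets us take the expectation inside the $\tau$-integral. It therefore suffices to show, for each fixed $\tau$,
\[
\mathbb E\Bigl[\sum_{i=1}^N \frac{\ind_{\{r_i>\tau\}}}{K(\tau)}S(x,z_i)\Bigr]
=\frac{1-(1-p)^N}{p}\,\nabla_\theta p ,
\qquad p:=p_\theta(x,\tau).
\]

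For the per-threshold identity, we use exchangeability to reduce the sum to $N$ times the $i=1$ term. Conditioning on $z_1$ with $r_1>\tau$, the count is $K=1+M$ where $M\sim\mathrm{Bin}(N-1,p)$ is independent of $z_1$. This gives
\[
N\,\mathbb E\bigl[\ind_{\{r_1>\tau\}}S(x,z_1)\bigr]\;\mathbb E\Bigl[\tfrac{1}{1+M}\Bigr].
\]
The first expectation equals $\nabla_\theta p$ by the score-function identity applied to the indicator. The classical binomial identity $\mathbb E[1/(1+M)]=\frac{1-(1-p)^N}{Np}$ follows from $\binom{N-1}{m}\frac{1}{m+1}=\frac1N\binom{N}{m+1}$. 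Multiplying these together produces exactly the threshold weight $\frac{1-(1-p)^N}{p}$. Integrating over $\tau$ and invoking \cref{eq:finite-tail-weight-gradient} completes the proof.

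The content lies almost entirely in the binomial step. The main points needing care are thresholds with $p=0$ and the justification of \cref{eq:finite-tail-weight-gradient} itself.

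When $p=0$, both sides vanish: the indicator is almost surely zero, and the weight $\frac{1-(1-p)^N}{p}$ tends to the finite value $N$ while $\nabla_\theta p=0$ wherever $p$ attains its minimum, so the right-hand side is interpreted as zero.

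If \cref{eq:finite-tail-weight-gradient} is not taken as given, we derive it from \cref{eq:tailrl-harmonic-gradient-main}. The maximum of $k$ rollouts exceeds $\tau$ with probability $1-(1-p)^k$, so
\[
\mathrm{Best\text{-}of\text{-}k}(\theta;x)=\int_0^1 \bigl[1-(1-p)^k\bigr]\,d\tau .
\]
Differentiating under the integral sign gives $\nabla_\theta\mathrm{Best\text{-}of\text{-}k}=\int_0^1 k(1-p)^{k-1}\nabla_\theta p\,d\tau$. Summing with weights $1/k$ yields $\int_0^1\sum_{k=1}^N(1-p)^{k-1}\nabla_\theta p\,d\tau$, and this geometric sum equals $\frac{1-(1-p)^N}{p}$, matching \cref{eq:finite-tail-weight-gradient}.
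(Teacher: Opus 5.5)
Your proof is correct. Its skeleton matches the paper's: the threshold-integral form of the estimator, the exchange of expectation and $\tau$-integral, a per-threshold identity, and a final match against the threshold-weight form of $\nabla_\theta J^{(N)}_{\mathrm{\tailrl{}}}$. The difference is in how the per-threshold identity is obtained. The paper conditions each term on the full clearance pattern $\{\ind_{\{r(x,z_j)>\tau\}}\}_{j=1}^N$ and pulls out the conditional mean score $\mathbb{E}[S(x,z)\mid r(x,z)>\tau]=\nabla_\theta p/p$. It then sums over $i$ first: the fractions $\ind_{\{r(x,z_i)>\tau\}}/\max(1,K(\tau))$ collapse to $\ind_{\{K(\tau)\ge 1\}}$, so $1-(1-p)^N$ appears as the probability that some rollout clears $\tau$, with no binomial algebra. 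You instead reduce to a single term by exchangeability and compute the reciprocal moment of a $\mathrm{Bin}(N-1,p)$ variable directly. These are two faces of one fact: by exchangeability, $Np\,\mathbb{E}[1/(1+M)]=N\,\mathbb{E}[\ind_{\{r(x,z_1)>\tau\}}/\max(1,K(\tau))]=\Pr(K(\tau)\ge 1)$, so the paper's collapse identity gives your binomial identity for free. The paper's route makes the ``MaxRL at every threshold'' reading explicit, since the integrand is the average score of the clearing rollouts. Yours is more elementary and slightly more economical in hypotheses. You divide by $p$ only inside the binomial identity and treat $p=0$ thresholds explicitly; the paper's conditional mean $\mathbb{E}[S\mid r>\tau]$ is undefined there, and it relies on \cref{as:tailrl-finite} to make such thresholds null. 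Your dominating function $\sum_i\|S(x,z_i)\|$ needs only an integrable score. That already follows from \cref{as:tailrl-smooth}, because $\mathbb{E}_{z\sim\pi_\theta(\cdot\mid x)}\|S(x,z)\|=\sum_z\|\nabla_\theta\pi_\theta(z\mid x)\|$. So for the order-$N$ statement you never need \cref{as:tailrl-finite,as:tailrl-bounded}, which matter only for the population objective.
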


Hence, the rollout budget determines which member of the \tailrl{} family is optimized:
one rollout recovers the expected-reward gradient, while larger rollout groups incorporate progressively higher Best-of-$k$ learning signals.
This differs from REINFORCE, where increasing the rollout count reduces estimation variance without changing the underlying expected-reward objective.

\paragraph{Centered advantages.}
In practice, we center the rollout weights within each group:
\begin{equation}
A_i
:=
\omega(r(x,z_i))-\bar{\omega},
\qquad
\bar{\omega}
:=
\frac{1}{N}\sum_{j=1}^{N}\omega(r(x,z_j)).
\label{eq:tailrl-centered-advantage}
\end{equation}
Centering reduces variance and allows \tailrl{} to be used as a drop-in replacement for the advantage calculation in standard policy-gradient implementations.
Because the baseline is estimated from the same rollout group, the centered estimator is unbiased for
$\nabla_\theta J_{\mathrm{\tailrl{}}}^{(N-1)}$
rather than
$\nabla_\theta J_{\mathrm{\tailrl{}}}^{(N)}$.
Complete derivations and proofs are given in \cref{app:tailrl-estimator-proofs}.

\section{Unifying Gradient Weight View}
\label{sec:weight-view}

\begin{wraptable}[12]{r}{0.5\textwidth}

\vspace{-10pt}
\caption{Scalar function $\phi$ and gradient weights $\phi'$. Standard RL gives equal weight to gradients while \tailrl{} weights the gradients inverse tail-probability.}
\label{tab:weight-view}
\centering
\small
\renewcommand{\arraystretch}{1.35}
\setlength{\tabcolsep}{6pt}
\begin{tabular}{lcc}
\toprule
Objective & $\phi(p)$ & $\phi'(p)$ \\
\midrule
$J_{\mathrm{RL}}$ & $p$ & $1$ \\[2pt]
$J_{\mathrm{\tailrl{}}}^{(T)}$ & $-\sum_{\ell\le T}(1-p)^{\ell}/\ell$ & $\tfrac{1-(1-p)^{T}}{p}$ \\[2pt]
$J_{\mathrm{\tailrl{}}}$  & $\log p$ & $1/p$ \\[2pt]
$J_{\mathrm{PKPO}}$ & $1-(1-p)^{k_{\mathrm{opt}}}$ & $k_{\mathrm{opt}}(1-p)^{k_{\mathrm{opt}}-1}$ \\
\bottomrule
\end{tabular}
\end{wraptable}

To compare population-level \tailrl{} with expected reward maximization, we look one step earlier. We express the gradients of  \tailrl{} and  expected reward maximization under a unified view and ask how each objective up-weights its gradients. \textbf{\tailrl{} and expected reward maximization belong to a common family of objectives}.

Consider a fixed input $x$ and a fixed parameters $\theta$ of our policy $\pi_{\theta}(\cdot \mid x)$.
Recall that we can express the tail-probability as,
\begin{equation}
p_\theta(x,\tau):=\Pr_{z\sim\pi_\theta(\cdot\mid x)}\!\left(r(x,z)>\tau\right)
\end{equation}

Its gradient $\nabla_\theta p_\theta(x,\tau)$ points in the direction that makes the tail-event more likely. 

Up to constants independent of $\theta$, they can be written in the common form they can be expressed as,
\begin{equation}
J_\phi(\theta;x):=\int_0^1\phi\!\left(p_\theta(x,\tau)\right)d\tau,
\label{eq:level-utility-view}
\end{equation}
Here, $\phi(p)$ specifies how the objective values a tail-probability $p_{\theta}(x,\tau)$.
Differentiating w.r.t $\theta$ gives
\begin{equation}
\nabla_\theta J_\phi(\theta;x)=\int_0^1\phi'\!\left(p_\theta(x,\tau)\right)\nabla_\theta p_\theta(x,\tau)\,d\tau.
\label{eq:weight-view}
\end{equation}

\begin{wrapfigure}[19]{r}{0.45\textwidth}
\centering
\vspace{-10pt}
\includegraphics[width=\linewidth]{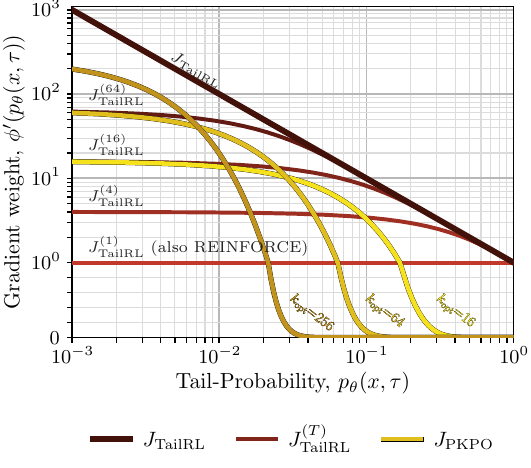}
\caption{Gradient weight assigned to tail-probability $p_{\theta}(x,\tau)$.
Order-$T$ \tailrl{} approaches the population weight.
PKPO's weight is capped at $k_{\mathrm{opt}}$ and vanishes as $p_{\theta}(x,\tau)$ grows.}
\label{fig:weight-vs-survival}
\vspace{0pt}
\end{wrapfigure}

The derivative $\phi'$ is therefore the gradient weight of the derivative of a tail-probability $p_{\theta}(x,\tau)$, and it does not introduce a new objective.
It is the marginal value that the objective assigns to making that level more likely.
The three objectives differ only in this weight (\cref{fig:weight-vs-survival,tab:weight-view}).Expected reward maximization uses $\phi(p)=p$, so $\phi'(p)=1$: every tail-probability receives the same weight, regardless of how often the policy reaches it.
\tailrl{} uses $\phi(p)=\log p$, so $\phi'(p)=1/p$.
Because $p_\theta(x,\tau)$ is non-increasing in $\tau$, this places greater weight on higher reward thresholds that the policy reaches rarely. The gradient of order-$T$ member of the truncated family is,
\begin{equation}
\nabla_\theta J_{\mathrm{\tailrl{}}}^{(T)}(\theta;x)
=
\int_0^1
\frac{1-\left(1-p_\theta(x,\tau)\right)^T}
{p_\theta(x,\tau)}
\nabla_\theta p_\theta(x,\tau)\,d\tau.
\label{eq:truncated-weight-function}
\end{equation}
It up-weights the gradient of the tail probabilities as
$\phi'(p)=\frac{1-\left(1-p\right)^T}
{p}$, i.e. the multiplier in \cref{eq:truncated-weight-function}.
At $T=1$, this multiplier equals $1$, recovering the expected reward gradient.
As $T\to\infty$, it approaches the tail-likelihood weight $1/p$. Expected reward maximization gives equal weight to the gradient of all tail-probabilities.

\section{Experiments}
\label{sec:experiments}

We organize the experiments around four questions implied by the theory. 
\begin{enumerate}
    \item First, is the population-level \tailrl{} objective a useful learning target, and does its finite-rollout estimator approach it as number of rollouts increases? (\cref{sec:tailrl-finite-order})
    \item Second, does \tailrl{} help specifically when high-reward rollouts are attainable but rare? (\cref{sec:tailrl-objective})
    \item  Third, does its alignment with Best-of-$k$ yield stronger inference-time scaling? (\cref{sec:tailrl-best-of-k-expansion})
    \item Finally, can \tailrl{} prevent a common moderate-reward behavior from displacing rarer, better outcomes? (\cref{sec:tailrl-best-of-k-expansion} and \cref{sec:tailrl-objective})
\end{enumerate}

To answer these questions, we devise 4 experimental settings. 
We test the population level objective and its finite-rollouts approximation on a localization task in the \textbf{ImageNet} dataset (\cref{sec:exp-imagenet}) . To understand the behavior of \tailrl{} under rare, high reward rollouts, we consider a \textbf{Text-Maze} setting (\cref{sec:exp-maze}). \textbf{GUI grounding} allows us to study the scaling behavior of models trained with \tailrl{} as more inference compute in poured into the problem (\cref{sec:exp-gui}) and finally, \textbf{Code Optimization} tests resistance to a safe but suboptimal shortcut (\cref{sec:exp-pie}).
Together, they test whether \tailrl{} works for the reasons predicted by the theory, from the population objective to the behavior of the learned policy.

We compare \tailrl{} with GRPO \citep{shao2024deepseekmath}, RLOO \citep{ahmadian2024back}, two popular group-based policy optimization algorithms, and PKPO \cite{walder2025pkpo} that maximizes Best-of-$k_{\mathrm{opt}}$ for an inference budget $k_{\mathrm{opt}}$ under pure on-policy policy gradient setup to avoid confounders.

\subsection{ImageNet Object Localization}
\label{sec:exp-imagenet}

\begin{figure}[t]
\centering
\includegraphics[width=\textwidth]{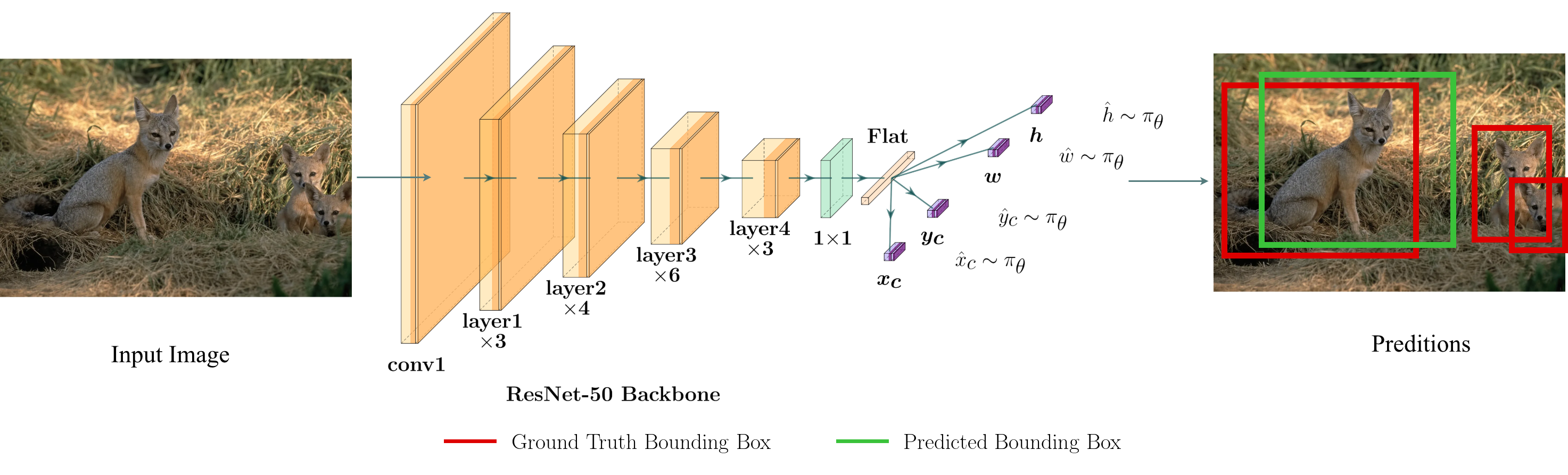}
\caption{\textbf{(ImageNet object localization)} An overview of the ImageNet object localization task.
A ResNet-50 backbone \citep{he2015deepresiduallearningimage} encodes the image, and four categorical heads parameterize the policy. A rollout samples the box center coordinates $\hat{x}_c, \hat{y}_c$, width $\hat{w}$, and height $\hat{h}$.
The sampled box is rewarded by its IoU with the matched ground-truth box.}
\label{fig:imagenet-overall}
\end{figure}

ImageNet object localization requires predicting a bounding box around an object of interest without classifying the object \citep{russakovsky2015imagenetlargescalevisual}.
This setting addresses three questions.
First, how does reinforcement learning from a scalar reward compare with supervised objectives that directly observe the ground-truth bounding box? Second, does the finite rollout \tailrl{} gradient estimator $\gtailrl$ approach the population-level gradient $\nabla_\theta J_{\mathrm{\tailrl{}}}$ as $N$ increases? Third, how does \tailrl{} compare with the expected reward baselines at matched and smaller training rollout budgets?

\paragraph{Task and setup}
For an input image $x$, a rollout $z$ is a bounding box drawn from the categorical policy of \cref{fig:imagenet-overall}, and the continuous reward $r(x,z)\in[0,1]$ is its intersection-over-union (IoU) with the ground-truth box.
We report CorLoc@$\delta$, the fraction of input images for which the greedy prediction has IoU greater than $\delta$ \citep{deselaers2010localizing}; mean IoU; and Best-of-$k$ IoU, the largest IoU among $k$ inference rollouts.

Its worthwhile to note that since the policy induces a categorical distribution over the finite set of possible bounding boxes, we can evaluate the probability and IoU reward of every box.
This gives us the exact reward distribution in closed form. Using this we can directly compute and optimize the population-level objective $J_{\mathrm{\tailrl{}}}$.
We also train supervised baselines that directly regress the ground-truth coordinates.
Training and evaluation details are provided in \cref{app:imagenet}.

\begin{figure}[b]
\centering
\includegraphics[width=\textwidth]{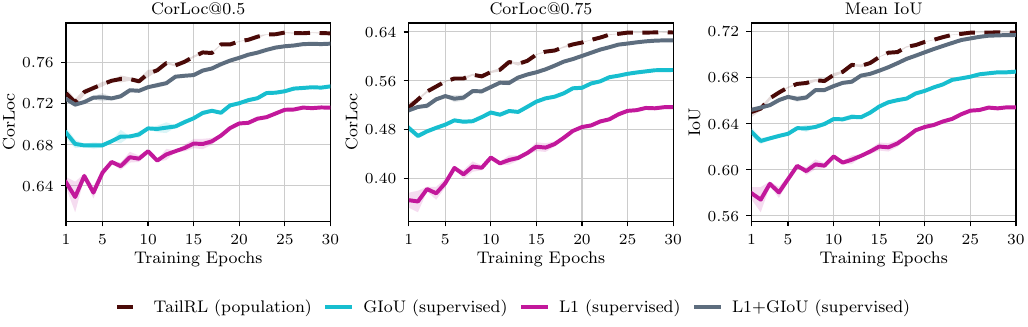}
\caption{\textbf{(ImageNet object localization)} A comparison of population-level \tailrl{} versus direct supervision for the task of ImageNet Object Localization. \tailrl{} either outperforms or is competitive against task-specific objectives.}
\label{fig:imagenet-sup-curves}

\end{figure}

\paragraph{Comparison with direct supervision}
Using only scalar IoU rewards, population-level \tailrl{} matches or exceeds objectives that directly supervise the ground-truth coordinates (\cref{fig:imagenet-sup-curves}). 
Among the supervised objectives, the combined L1+GIoU objective \citep{rezatofighi2019giou} used by DETR \citep{carion2020detr} is the strongest.
\tailrl{} achieves higher CorLoc@$0.5$ and CorLoc@$0.75$ than L1+GIoU while obtaining comparable mean IoU.
It also outperforms the individual L1 and GIoU baselines across all reported metrics throughout training.
These results show that under large training compute, optimizing a scalar continuous reward can compete with task-specific supervised objectives.

\begin{AIbox}{{Takeaway 1}}
When exact tail-probabilities are estimable, the population-level \tailrl{} 
performs as well as or better than task-specific supervised objectives that recieve ground truth information.
\end{AIbox}

\begin{wrapfigure}[17]{r}{0.5\textwidth}
\centering
\vspace{-0pt}
\includegraphics[width=\linewidth]{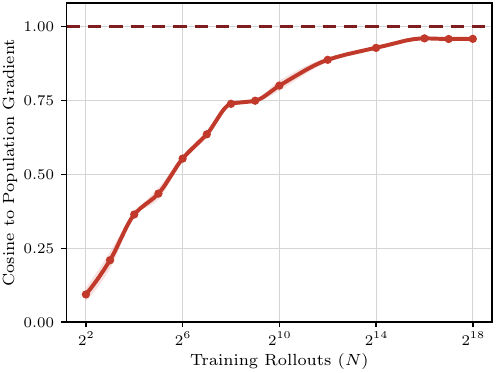}
\caption{\textbf{(ImageNet object localization)} Finite rollout \tailrl{} gradients converge toward the population-level gradient as training rollouts increase.}
\label{fig:imagenet-gradients}
% \vspace{-2pt}
\end{wrapfigure}

\paragraph{Increasing training rollout budget}
Increasing training rollouts, $N$ moves the finite rollout \tailrl{} curves toward the exact population-level objective (\cref{fig:imagenet-rl-curves}).
At $N=1024$, \tailrl{} closely tracks the population-level objective, while smaller rollout budgets remain progressively farther away.
This ordering holds across CorLoc@$0.5$, mean IoU, and Best-of-$1024$ IoU and other performance measures that we test for this experiment. We test this convergence directly at the gradient level in \cref{fig:imagenet-gradients}.
As $N$ increases, the cosine similarity between the sampled finite rollout gradient and the population-level gradient rises steadily toward exact agreement.
This increasing alignment is consistent with the convergence predicted in \cref{sec:tailrl-finite-order}. Increasing rollouts not only reduce the variance of the objective we are trying to estimate, but it also approximates a higher order truncated objective.

\begin{AIbox}{{Takeaway 2}}
As training rollouts increase, the gradient of order-$N$ truncated objective computed from finite rollout \tailrl{} converges to the gradient of population-level \tailrl{}.
\end{AIbox}

\paragraph{Comparison with RL baselines}
At a matched budget of $N=1024$, \tailrl{} outperforms GRPO and RLOO across all three metrics, despite every method receiving the same continuous IoU feedback.
The gap does not close at smaller budgets: even at $N=16$, \tailrl{} exceeds both baselines trained at $N=1024$, using $\tfrac{1}{64}$ as many training rollouts per input. \tailrl{} also pareto dominates PKPO which maximizes expected maximum reward. With just $N=16$ training rollouts, it outperforms at CorLoc@$0.5$ and mean IoU. At matched training compute, \tailrl{} matches PKPO at Best-of-$1024$ reward, while significantly outperforming at mean IoU and CorLoc@$0.5$.

\begin{figure}[t]
\centering
\includegraphics[width=\textwidth]{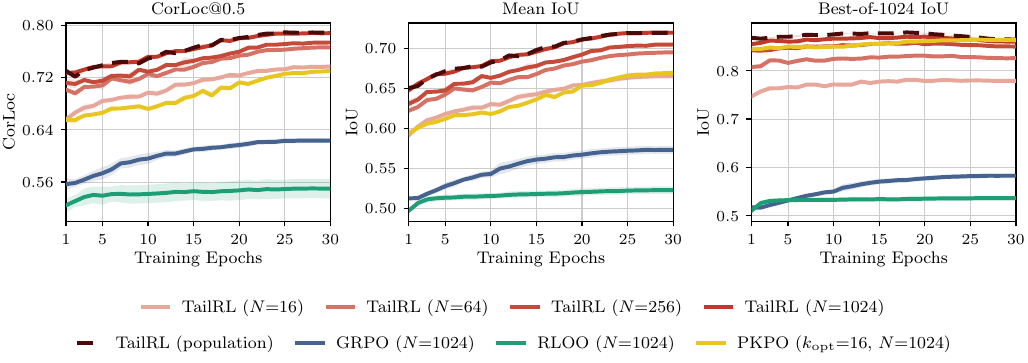}
\caption{\textbf{(ImageNet object localization)} Comparing performance of expected-reward maximization baselines with \tailrl{} on  held-out validation set. We report CorLoc@$0.5$, mean IoU, and Best-of-$1024$ IoU.
\tailrl{} uses training rollouts $N\in\{16,64,256,1024\}$, with darker curves indicating larger $N$; the dashed curve optimizes the exactly computed \tailrl{} population-level objective.
GRPO and RLOO use $N=1024$ (\cref{app:imagenet}).}
\label{fig:imagenet-rl-curves}
\end{figure}

\begin{figure}[t]
\centering
\includegraphics[width=0.9\textwidth]{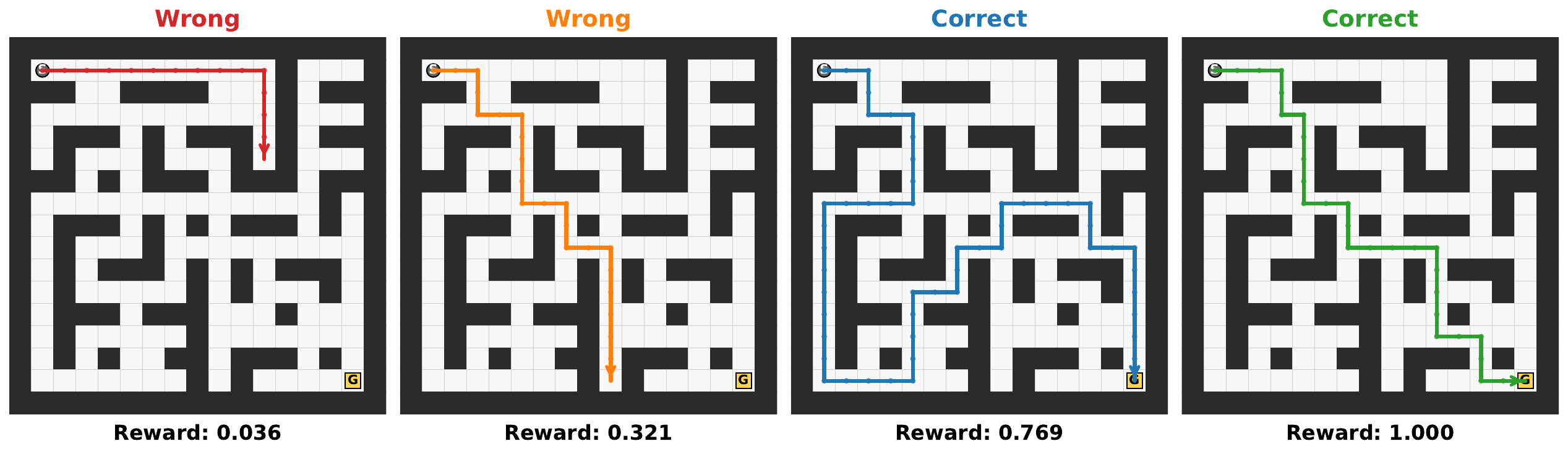}
\caption{\textbf{(Text-Maze)} Four rollouts on one Text-Maze and the continuous reward each receives.
The two left paths fail to reach the goal and still receive continuous credit for progress.
The third reaches the goal but wanders, so it gets rewarded below a shortest path.
Only the rightmost, a shortest path (right), earns reward $1$.}
\label{fig:app-maze-examples}
\end{figure}

\subsection{Text-Maze Navigation from Low-Success Initial Policies}
\label{sec:exp-maze}

Next we examine how \tailrl{} and  other baselines compare when we vary the quality of the initialization policy.
By varying the amount of supervised pretraining before doing RL post-training, we obtain a controlled range of initial policies with varying coverage over high reward attaining rollouts.
We ask whether \tailrl{} can learn from poor initialization of policies and how it compares against  expected reward maximization baselines.

\paragraph{Task and setup}

For an input $17\times17$ maze represented as text, a rollout is a token sequence describing a path.
The continuous reward $r(x,z)\in[0,1]$ measures proximity to the goal and path length relative to the shortest path (\cref{fig:app-maze-examples}).
A rollout receives reward $1$ only when it reaches the goal along a shortest path, an event we call \emph{shortest-path success}.
Unsuccessful rollouts receive partial credit for ending closer to the goal, while successful rollouts receive more reward for shorter paths.
We evaluate the post-trained policies on a held-out validation set of $1024$ mazes.

By varying the amount of supervised pretraining on goal-reaching trajectories, we obtain initial policies with shortest-path success rates ranging from approximately $1\%$ down to $0.01\%$.
Starting from each checkpoint, we separately post-train policies with \tailrl{}, GRPO, and PKPO ($k_{\mathrm{opt}}$).
All methods use the same initial policy and $N=16$ unless stated otherwise.
Model, reward, training, and evaluation details are provided in \cref{app:maze}.

\begin{wrapfigure}[21]{r}{0.5\textwidth}

\centering
\vspace{-10pt}
\includegraphics[width=\linewidth]{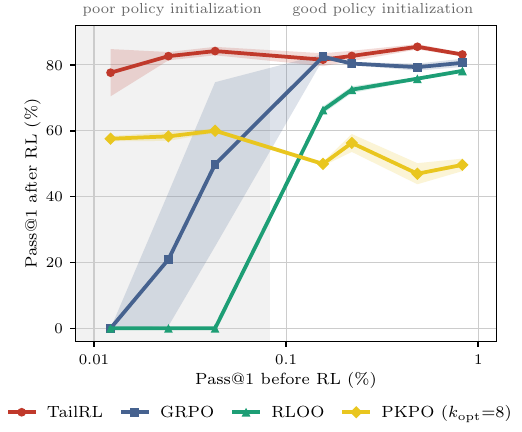}
\caption{\textbf{(Text-Maze)} Pass@$1$ before and after RL post-training for the task of Text-Maze navigation.
Shaded area marks the regime where initialization policy is poor. After RL post-training,  RLOO and GRPO fail to reliably improve Pass@$1$ in this regime.}

\label{fig:maze-init-sweep}
\vspace{-14pt}
\end{wrapfigure}

\paragraph{Learning from low-success initial policies}
\Cref{fig:maze-init-sweep} shows that RLOO, GRPO and \tailrl{} behave similarly when the initial policy produces shortest-path successes frequently, but diverge as the initial success rate falls. Despite higher coverage of initial policy,  PKPO underperforms \tailrl{} at Pass@$1$. The expected reward baselines fail to learn reliably in the low initial success regime. PKPO consistently improves Pass@$1$ across the spectrum of initialization policy yet still under-performs \tailrl{} at Pass@$1$ even at the regime of poor policy initalization. 

Once the initial success rate rises beyond this regime, \tailrl{}, RLOO and GRPO learn well and their differences narrow. GRPO's performance improves before RLOO, although the performance becomes seed dependent before it consistently starts to navigate the maze.
The advantage of \tailrl{} is therefore concentrated where high-reward rollouts are attainable but rare. Inference and training-rollout budget sweeps are reported in \cref{app:maze-additional}.

\begin{AIbox}{{Takeaway 3}}
\tailrl{} upweighs gradients for rare exceptional rollouts. This results in \tailrl{} outperforming other methods when initial policy has poor coverage over rare excellent rollouts.
\end{AIbox}

\subsection{GUI Grounding with Vision-Language Models}
\label{sec:exp-gui}

We next demonstrate the efficacy of \tailrl{} on Vision Language Models (VLM). We consider a visual grounding task with verifiable rewards. Through this task, we show that models trained with \tailrl{} benefit from inference-time sampling.

\textbf{Task and setup:}
Given a screenshot and a natural-language instruction to perform a click, the VLM policy generates the coordinates of the click location.
The continuous reward $r(x,z)\in[0,2.5]$ combines proximity to the target, a bonus for clicking inside the target element, and a format bonus \citep{yuan2025segui}.
We fine-tune Qwen2.5-VL-3B and Qwen2.5-VL-7B \citep{bai2025qwen25vl} on the GTA1 grounding corpus \citep{yang2025gta1}.
The training configuration is identical across \tailrl{}, GRPO, and RLOO except for the advantage estimator.
We evaluate the resulting policies on ScreenSpot-Pro \citep{li2025screenspotpro}.
Here, Pass@$k$ is the probability that at least one of $k$ inference rollouts clicks inside the target element \citep{chen2021codex}, while Best-of-$k$ reward is the largest continuous reward among those rollouts.
The reward construction, training protocol and evaluation procedure are detailed in \cref{app:gui}.

\textbf{Results:}
At both model scales, \tailrl{} and RLOO achieve similar Pass@$1$, but they substantially differ in how the learned policies respond to additional inference sampling (\cref{fig:gui-scales}).
As inference rollouts $k$ increases, \tailrl{}'s Pass@$k$ continues to rise, whereas RLOO plateaus considerably earlier; GRPO's Pass@$k$ is inferior to \tailrl{} and RLOO.
\tailrl{} leaves more inputs with a non-negligible probability of producing a successful click, so additional samples continue to reveal useful candidates.

We define the \emph{matching budget} as the smallest evaluated value of $k$ at which \tailrl{} reaches or exceeds a baseline's mean Pass@$1024$. On ScreenSpot-Pro, at 3B, \tailrl{} reaches RLOO's Pass@$1024$ with $8$ rather than $1024$ inference rollouts, a $128$-fold reduction; at 7B it does so with $4$ rollouts, a $256$-fold reduction. A single \tailrl{} rollout exceeds GRPO's mean Pass@$1024$ at both model scales.

\begin{AIbox}{{Takeaway 4}}

On GUI grounding, \tailrl{} shows superior scaling of inference compute when compared against expected reward maximization baselines.
\end{AIbox}

\subsection{Code Runtime Optimization}
\label{sec:exp-pie}

%: \tailrl{} trained VLMs are better at test-time scaling.

\begin{figure}[t]
\centering
\includegraphics[width=\textwidth]{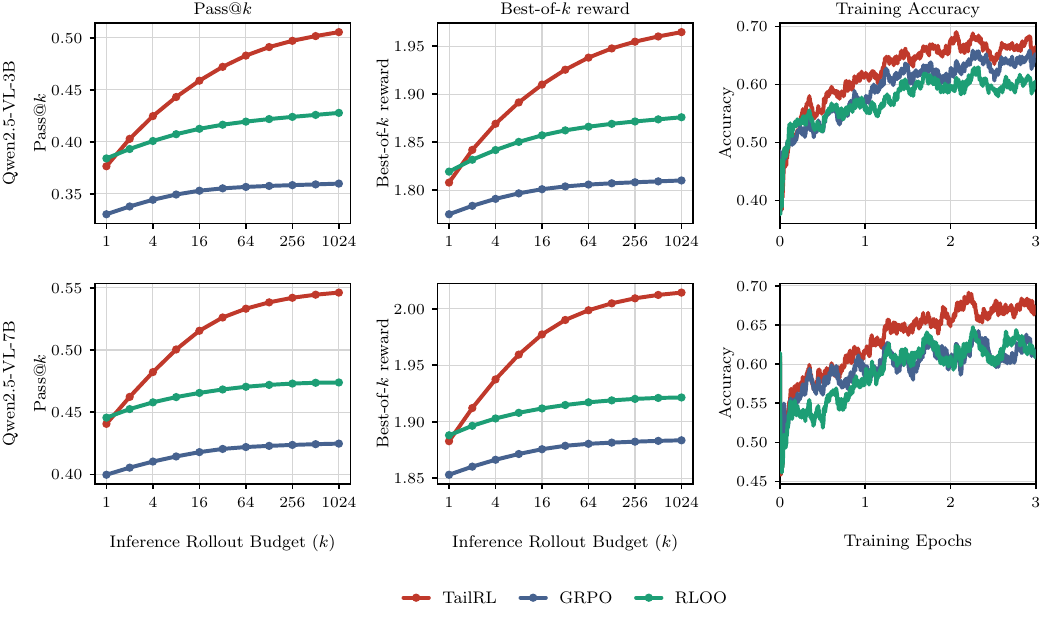}
\caption{\textbf{(GUI-grounding)} Evaluation results for GUI grounding task on ScreenSpot-Pro benchmark for Qwen2.5-VL-3B (top) and 7B (bottom).
Columns show Pass@$k$, Best-of-$k$ reward, and exponentially smoothed training-batch accuracy.
\tailrl{}  matches RLOO's Pass@$1024$ using $8$ rollouts at 3B and $4$ rollouts at 7B scale, a reduction of $128\times$ and $256\times$ in test-time compute respectively.}
\label{fig:gui-scales}
\end{figure}

We now ask a qualitatively different question: what happens when training is attracted to a safe but systematically suboptimal behavior hurting further exploration?
Such behaviors provide a reliable moderate reward and can become stable solutions, even when rarer and riskier behaviors offer substantially better outcomes \citep{skalse2022reward,baronio2025kevin}.
This creates a stress test for \tailrl{}: whether it concentrates the policy on the dependable yet suboptimal shortcuts or improves the policy's coverage over high-reward tail.

\begin{wrapfigure}{r}{0.5\textwidth}
\centering
\vspace{-10pt}
\includegraphics[width=\linewidth]{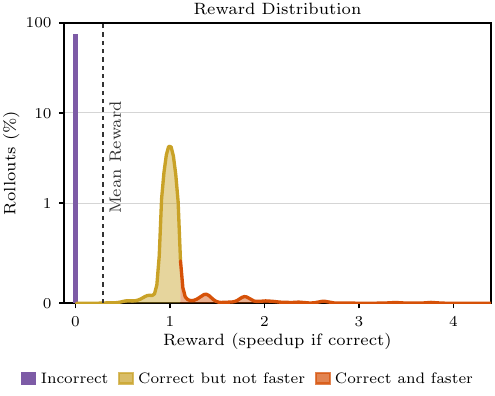}
\caption{\textbf{(Code runtime optimization)} Initial reward distribution over all test problems of PIE dataset for Qwen3-1.7B.
$74.4\%$ of rollouts are incorrect, $23.5\%$ are correct but not faster than the input, concentrating in a spike at $1\times$, and $2.1\%$ are correct and faster.}
\label{fig:pie-rollout-dist}
\vspace{-8pt}
\end{wrapfigure}

\paragraph{Task and setup}
Each input is a slow C++ program from the PIE \citep{pie_iclr_2024_spotlight} corpus of competitive-programming. The LLM is prompted to write a faster yet correct version of the input program. A rollout is a rewritten program and is compiled and executed against the problem's test suite.
Incorrect outputs receives zero reward otherwise receives a reward equal to the speedup over the input program.
We measure speedup using gem5 time so that timing noise cannot create spurious improvements \citep{binkert2011gem5,lowepower2020gem5}.
We post-train Qwen3-1.7B \citep{yang2025qwen3technicalreport} with \tailrl{}, GRPO, and RLOO using $N=16$ rollouts per program.

We report mean reward over all rollouts; density of Best-of-$k$ rewards at $k=1024$ ; and the fraction of rollouts that pass every test; and Best-of-$k$ reward density at $k=1024$.
A policy that always reproduces its input obtains a mean reward of $1$ and passes all test cases. Full task, training, and evaluation details are provided in \cref{app:pie}.

% TODO: Add standard-error bands over the three seeds.

\begin{figure}[b]
\centering
\includegraphics[width=\textwidth]{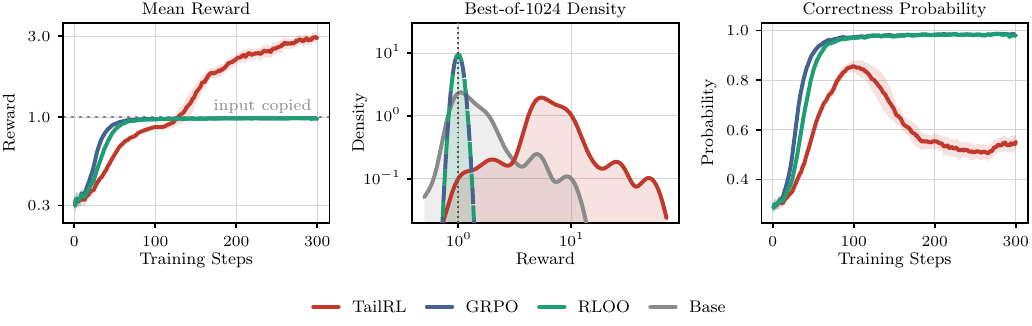}
\caption{\textbf{(Code runtime optimization)}
(Left) Average reward during training for 1 epoch, EMA over three runs per objective.
(Middle) kernel density of the per-problem Best-of-$1024$ reward on the test-set problems of PIE. GRPO and RLOO are drawn with alternating dashes because their densities coincide at a reward of 1.0 indicating that they only echo the input program. Right: fraction of training  rollouts for a given batch passing every test. In this task, it is better to explore and find the faster rewrite of the input program than to copy the input and be correct. Correctness only gets a reward of $1.0$ while correct and meaningfully faster programs receive much higher rewards. Just copying the input is a degenerate soltuion exhibited by RLOO and GRPO \cref{app:pie-samples} 
 All experiments have been performed on 3 seeds per method.}
\label{fig:pie-results}
\end{figure}

\paragraph{Results}
GRPO and RLOO rapidly converge toward the copying shortcut (\cref{fig:pie-results}) as evidenced by sample rollouts  (\cref{app:pie-samples}).
Their correctness rise above $98\%$, while their mean rewards settle just below $1.0$. Their policy entropy collapses by one to two orders of magnitude over the same interval (\cref{fig:pie-entropy}).
Expected-reward training therefore converges to the most reliable mode of the reward distribution while eliminating the behavior to maximize coverage over excellent outputs.

\tailrl{} follows a different trajectory.
Rather than collapsing onto the guaranteed reward from copying the input, it maintains substantially higher entropy and continues producing risky rewrites.
This lowers single-rollout correctness, but places more probability on programs that are both correct and meaningfully faster, raising mean reward to $2.92$, nearly three times the copying value.
All methods use the same one-epoch training budget, and \tailrl{} is still improving at step $300$; these are therefore matched-compute results rather than converged endpoints.
On the held-out test set the trained policies separate sharply: \tailrl{}'s mean Best-of-$1024$ speedup is $7.7\times$ against $0.98\times$ for GRPO and $0.96\times$ for RLOO, whose best rollouts never beat the input they reproduce (\cref{fig:pie-results}, middle; \cref{fig:pie-bok}).

\begin{AIbox}{{Takeaway 5}}
When an easy suboptimal solution exists, expected reward maximization methods settle for it.
\tailrl{} keeps searching and learns to produce rarer and better outcomes.
\end{AIbox}

\section{Related Work}
\label{sec:related}

\paragraph{Objectives beyond expected reward}
Reinforcement-learning post-training for reasoning and agentic models still largely maximizes expected reward \citep{ouyang2022traininglanguagemodelsfollow,guo2025deepseek}.
GRPO \citep{shao2024deepseekmath} and RLOO \citep{ahmadian2024back} are practical, critic-free variants of the score-function policy gradient \citep{williams1992}.
RLOO changes the baseline, while GRPO applies a common normalization to the update for each input; neither changes the relative weighting of reward levels within that input (\cref{sec:weight-view}).
\tailrl{} changes the objective instead.
Closest to our work, MaxRL maximizes the log-probability of success for binary rewards and decomposes its gradient into a harmonic sum of Pass@$k$ gradients \citep{tajwar2026maximumlikelihoodreinforcementlearning}.
PKPO derives unbiased estimators for a chosen Pass@$k$ objective and its continuous counterpart, Best-of-$k$ \citep{walder2025pkpo}; related methods directly optimize a selected inference-time metric \citep{tang2025passk,chen2025passktrainingadaptivelybalancing} or study objectives defined by monotone transforms of success probability \citep{davis2025objectivereasoningreinforcementlearning}.
These approaches select a threshold, transform, or inference budget.
\tailrl{} instead integrates log tail-probability over all reward thresholds, yielding harmonic combinations of Best-of-$k$ objectives and recovering MaxRL exactly for binary rewards.

\paragraph{Coverage and tail-sensitive reinforcement learning}
Expected-reward training can narrow the policy distribution and remove rare, high-reward behavior, an effect studied mechanistically \citep{cui2025entropy} and observed at large sampling budgets \citep{yue2025doesreinforcementlearningreally,kirk2024understandingeffectsrlhfllm}.
Rather than adding a separate diversity regularizer, \tailrl{} addresses this through the objective itself by weighting each reward level inversely by how often the policy reaches it.
\tailrl{} is also distinct from distributional and risk-sensitive reinforcement learning, which learn a return distribution through a critic or optimize a tail statistic at a chosen risk level \citep{bellemare2017distributional,rockafellar2000optimization}.
\tailrl{} uses no learned critic: its policy-gradient weights are computed directly from each rollout group, and it integrates over all reward thresholds rather than fixing one risk level, which can otherwise overlook rare successes \citep{greenberg2022cesor}.
We discuss further connections to exploration, sample allocation, threshold decompositions of continuous rewards, and the evaluated domains in \cref{app:related-extended}.

\section{Conclusion}
\label{sec:conclusion}

We introduced \tailrl{}, a likelihood objective for continuous rewards that maximizes the expected log-probability of exceeding a uniformly sampled reward threshold.
It recovers MaxRL exactly for binary rewards, decomposes harmonically over Best-of-$k$ gradients, and admits a simple, critic-free finite-rollout estimator.
Across four settings, its gains were largest when high-reward rollouts were rare or when training was drawn toward a common but suboptimal behavior.
In these regimes, \tailrl{} learned more reliably from rare outcomes, benefited more from additional training and inference samples, and avoided suboptimal collapse.
More broadly, a continuous reward is more than a scalar to average: it defines a family of success events, one at every reward level.
\tailrl{} provides a practical way to optimize their likelihoods.

\section*{Acknowledgements}

This research used the DeltaAI advanced computing and data resource \citep{deltaai}, which is supported by the National Science Foundation under award OAC-2320345 and by the State of Illinois.
 DeltaAI is a joint effort of the University of Illinois Urbana-Champaign and its National Center for Supercomputing Applications.
These resources were used through the Advanced Cyberinfrastructure Coordination Ecosystem: Services \& Support (ACCESS) program \citep{access_compute}.
Overall, this project used ACCESS allocations CIS250426, CIS260353, CIS260522, CIS260557, CIS260677, CIS260678, and CIS260679.
We are especially grateful to Brett Bode of the NCSA Delta Support team, whose assistance in effectively using the Delta cluster was critical to completing this work on schedule. We would like to extend our sincere gratitude to Sooth Labs for their generous support by granting us their compute resources.
This work was partially supported by the National Science Foundation under Grants CCF-2106778. Fahim and Shrinivas were funded in part by Stack AV and Skylark Labs

We are grateful to Sumukh Aithal, Ben Freed, Surgan Jandial, Sreyas Venkatraman, Kartik Sharma, Elton Lobo, Parv Maheshwari, Rhea Basappa, Srinath Ravi, Rishubh Parihar, Harsh Rangwani, Chaitanya Chawla, and Mayank Mishra for carefully reviewing earlier drafts and providing valuable feedback.
We also thank Rohit Sonkar, Anoushka Alavilli, Jiayu Chen, and Mineui Hong from the CMU Auton Lab, and Lawrence Jang from Russ lab for their helpful feedback that improved the quality of the draft. The authors would also like to extend their gratitude to Prof. Yaser Sheikh and Prof. Yonatan Bisk for helpful discussions and suggestions throughout this work.

\iclr{\bibliographystyle{icml2026}}
\bibliography{refs}

%% ================================ APPENDIX ==================================
\newpage
\appendix
\etocdepthtag.toc{appendix}
\newpage
\phantomsection
\section*{Appendix Contents}
\begingroup
\small
\etocsettagdepth{maintext}{none}
\etocsettagdepth{appendix}{subsubsection}
\etocsettocstyle{}{}
\tableofcontents
\endgroup
\newpage

\crefalias{section}{appendix}
\crefalias{subsection}{appendix}
\crefalias{subsubsection}{appendix}

\onecolumn

% ============================= appendix.tex =============================
%% ============================================================================
%%  APPENDIX -- notation table (the paper's single source of truth for
%%  symbols; sections adhere to it) + per-experiment hyperparameter tables.
%% ============================================================================

\section{Notation}
\label{app:notation}

\Cref{tab:notation} collects every symbol used in the paper. One symbol,
one meaning is strictly enforced.

\begin{table}[h]
  \caption{Notation.}
  \label{tab:notation}
  \centering
  \begin{tcolorbox}[enhanced, hbox,
    colback=teal!5, colframe=teal, coltext=black, coltitle=white,
    fonttitle=\bfseries, arc=1mm, boxrule=1pt, boxsep=1pt,
    left=2pt, right=2pt, top=0pt, bottom=2pt,
    toptitle=3pt, bottomtitle=3pt, center]
    \small
    \renewcommand{\arraystretch}{1.2}
    \setlength{\tabcolsep}{8pt}
\begin{tabular}{l|l}
  \multicolumn{1}{l|}{\textbf{Symbol}} & \multicolumn{1}{l}{\textbf{Meaning}} \\
  \hline
  $x$ & an input prompt \\
  $\theta$ & policy parameter \\
  $\rho$ & the input distribution \\
  $\pi_\theta(\cdot \mid x)$ & the policy with parameters $\theta$ \\
  $z \sim \pi_\theta(\cdot \mid x)$ & a rollout; $z_{1:k}$ are i.i.d.\ rollouts \\
  $r(x, z)$ & the reward of rollout $z$\\
  $S(x, z)$ & the score-function $\nabla_\theta \log \pi_\theta(z \mid x)$ \\
  \hline
  $\tau \in [0, 1)$ & a threshold on the reward range, as in \cref{sec:intro} \\

  $p_\theta(x, \tau)$ & tail-probability $\Pr_{z \sim \pi_\theta(\cdot \mid x)}\bigl[r(x,z) > \tau\bigr]$ \\
  $q_\theta(x)$ & the success probability when the reward is binary \\
  \hline
  $J_{\mathrm{RL}}(\theta; x)$ & the  expected reward on $x$ \\
  $J_{\mathrm{\tailrl{}}}(\theta; x)$ & the tail-likelihood $\int_0^1 \log p_\theta(x, \tau)\, d\tau$ \\
  $J_{\mathrm{\tailrl{}}}^{(T)}(\theta; x)$ & its order-$T$ truncation (compute-indexed family) \\
  $T$ & the truncation order; $T \to \infty$ recovers $J_{\mathrm{\tailrl{}}}$ \\
  \hline
  $N$ & rollouts per input in a training group \\
  $M$ & number of binary reward training rollouts that succeed \\
  $\gtailrl$ & the $N$-rollout estimator, no baseline \\
  $\gtailrlhat$ & the same estimator with the mean baseline \\
  $A_i$ & the advantage assigned to rollout $z_i$ \\

  $\omega(r)$ & un-centered gradient weight for rollout w/ reaward $r$  \\
  $\omega_\theta^{(N)}(r)$ & un-centered gradient weight for rollout under finite compute  \\
  $\sigma_\theta(x)$ & variance of rewards for an input $x$ \\
  $\phi(p)$ & scalar function; $\phi'(p)$ is the gradient weight (\cref{sec:weight-view}) \\
  \hline
  $k$ & the inference rollout budget at evaluation \\
  $k_{\mathrm{opt}}$ & the Pass@$k$ order that the PKPO objective optimizes \\
  Pass@$k$ & probability that at least one of $k$ samples of $z$ succeeds \\
  Best-of-$k$ & the largest attained reward among $k$ samples of $z$ \\
\end{tabular}
\end{tcolorbox}
\end{table}

\section{Extended Related Work}
\label{app:related-extended}
This section gives the full related-work discussion summarized in \cref{sec:related}, together with several literatures not covered there: inference-time selection as its own topic, exploration and sample-allocation methods, threshold decompositions of continuous rewards, and the literatures behind each evaluated domain.

\paragraph{Reinforcement learning post-training}
Reinforcement learning now drives post-training for reasoning and agentic models, and almost all of it maximizes  expected reward.
The recipe descends from learning against human preferences \citep{stiennon2020summarize,ouyang2022traininglanguagemodelsfollow} and, once verifiable rewards replaced learned reward models, produced the current generation of reasoning systems \citep{guo2025deepseek,jaech2024openai,kimiteam2025kimik15scalingreinforcement,lambert2025tulu3pushingfrontiers}.
Practical estimators have converged on critic-free group-relative updates: GRPO standardizes rewards within a group \citep{shao2024deepseekmath}, RLOO subtracts a leave-one-out baseline \citep{ahmadian2024back,kool2019buy}, and later variants adjust clipping, normalization, and the level at which the ratio is formed \citep{yu2025dapo,zheng2025groupsequencepolicyoptimization,liu2025understandingr1zeroliketrainingcritical}.
Every one of these estimators traces back to the score-function policy gradient \citep{williams1992,sutton1999policy,schulman2017proximalpolicyoptimizationalgorithms} and differs from the others in variance rather than in target.
They share one population objective, the mean of the reward distribution, and in the weight view of \cref{sec:weight-view} they share the flat gradient weight $\phi'(p)=1$.
\tailrl{} changes that target rather than the variance around it.

\paragraph{Objectives beyond the mean}
A recent line of work replaces the mean with a nonlinear functional of the policy's success probability, and \tailrl{} belongs to it.
MaxRL maximizes the log-probability of success for binary rewards and expands that logarithm into a harmonically weighted sum of Pass@$k$ gradients \citep{tajwar2026maximumlikelihoodreinforcementlearning}.
PKPO derives unbiased estimators for Pass@$k$ and for its continuous generalization, the expected maximum of $k$ rewards \citep{walder2025pkpo}, while other work optimizes a chosen inference-time metric directly \citep{tang2025passk,chen2025passktrainingadaptivelybalancing} or reweighs rollouts by problem difficulty \citep{yang2025depthbreadthsynergyrlvrunlocking}.
\citet{davis2025objectivereasoningreinforcementlearning} unify much of this by showing that popular algorithms implicitly ascend a monotone transform of the success probability, and \citet{thrampoulidis2025advantageshaping} recast the resulting Pass@$k$ advantage weights as surrogate reward maximization.
Related objectives target diversity \citep{hamid2025polychromicobjectivesreinforcementlearning,orney2026polyepotrainingexploratoryreasoning}, risk sensitivity \citep{jiang2025risksensitiverlalleviatingexploration}, and rare-success amplification \citep{osband2026delightfulpolicygradient,kaddour2026targetpolicyoptimization}.
Contemporaneous works extend two pieces of this picture: RL2ML generalizes the harmonic coefficient for binary rewards \citep{zheng2026rl2ml}, and OrderGrad estimates gradients of L-statistics over sorted rewards \citep{parmas2026ordergrad}.
Each of these methods commits to one threshold, one transform, or one order statistic.
\tailrl{} instead integrates the log tail-probability over every threshold, so its order-$T$ truncation is a harmonic combination of Best-of-$k$ objectives and reduces to MaxRL exactly when the reward is binary.

\paragraph{Inference-time scaling and selection}
Drawing many rollouts and keeping the best one has become a standard axis of deployment compute, which makes Pass@$k$ and Best-of-$k$ the metrics that matter at inference.
Repeated sampling raises coverage predictably over several orders of magnitude \citep{brown2024monkeys,schaeffer2025largelanguagemonkeyspower}, and allocating compute at test-time can beat allocating it to parameters \citep{snell2024scaling}.
Selecting among the samples requires either agreement \citep{wang2023selfconsistencyimproveschainthought}, a reward function \citep{cobbe2021training,lightman2023letsverify}, or execution \citep{li2022alphacode}, and the unbiased Pass@$k$ estimator we use comes from this literature \citep{chen2021codex}.
Selection also has limits, since optimizing hard against an imperfect reward function eventually degrades true quality \citep{gao2022scaling}.
This body of work measures or exploits the upper tail after training has finished.
\tailrl{} makes that same upper tail the training objective.

\paragraph{Distribution narrowing under expected reward training}
Expected-reward post-training reliably sharpens the policy, which is the failure mode \tailrl{} is designed to resist.
\citet{cui2025entropy} characterize this as an entropy mechanism in which the policy-gradient covariance term stays positive and entropy falls monotonically without intervention.
Measured at large sampling budgets, the effect is visible as a narrowing of what the policy can produce: base models can match or exceed post-trained models on Pass@$k$ at large $k$ \citep{yue2025doesreinforcementlearningreally}, the reachable support changes little \citep{wu2025invisibleleashrlvrescape,nguyen2025reasoningboundaryparadoxreinforcement}, and diversity drops across settings \citep{kirk2024understandingeffectsrlhfllm,dang2025assessing}.
Regularization is one reported cause, since a KL penalty to a reference policy can itself favor mode collapse \citep{gxchen2025klregularizedreinforcementlearningdesigned}.
A parallel line intervenes on entropy directly, through token-level selection \citep{wang2025beyond}, entropy-aware bonuses \citep{cheng2025reasoningexplorationentropyperspective,hao2025rethinkingentropyinterventionsrlvr}, or smoothing \citep{gai2025differentialsmoothingmitigatessharpening}, and these interventions show that coverage can be partly recovered.
\tailrl{} takes a different route and reweights reward thresholds by the inverse of how often the policy reaches them (tail-probability), so rare high-reward threshold keep influence without an added regularizer.

\paragraph{Exploration and sample allocation}
Exploration methods change which rollouts a learner sees, whereas \tailrl{} changes how a fixed group of rollouts is weighted.
Classical approaches add bonuses from visitation counts, prediction error, or information gain \citep{bellemare2016unifyingcountbasedexplorationintrinsic,pathak2017curiosity,burda2018exploration}.
Their language-model counterparts penalize repeated outcomes \citep{song2025outcomebasedexplorationllmreasoning}, reward novelty in representation space \citep{tuyls2025representationbasedexplorationlanguagemodels,dai2025cdecuriositydrivenexplorationefficient}, widen the rollout budget \citep{hu2025brorlscalingreinforcementlearning}, or schedule problems by difficulty \citep{chen2025selfevolvingcurriculumllmreasoning,qu2025pope}.
Closest to us, Reinforce-Ada allocates samples adaptively under a non-linear objective and arrives at a difficulty-prioritizing weighted estimator \citep{xiong2025reinforceadaadaptivesamplingframework}.

\paragraph{Distributional and risk-sensitive reinforcement learning}
Modeling a whole reward or return distribution is not new, so it is worth stating precisely what \tailrl{} does differently.
Distributional reinforcement learning learns the return distribution as a critic, representing it categorically \citep{bellemare2017distributional} or through quantiles \citep{dabney2018distributional,dabney2018iqn,bdr2023book}, and propagates it with a distributional Bellman operator.
Risk-sensitive methods optimize a tail functional of that distribution, most often the conditional value at risk \citep{rockafellar2000optimization,tamar2015optimizing,chow2017risk,clavier2022robustreinforcementlearningdistributional}.
Two differences separate these from \tailrl{}.
First, \tailrl{} carries no critic and performs no bootstrapping: the reward function supplies the reward in one step, and the tail probabilities appear only inside a policy-gradient weight computed from the group itself.
Second, risk measures fix a single risk level, and a fixed level can make a learner blind to rare successes \citep{greenberg2022cesor}, whereas \tailrl{} integrates the log tail-probability uniformly across every threshold.

\paragraph{Threshold decompositions of continuous rewards}
Splitting a continuous reward into a family of binary threshold events is a classical device in supervised learning, and it is the direct ancestor of the decomposition in \cref{fig:teaser}.
Cumulative-link models predict $P(y > j)$ at each threshold \citep{mccullagh1980regression}, and reductions to extended binary classification train one classifier per threshold and reassemble the prediction \citep{frank2001simple,li2006ordinal}.
Neural versions inherit the same structure, with multiple binary outputs \citep{niu2016ordinal} and rank-consistency constraints across cuts \citep{cao2020rank,shi2023corn}.
\citet{patel2026odrpo} carry the idea into policy optimization for discrete rewards.
\tailrl{} is the policy-gradient counterpart for continuous rewards: the threshold $\tau$ plays the role of the ordinal cut, the indicator $\ind_{\{r(x,z)>\tau\}}$ plays the role of the binary label, and the decomposition runs over a continuum rather than a finite set of cuts.

\paragraph{Reinforcement learning in the evaluated domains}
Our four settings each connect to an established line of work.
Bounding-box prediction is normally trained by direct supervision on coordinates, using an $L1$ term, a generalized intersection-over-union term, or the combination adopted by DETR \citep{rezatofighi2019giou,carion2020detr}, against which \cref{sec:exp-imagenet} compares a purely reward-driven policy \citep{russakovsky2015imagenetlargescalevisual,deselaers2010localizing}.
GUI grounding has recently been posed as reinforcement learning with a dense point reward on top of vision-language backbones \citep{bai2025qwen25vl,yang2025gta1,yuan2025segui,liu2025infiguig1} and evaluated on high-resolution professional interfaces \citep{li2025screenspotpro}.
Code generation has long used execution feedback as a reward signal \citep{le2022coderl,shojaee2023ppocoder,liu2023rltf,dou2024stepcoder,gehring2024rlef,zeng2025acecoder}, and optimizing runtime rather than correctness requires both a corpus of slow programs and a deterministic timing oracle \citep{pie_iclr_2024_spotlight,binkert2011gem5}.
That last setting also admits a reward-preserving shortcut, returning the input unchanged, which is an instance of the reward gaming characterized by \citet{skalse2022reward}.
\cref{sec:exp-pie} uses it to separate an objective that settles for a dependable moderate outcome from one that keeps searching for a rare better one.

\section{Supporting Results for the \tailrl{} Gradient Estimator}
\label{app:tailrl-estimator-proofs}

This section supplies the regularity conditions and proofs used in
\cref{sec:estimator}. We continue to work at a fixed input $x$ and assume
$r(x,z)\in[0,1]$.

\subsection{Regularity of the \tailrl{} Gradient}
\label{app:tailrl-regularity}

The derivations of \cref{sec:estimator} rest on three operations.
The score-function identity differentiates the tail-probability under an expectation over the policy's rollouts.
The second operation moves a gradient from outside the threshold integral to inside it.
The third swaps the order of the threshold integral and the rollout expectation.
Each operation exchanges two limits, and each exchange is valid once the quantity being moved is bounded by something with a finite integral.
We state one assumption for each operation, justify each in turn, and then record the lemma that licenses all three.
Throughout, fix an input $x$, let the rollouts take values in a countable set, and let $\Theta$ be an open set of parameters.

\begin{assumption}[Smooth policy]
\label{as:tailrl-smooth}
The support of $\pi_\theta(\cdot\mid x)$ is the same for every $\theta\in \Theta$, each probability $\pi_\theta(z\mid x)$ is differentiable in $\theta$ on $\Theta$, and
\begin{equation}
\sum_z \sup_{\theta\in \Theta}\,\bigl\|\nabla_\theta\pi_\theta(z\mid x)\bigr\| \;<\; \infty.
\label{eq:app-tailrl-smooth-policy}
\end{equation}
\end{assumption}

\Cref{as:tailrl-smooth} is the classical hypothesis behind likelihood-ratio gradient estimators \citep{williams1992,glynn1990likelihood}: it is the interchange condition of \citet{lecuyer1995interchange}, and \citet{mohamed2020montecarlo} survey it as the standing assumption of the score-function estimator class.
It holds by inspection for the policies of this paper: a softmax policy has a fixed finite support, its rollout probabilities are differentiable in $\theta$, and a finite sum of continuous gradient norms is bounded on a bounded $\Theta$.

\begin{assumption}[Finite objective]
\label{as:tailrl-finite}
The tail-likelihood is finite on $\Theta$: $\int_0^1\bigl|\log p_\theta(x,\tau)\bigr|\,d\tau<\infty$ for every $\theta\in \Theta$.
\end{assumption}

\Cref{as:tailrl-finite} is the substantive assumption, and full support is what delivers it.
A softmax policy assigns strictly positive probability to every rollout in its finite support, and the auto-regressive token policies of this paper, softmax at every step with bounded generation length, are of this form.
Let $z^\star$ be a rollout of maximal reward on $x$ and normalize that maximum to $1$.
Every threshold $\tau\in[0,1)$ is then cleared at least by $z^\star$, so the tail-probability is squeezed between two positive constants:
\begin{equation}
0
\;<\;
\pi_\theta(z^\star\mid x)
\;\le\;
p_\theta(x,\tau)
\;\le\;
1,
\qquad
\int_0^1\bigl|\log p_\theta(x,\tau)\bigr|\,d\tau
\;\le\;
-\log\pi_\theta(z^\star\mid x)
\;<\;\infty.
\label{eq:app-tailrl-softmax-check}
\end{equation}
The bound is uniform over any bounded parameter set $\Theta$, since $-\log\pi_\theta(z^\star\mid x)$ is continuous in $\theta$.
If no rollout attains reward $1$, the same squeeze holds over thresholds below the largest attainable reward, and the threshold integral is read over that range.
One consequence is used repeatedly below: the finiteness of the integral forces $p_\theta(x,\tau)>0$ for almost every $\tau$, so the logarithm and every ratio with $p_\theta(x,\tau)$ in its denominator are defined.

\begin{assumption}[Bounded weighted score]
\label{as:tailrl-bounded}
There is a constant $C<\infty$ such that, for almost every $\tau$,
\begin{equation}
\sup_{\theta\in \Theta}\;
\frac{\mathbb{E}_{z\sim\pi_\theta(\cdot\mid x)}
\!\left[\ind_{\{r(x,z)>\tau\}}\,\|S(x,z)\|_2\right]}
{p_\theta(x,\tau)}
\;\le\; C.
\label{eq:app-tailrl-dominated-score}
\end{equation}
\end{assumption}

\Cref{as:tailrl-bounded} follows from the bounded-reward assumption $\sup_{\theta\in \Theta}\sup_{z}\|S(x,z)\|\le C$ routine in policy-gradient convergence analyses \citep{papini2018stochastic,zhang2020global,agarwal2021theory}, since $\mathbb{E}_{z}\!\left[\ind_{\{r(x,z)>\tau\}}\,\|S(x,z)\|\right]\le C\,p_\theta(x,\tau)$.
Scores are bounded for the softmax policies of this paper whenever the logits have bounded gradients.

With the assumptions justified, the lemma states only its conclusion.

\begin{lemma}[Regularity for the \tailrl{} gradient]
\label{lem:tailrl-regularity}
Under \cref{as:tailrl-smooth,as:tailrl-finite,as:tailrl-bounded}, at every $\theta\in \Theta$, the population objective and every finite truncation are differentiable, and the score-function identity and every exchange of a gradient, threshold integral, and rollout expectation in \cref{sec:estimator,sec:estimator-finite} are valid.
\end{lemma}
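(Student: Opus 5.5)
The proof will check the three interchanges one at a time. Each is justified by dominated convergence, or by the mean-value form of differentiation under an integral (Leibniz rule with a dominating function). All claims are made at a fixed $\theta_0\in\Theta$ and the bounds are taken uniformly over $\Theta$.

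\textbf{Step 1: score-function identity for each tail probability.} Fix $\tau$. Then $p_\theta(x,\tau)=\sum_z \ind_{\{r(x,z)>\tau\}}\pi_\theta(z\mid x)$. Each summand is differentiable, and its gradient is dominated by $\sup_{\theta\in\Theta}\|\nabla_\theta\pi_\theta(z\mid x)\|$, which is summable by \cref{as:tailrl-smooth}. Termwise differentiation of a countable sum is therefore valid: the mean-value inequality bounds difference quotients by this sup, and dominated convergence applies to the counting measure. Using the common support, this gives
\[
\nabla_\theta p_\theta(x,\tau)=\mathbb{E}_{z\sim\pi_\theta(\cdot\mid x)}\!\left[\ind_{\{r(x,z)>\tau\}}S(x,z)\right].
\]
The same argument, applied to products over i.i.d.\ rollouts, gives differentiability of $\mathrm{Best\text{-}of\text{-}k}$ and of the score-function form of its gradient. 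The dominating function there is a finite sum of products, one factor differentiated and the rest bounded by $1$.

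\textbf{Step 2: gradient inside the threshold integral.} For almost every $\tau$ we have $p_\theta(x,\tau)>0$, by the consequence of \cref{as:tailrl-finite} noted above. So $\log p_\theta(x,\tau)$ is differentiable in $\theta$, with gradient $\nabla_\theta p_\theta/p_\theta$. By Step 1 and \cref{as:tailrl-bounded}, the norm of this gradient is at most $C$ uniformly in $\theta\in\Theta$, for almost every $\tau$. The mean-value inequality along a segment in $\Theta$ (shrinking $\Theta$ to a convex ball around $\theta_0$) then bounds the difference quotients of $\log p_\theta(x,\tau)$ by $C$, which is integrable on $[0,1]$. Finiteness from \cref{as:tailrl-finite} makes $J_{\mathrm{\tailrl{}}}$ well defined, and dominated convergence gives both its differentiability and \cref{eq:tailrl-threshold-gradient}.

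For the truncations I will use the tail-probability form $\phi_T(p)=-\sum_{\ell\le T}(1-p)^\ell/\ell$ from \cref{tab:weight-view}. Its derivative satisfies $0\le\phi_T'(p)\le\min(T,1/p)$, so the same bound $C$ works. To connect with \cref{eq:tailrl-harmonic-family}, I will use $\mathrm{Best\text{-}of\text{-}k}=\int_0^1(1-(1-p_\theta)^k)\,d\tau$; this is a bounded-integrand identity and needs no regularity.

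\textbf{Step 3: swapping the threshold integral with the rollout expectation.} Fubini–Tonelli applies to the integrand $\phi'(p_\theta(x,\tau))\ind_{\{r>\tau\}}S(x,z)$ on $[0,1]\times\mathcal Z$. Its absolute integral equals $\int_0^1\phi'(p_\theta)\,\mathbb{E}[\ind_{\{r>\tau\}}\|S\|]\,d\tau$, which is at most $C$ by \cref{as:tailrl-bounded}, since $\phi'(p)\le 1/p$. Finiteness of this absolute integral justifies rewriting gradients as $\mathbb{E}[\omega(r)S]$-type expressions. For finite-rollout expectations over $N$ i.i.d.\ rollouts, the weights $\omega$ are bounded by $H_N$ and the product measure is handled identically.

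\textbf{Main obstacle.} The delicate part is Step 2 near thresholds where $p_\theta(x,\tau)$ is tiny or vanishes, close to the top of the attainable reward range. The plan is to rely on the normalization in \cref{as:tailrl-bounded}, which controls the ratio directly, rather than on a lower bound on $p_\theta$. I also need to check that the null set of $\tau$ with $p_\theta=0$ can be chosen independently of $\theta$. Common support (\cref{as:tailrl-smooth}) gives this: $p_\theta(x,\tau)>0$ iff some supported $z$ has $r(x,z)>\tau$, a condition that does not depend on $\theta$.
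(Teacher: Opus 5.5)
Your proposal is correct and follows the paper's proof essentially step for step. It differentiates $p_\theta(x,\tau)$ termwise through the summable envelope of \cref{as:tailrl-smooth}, and uses the uniform bound $\|\nabla_\theta\log p_\theta(x,\tau)\|\le C$ to move the gradient inside the threshold integral, with the truncations handled the same way. Fubini then follows from the integrated bound $\le C$. Your additions only make explicit what the paper leaves implicit: the difference-quotient domination via the mean-value inequality, the $\theta$-independence of the null set from common support, and the product-rollout case.

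One small repair is needed in that product-rollout case. Bound the undifferentiated factors by $\sup_{\theta}\pi_\theta(z_j\mid x)$ rather than by $1$. A bound of $1$ is not summable over an infinite $\mathcal{Z}^k$, whereas on a bounded convex neighborhood of $\theta_0$ the mean-value inequality makes $\sum_z\sup_\theta\pi_\theta(z\mid x)$ finite.
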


\begin{proof}
The proof verifies the three operations in order, at an arbitrary $\theta\in \Theta$.

\paragraphi{The score-function identity}
Fix a threshold $\tau$.
The tail-probability is the sum of the policy probabilities over the rollouts that clear it,
\begin{equation}
p_\theta(x,\tau)
=
\sum_z \ind_{\{r(x,z)>\tau\}}\,\pi_\theta(z\mid x).
\label{eq:app-tailrl-pass-rate-density}
\end{equation}
By \cref{as:tailrl-smooth}, every term of the differentiated series is bounded by the summable envelope of \cref{eq:app-tailrl-smooth-policy}, so the gradient of the sum is the sum of the gradients.

\begin{equation}
\nabla_{\theta} p_\theta(x,\tau)
=
\sum_z \ind_{\{r(x,z)>\tau\}}\,\nabla_{\theta}\pi_\theta(z\mid x).
\label{eq:app-tailrl-pass-rate-density-gradient}
\end{equation}
Substituting $\nabla_\theta\pi_\theta(z\mid x)=\pi_\theta(z\mid x)\,S(x,z)$ in \cref{eq:app-tailrl-pass-rate-density-gradient}  turns the differentiated sum into the score-function identity
\begin{equation}
\nabla_\theta p_\theta(x,\tau)
=
\mathbb{E}_{z\sim\pi_\theta(\cdot\mid x)}
\!\left[\ind_{\{r(x,z)>\tau\}}\,S(x,z)\right].
\label{eq:app-tailrl-score-identity}
\end{equation}

\paragraphi{Moving the gradient inside the threshold integral}
The triangle inequality applied to \cref{eq:app-tailrl-score-identity}, followed by \cref{eq:app-tailrl-dominated-score}, bounds the log-gradient at every threshold:
\begin{equation}
\bigl\|\nabla_\theta\log p_\theta(x,\tau)\bigr\|
=
\frac{\|\nabla_\theta p_\theta(x,\tau)\|}{p_\theta(x,\tau)}
\le
\frac{\mathbb{E}_{z}\!\left[\ind_{\{r(x,z)>\tau\}}\,\|S(x,z)\|\right]}{p_\theta(x,\tau)}
\le
C.
\label{eq:app-tailrl-log-gradient-bound}
\end{equation}
The objective is finite by \cref{as:tailrl-finite}, and the gradient of its integrand is bounded by the constant $C$, so the gradient moves inside the threshold integral:
\begin{equation}
\nabla_\theta J_{\mathrm{\tailrl{}}}(\theta;x)
=
\int_0^1
\frac{\nabla_\theta p_{\theta}(x,\tau)}{p_{\theta}(x,\tau)}
\,d\tau.
\label{eq:app-tailrl-population-differentiation}
\end{equation}
The truncations need nothing new.
The order-$N$ threshold multiplier is a finite geometric sum and is therefore bounded,
\begin{equation}
\frac{1-\left(1-p_\theta(x,\tau)\right)^N}{p_\theta(x,\tau)}
=
\sum_{j=0}^{N-1}\left(1-p_\theta(x,\tau)\right)^j
\le N,
\label{eq:app-tailrl-finite-multiplier-bound}
\end{equation}
so the gradient of the truncated integrand is bounded by $NC$ and the same argument differentiates $J_{\mathrm{\tailrl{}}}^{(N)}$.

\paragraphi{Swapping the threshold integral and the rollout expectation}
Integrating \cref{eq:app-tailrl-dominated-score} over the thresholds gives
\begin{equation}
\int_0^1
\mathbb{E}_{z\sim\pi_{\theta}(\cdot\mid x)}
\!\left[
\frac{\ind_{\{r(x,z)>\tau\}}}{p_{\theta}(x,\tau)}
\,\|S(x,z)\|
\right]
d\tau
\le
C
<\infty,
\label{eq:app-tailrl-fubini-bound}
\end{equation}
so the integrand is absolutely integrable and Fubini's theorem permits taking the threshold integral and the rollout expectation in either order.
The truncated multiplier never exceeds $1/p_{\theta}(x,\tau)$, so the truncated weighted score-function is bounded by the integrand of \cref{eq:app-tailrl-fubini-bound} and the same swap applies at every order $N$.
\end{proof}

The countability of the rollout space is inessential: for a policy with densities over a continuous rollout-space, the sum in \cref{eq:app-tailrl-pass-rate-density} becomes an integral against a common reference measure and the proof is unchanged.

%%%%%%%%%%%%%%%%%%%%%%%%%%%%%%%%%%%%%
\subsection{Derivation of the Rollout-Weight Forms}
\label{app:tailrl-weight-derivation}
We derive the population rollout-weight form of the \tailrl{} gradient and its order-$N$ truncation, \cref{eq:finite-tail-weight-gradient}.
Because the tail-likelihood averages per-level log-likelihoods, its gradient is an average of per-level likelihood gradients,
\begin{equation}
\nabla_\theta J_{\mathrm{\tailrl{}}}(\theta;x)
=
\int_0^1
\frac{\nabla_\theta p_\theta(x,\tau)}
{p_\theta(x,\tau)}
\,d\tau
=
\mathbb{E}_{\tau \sim \mathrm{U}[0,1]}\left[\frac{\nabla_\theta p_\theta(x,\tau)}
{p_\theta(x,\tau)}\right].
\label{eq:tailrl-gradient-over-thresholds}
\end{equation}
At a fixed threshold, the log-derivative trick expresses the pass-rate gradient as a score-function expectation:
\begin{align}
\nabla_\theta p_\theta(x,\tau)
=
\mathbb{E}_{z\sim\pi_\theta(\cdot\mid x)}
\!\left[
\ind_{\{r(x,z_j)>\tau\}}S(x,z)
\right].
\label{eq:tailrl-pass-rate-gradient}
\end{align}
Substituting into \cref{eq:tailrl-gradient-over-thresholds} writes the population gradient as a double expectation,
\begin{equation}
\nabla_\theta J_{\mathrm{\tailrl{}}}(\theta;x)
=
\mathbb{E}_{\tau\sim\mathrm{U}[0,1]}\left[\mathbb{E}_{z\sim \pi_{\theta}(\cdot\mid x)}
\!\left[
\frac{\ind_{\{r(x,z)>\tau\}}}
     {p_\theta(x,\tau)}
S(x,z)
\right]\right],
\label{eq:tailrl-double-expectation}
\end{equation}
and the integrability condition of \cref{lem:tailrl-regularity} permits Fubini's theorem to exchange the two.
The inner threshold integral then runs over exactly the thresholds cleared by the rollout, gives us  \cref{eq:tailrl-population-gradient} as restated below.
\begin{equation}
\nabla_\theta J_{\mathrm{\tailrl{}}}(\theta;x)
=
\mathbb{E}_z
\!\left[
\left(
\int_0^{r(x,z)}
\frac{1}
     {p_\theta(x, \tau)}
\,d\tau
\right)
S(x,z)
\right],
\label{eq:tailrl-population-gradient}
\end{equation}
The order-$N$ member follows the same route.
Differentiating \cref{eq:tailrl-harmonic-family} and summing the finite geometric series gives
\begin{equation}
\nabla_\theta J_{\mathrm{\tailrl{}}}^{(N)}(\theta;x)
=
\int_0^1 \left(
\frac{1-\left(1-p_\theta(x, \tau)\right)^N}
     {p_\theta(x, \tau)} \right)
\nabla_\theta p_\theta(x, \tau)
\,d\tau,
\label{eq:tailrl-order-n-gradient}
\end{equation}
and substituting \cref{eq:tailrl-pass-rate-gradient} and exchanging the threshold integral with the rollout expectation restricts the integral to the thresholds below the rollout's reward,
\begin{equation}
\nabla_\theta J_{\mathrm{\tailrl{}}}^{(N)}(\theta;x)
=
\mathbb{E}_z
\!\left[
\left(
\int_0^{r(x,z)}
\frac{1-\left(1-p_\theta(x, \tau)\right)^N}
     {p_\theta(x, \tau)}
\,d\tau
\right)
S(x,z)
\right],
\label{eq:tailrl-truncated-weighted-gradient}
\end{equation}
which is \cref{eq:finite-tail-weight-gradient}.

\subsection{Best-of-\texorpdfstring{$k$}{k} Decompositions and Probabilistic Interpretations}
\label{app:tailrl-decompositions}

We prove the finite and population Best-of-$k$ decompositions stated in \cref{sec:tailrl-objective}.
It also shows that threshold weighting is equivalent to rescaling the reward axis.
We continue to work at a fixed input $x$ and assume $r(x,z)\in[0,1]$.
The gradient statements use the regularity conditions of \cref{lem:tailrl-regularity}.

\subsubsection{Layer-Cake Form of Best-of-\texorpdfstring{$k$}{k}}
\label{app:tailrl-bestk-layer-cake}

Let $z_1,\ldots,z_k\overset{\mathrm{i.i.d.}}{\sim}\pi_\theta(\cdot\mid x)$ and write
$R_i:=r(x,z_i)$ and $M_k:=\max_{1\leq i\leq k}R_i$.
Since $M_k\in[0,1]$, the layer-cake identity gives
\begin{equation}
\mathbb E[M_k]
=
\int_0^1 \Pr(M_k>\tau)\,d\tau.
\label{eq:app-tailrl-layer-cake-start}
\end{equation}
The maximum fails to clear $\tau$ exactly when all $k$ rollouts fail it.
By independence,
\begin{equation}
\Pr(M_k\leq\tau)
=
\left(1-p_\theta(x,\tau)\right)^k.
\label{eq:app-tailrl-max-cdf}
\end{equation}
Therefore,
\begin{equation}
\text{Best-of-}k(\theta;x)
=
\int_0^1
\left[
1-\left(1-p_\theta(x,\tau)\right)^k
\right]d\tau.
\label{eq:app-bok-c3}
\end{equation}

\subsubsection{Finite Harmonic Decomposition}
\label{app:tailrl-harmonic-finite}

\begin{proposition}[Harmonic Best-of-$k$ expansion of the finite objective]
\label{prop:harmonic-c3}
For every truncation order $T\geq1$,
\begin{equation}
J_{\mathrm{\tailrl{}}}^{(T)}(\theta;x)
=
\sum_{k=1}^{T}
\frac{\text{Best-of-}k(\theta;x)-1}{k}.
\label{eq:app-harmonic-finite-c3}
\end{equation}
\end{proposition}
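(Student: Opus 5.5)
The plan is to prove the statement by working threshold by threshold. Since $J_{\mathrm{\tailrl{}}}^{(T)}$ was introduced in \cref{eq:tailrl-harmonic-family} directly as the harmonic sum, the content of \cref{prop:harmonic-c3} is that this sum coincides with the threshold-integral form of the weight view: $J_{\phi_T}$ with $\phi_T(p)=-\sum_{\ell\le T}(1-p)^\ell/\ell$, as listed in \cref{tab:weight-view}. I would therefore establish
\begin{equation*}
\sum_{k=1}^{T}\frac{\text{Best-of-}k(\theta;x)-1}{k}
=
-\int_0^1\sum_{k=1}^{T}\frac{\left(1-p_\theta(x,\tau)\right)^k}{k}\,d\tau,
\end{equation*}
which shows that both descriptions of the order-$T$ objective agree.

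\textbf{Step 1.} Start from the layer-cake identity \cref{eq:app-bok-c3}. Since $\int_0^1 1\,d\tau=1$, it gives
\begin{equation*}
\text{Best-of-}k(\theta;x)-1=-\int_0^1\left(1-p_\theta(x,\tau)\right)^k d\tau .
\end{equation*}
The integrand lies in $[0,1]$, so each such integral is finite.

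\textbf{Step 2.} Divide by $k$ and sum over $k=1,\dots,T$. Because the sum is finite, it can be exchanged with the $\tau$-integral by linearity alone, with no regularity needed. This yields $\int_0^1\phi_T(p_\theta(x,\tau))\,d\tau$, which is the claimed identity.

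\textbf{Step 3 (consistency checks).} Differentiating $\phi_T$ termwise gives
\begin{equation*}
\phi_T'(p)=\sum_{k=1}^{T}(1-p)^{k-1}=\frac{1-(1-p)^T}{p},
\end{equation*}
which recovers the weight in \cref{eq:finite-tail-weight-gradient}. The gradient-level interchange is licensed by \cref{lem:tailrl-regularity}, using the bound \cref{eq:app-tailrl-finite-multiplier-bound}.

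The only step that needs care is the link to the population objective, which is really the content of \cref{thm:tailrl-harmonic-decomposition}. For $p\in(0,1]$, the Mercator series gives $-\sum_{k\ge1}(1-p)^k/k=\log p$. The partial sums $\phi_T(p)$ decrease monotonically to $\log p$, because every term is nonnegative before the overall minus sign. By \cref{as:tailrl-finite}, $p_\theta(x,\tau)>0$ for almost every $\tau$ and $\log p_\theta(x,\tau)$ is integrable. The monotone convergence theorem, applied to $-\phi_T\uparrow-\log p$, then passes the limit $T\to\infty$ through the integral, so $J^{(T)}_{\mathrm{\tailrl{}}}\to J_{\mathrm{\tailrl{}}}$. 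I expect this limit exchange to be the main obstacle. The finite identity of \cref{prop:harmonic-c3} itself is purely algebraic once \cref{eq:app-bok-c3} is in hand.
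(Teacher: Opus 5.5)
Your proposal is correct and is essentially the paper's proof. Both rest on the layer-cake identity \cref{eq:app-bok-c3} plus exchanging the finite harmonic sum with the threshold integral by linearity; the paper simply runs the computation in the opposite direction, from the threshold-integral form $-\int_0^1\sum_{k\le T}(1-p_\theta(x,\tau))^k/k\,d\tau$ (which its proof implicitly treats as the definition of $J^{(T)}_{\mathrm{\tailrl{}}}$, the ambiguity you flagged) to the Best-of-$k$ sum, and your Step 3 material on gradients and the $T\to\infty$ limit is correct but belongs to \cref{prop:harmonic-grad-c3} and \cref{thm:tailrl-harmonic-decomposition} rather than to this proposition.
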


\begin{proof}
Exchange the finite sum in \cref{eq:tailrl-harmonic-family} with the threshold integral:
\begin{align}
J_{\mathrm{\tailrl{}}}^{(T)}(\theta;x)
&=
-\sum_{k=1}^{T}
\frac{1}{k}
\int_0^1
\left(1-p_\theta(x,\tau)\right)^k
\,d\tau
\nonumber\\
&=
\sum_{k=1}^{T}
\frac{1}{k}
\left(
\int_0^1
\left[1-\left(1-p_\theta(x,\tau)\right)^k\right]
\,d\tau
-1
\right).
\end{align}
The integral in parentheses is $\text{Best-of-}k(\theta;x)$ by \cref{eq:app-bok-c3}.
\end{proof}

\begin{proposition}[Harmonic Best-of-$k$ expansion of the finite gradient]
\label{prop:harmonic-grad-c3}
Under \cref{lem:tailrl-regularity}, for every $T\geq1$,
\begin{equation}
\nabla_\theta J_{\mathrm{\tailrl{}}}^{(T)}(\theta;x)
=
\sum_{k=1}^{T}
\frac{1}{k}
\nabla_\theta\text{Best-of-}k(\theta;x).
\label{eq:app-harmonic-grad-finite}
\end{equation}
Equivalently,
\begin{equation}
\nabla_\theta J_{\mathrm{\tailrl{}}}^{(T)}(\theta;x)
=
\int_0^1
\frac{1-\left(1-p_\theta(x,\tau)\right)^T}
{p_\theta(x,\tau)}
\nabla_\theta p_\theta(x,\tau)
\,d\tau,
\label{eq:app-harmonic-grad-c3}
\end{equation}
where the ratio is interpreted by continuity as $T$ when $p_\theta(x,\tau)=0$.
\end{proposition}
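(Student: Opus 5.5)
We prove the finite gradient identity by differentiating the finite decomposition of \cref{prop:harmonic-c3} term by term. Since $T$ is finite, the sum over $k$ commutes with $\nabla_\theta$ without any further justification. This gives
\[
\nabla_\theta J_{\mathrm{\tailrl{}}}^{(T)}(\theta;x)=\sum_{k=1}^{T}\frac{1}{k}\nabla_\theta\,\text{Best-of-}k(\theta;x),
\]
because the constant $-1/k$ terms have zero gradient. What remains is to check that each $\text{Best-of-}k$ is differentiable, and to compute its gradient in the tail-probability form.

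For this, start from the layer-cake representation \cref{eq:app-bok-c3}:
\[
\text{Best-of-}k(\theta;x)=\int_0^1\bigl[1-(1-p_\theta(x,\tau))^k\bigr]d\tau.
\]
We move $\nabla_\theta$ inside the threshold integral. The chain rule gives the integrand's gradient as $k(1-p_\theta)^{k-1}\nabla_\theta p_\theta$. By the score-function identity \cref{eq:app-tailrl-score-identity} and \cref{as:tailrl-bounded}, its norm is at most
\[
k\,\bigl\|\nabla_\theta p_\theta(x,\tau)\bigr\|\le k\,C\,p_\theta(x,\tau)\le kC.
\]
Alternatively, \cref{as:tailrl-smooth} alone gives $\|\nabla_\theta p_\theta\|\le\sum_z\sup_\theta\|\nabla_\theta\pi_\theta(z\mid x)\|$. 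Either way the bound is uniform in $\theta$ and $\tau$, so dominated differentiation under the integral over $[0,1]$ is valid. This is exactly the content of \cref{lem:tailrl-regularity}, which we invoke here. We obtain
\[
\nabla_\theta\text{Best-of-}k(\theta;x)=\int_0^1 k\bigl(1-p_\theta(x,\tau)\bigr)^{k-1}\nabla_\theta p_\theta(x,\tau)\,d\tau.
\]

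Substituting this into the harmonic sum, the factors $1/k$ and $k$ cancel. Exchanging the finite sum with the integral then gives the multiplier $\sum_{j=0}^{T-1}(1-p_\theta)^j$. For $p_\theta>0$ this geometric sum equals $\bigl(1-(1-p_\theta)^T\bigr)/p_\theta$, as in \cref{eq:app-tailrl-finite-multiplier-bound}. At $p_\theta=0$ the sum equals $T$, which matches the stated convention of interpreting the ratio by continuity. This proves \cref{eq:app-harmonic-grad-c3}.

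The only delicate point is the interchange of gradient and threshold integral together with the convention at $p_\theta=0$. Working with the polynomial form $\sum_{j<T}(1-p)^j$ rather than the ratio avoids any division by zero, so no step seems to be a real obstacle.
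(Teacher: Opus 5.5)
Your proof is correct and follows essentially the paper's argument. You differentiate the finite Best-of-$k$ decomposition term by term, then move the gradient inside the threshold integral and collapse $\sum_{j=0}^{T-1}(1-p_\theta)^j$ with the finite geometric-series identity; your detour through the chain rule on each layer-cake Best-of-$k$ integral is exactly the paper's proof of \cref{prop:harmonic-grad}. Your remark that \cref{as:tailrl-smooth} alone already bounds the integrand's gradient at finite $T$ is valid, as is your use of the polynomial form to handle $p_\theta=0$, and both are slightly tidier than the paper's appeal to the full regularity lemma.
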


\begin{proof}
Differentiating \cref{eq:app-harmonic-finite-c3} term by term gives \cref{eq:app-harmonic-grad-finite}.
Differentiating \cref{eq:tailrl-harmonic-family} under the integral gives
\begin{align}
\nabla_\theta J_{\mathrm{\tailrl{}}}^{(T)}(\theta;x)
&=
\int_0^1
\left[
\sum_{k=1}^{T}
\left(1-p_\theta(x,\tau)\right)^{k-1}
\right]
\nabla_\theta p_\theta(x,\tau)
\,d\tau
\nonumber\\
&=
\int_0^1
\frac{1-\left(1-p_\theta(x,\tau)\right)^T}
{p_\theta(x,\tau)}
\nabla_\theta p_\theta(x,\tau)
\,d\tau,
\end{align}
where the second equality uses the finite geometric-series identity.
\end{proof}

At $T=1$, \cref{eq:app-harmonic-finite-c3} gives
\begin{equation}
J_{\mathrm{\tailrl{}}}^{(1)}(\theta;x)
=
\text{Best-of-}1(\theta;x)-1
=
J_{\mathrm{RL}}(\theta;x)-1.
\label{eq:app-tailrl-first-order}
\end{equation}
Also, since $0\leq(1-p_\theta(x,\tau))^k\leq1$,
\begin{equation}
-H_T
\leq
J_{\mathrm{\tailrl{}}}^{(T)}(\theta;x)
\leq
0,
\qquad
H_T:=\sum_{k=1}^{T}\frac{1}{k}.
\label{eq:app-tailrl-finite-bounds}
\end{equation}

\subsubsection{Population Harmonic Decomposition}
\label{app:tailrl-harmonic-population}

\begin{proof}[Proof of \cref{thm:tailrl-harmonic-decomposition}]
For each threshold, define
\begin{equation}
s_T(\tau)
:=
\sum_{k=1}^{T}
\frac{\left(1-p_\theta(x,\tau)\right)^k}{k}.
\label{eq:app-tailrl-maclaurin-partial}
\end{equation}
The sequence $s_T(\tau)$ is nondecreasing in $T$.
The Maclaurin series gives the extended-real pointwise limit
\begin{equation}
\lim_{T\to\infty}s_T(\tau)
=
-\log p_\theta(x,\tau).
\label{eq:app-tailrl-maclaurin-limit}
\end{equation}
The monotone convergence theorem therefore gives
\begin{align}
\lim_{T\to\infty}
J_{\mathrm{\tailrl{}}}^{(T)}(\theta;x)
&=
-\int_0^1
\lim_{T\to\infty}s_T(\tau)
\,d\tau
\nonumber\\
&=
\int_0^1
\log p_\theta(x,\tau)
\,d\tau
=
J_{\mathrm{\tailrl{}}}(\theta;x).
\label{eq:app-tailrl-population-objective-limit}
\end{align}
Combining this limit with \cref{eq:app-harmonic-finite-c3} yields
\begin{equation}
J_{\mathrm{\tailrl{}}}(\theta;x)
=
\sum_{k=1}^{\infty}
\frac{\text{Best-of-}k(\theta;x)-1}{k}
\end{equation}
whenever the population objective is finite.

For the gradients, define
\begin{equation}
w_T(p)
:=
\frac{1-(1-p)^T}{p}
=
\sum_{j=0}^{T-1}(1-p)^j.
\label{eq:app-tailrl-finite-weight}
\end{equation}
For every $p>0$, $w_T(p)\uparrow1/p$, and $0\leq w_T(p)\leq1/p$.
By \cref{lem:tailrl-regularity},
\begin{equation}
\frac{\left\|\nabla_\theta p_\theta(x,\tau)\right\|}
{p_\theta(x,\tau)}
\leq C
\end{equation}
for almost every threshold and some finite $C$.
Dominated convergence applied to \cref{eq:app-harmonic-grad-c3} gives
\begin{align}
\lim_{T\to\infty}
\nabla_\theta J_{\mathrm{\tailrl{}}}^{(T)}(\theta;x)
&=
\int_0^1
\frac{\nabla_\theta p_\theta(x,\tau)}
{p_\theta(x,\tau)}
\,d\tau
\nonumber\\
&=
\nabla_\theta J_{\mathrm{\tailrl{}}}(\theta;x).
\end{align}
The finite gradient identity then identifies this limit with
$\sum_{k=1}^{\infty}k^{-1}\nabla_\theta\text{Best-of-}k(\theta;x)$.
\end{proof}

\subsubsection{Threshold Weighting as Reward Reparameterization}
\label{app:tailrl-threshold-weighting}

The uniform threshold distribution in \cref{eq:tailrl-objective} is the simplest member of a weighted family.
Let $w:[0,1]\to(0,\infty)$ be a fixed density satisfying $\int_0^1w(\tau)\,d\tau=1$, and define
\begin{equation}
J_w(\theta;x)
:=
\int_0^1
w(\tau)
\log p_\theta(x,\tau)
\,d\tau.
\label{eq:app-tailrl-weighted-objective}
\end{equation}

\begin{proposition}[Threshold weighting is reward-axis rescaling]
\label{prop:tailrl-threshold-reparameterization}
Define
\begin{equation}
F_w(t)
:=
\int_0^t w(s)\,ds,
\qquad
\widetilde r(x,z):=F_w\!\left(r(x,z)\right).
\label{eq:app-tailrl-reward-transform}
\end{equation}
Then $F_w$ is a strictly increasing map from $[0,1]$ to $[0,1]$, and
\begin{equation}
J_w(\theta;x)
=
\int_0^1
\log
\Pr_{z\sim\pi_\theta(\cdot\mid x)}
\!\left(\widetilde r(x,z)>u\right)
\,du.
\label{eq:app-tailrl-weighting-equivalence}
\end{equation}
Thus, weighted TailRL on $r$ is exactly uniform TailRL on the monotone rescaling $\widetilde r=F_w(r)$.
\end{proposition}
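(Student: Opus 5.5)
The argument is a change of variables $u=F_w(\tau)$ in the threshold integral, once we check that $F_w$ maps tail events to tail events.

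\emph{Properties of $F_w$.} Because $w>0$ and $\int_0^1 w=1$, $F_w$ is continuous and strictly increasing, with $F_w(0)=0$ and $F_w(1)=1$. It is therefore a bijection of $[0,1]$ onto itself, and its inverse $G:=F_w^{-1}$ is also continuous and strictly increasing. In particular $\widetilde r(x,z)\in[0,1]$, so $\widetilde r$ is an admissible reward for uniform \tailrl{}.

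\emph{Matching tail events.} Strict monotonicity gives $F_w(r)>F_w(\tau)\iff r>\tau$. Hence, for every $\tau\in[0,1]$,
\[
\Pr_{z\sim\pi_\theta(\cdot\mid x)}\bigl(\widetilde r(x,z)>F_w(\tau)\bigr)=p_\theta(x,\tau).
\]
Equivalently, for every $u\in[0,1]$, writing $\widetilde p_\theta(x,u):=\Pr(\widetilde r>u)$, we have $\widetilde p_\theta(x,u)=p_\theta(x,G(u))$.

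\emph{Change of variables.} $F_w$ is absolutely continuous with derivative $w$ almost everywhere, since it is the integral of an $L^1$ density. The substitution $u=F_w(\tau)$, $du=w(\tau)\,d\tau$, is valid for nonnegative measurable integrands, and it gives
\[
\int_0^1 \log \widetilde p_\theta(x,u)\,du=\int_0^1 \log p_\theta(x,\tau)\,w(\tau)\,d\tau=J_w(\theta;x).
\]

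The main obstacle is justifying this substitution when $\log p$ is unbounded, i.e.\ can tend to $-\infty$. I would handle it by applying the formula to the nonnegative function $-\log p_\theta(x,\cdot)$, which is measurable because it is monotone in $\tau$. The change of variables $\int g(F_w(\tau))\,w(\tau)\,d\tau=\int g(u)\,du$ for nonnegative measurable $g$ follows from the pushforward of $w(\tau)\,d\tau$ under $F_w$ being Lebesgue measure on $[0,1]$. To see this, note that
\[
\int_0^1 \mathbf{1}\{F_w(\tau)\le s\}\,w(\tau)\,d\tau=\int_0^{G(s)} w(\tau)\,d\tau=F_w(G(s))=s.
\]
So the two measures agree on intervals $[0,s]$, hence as measures. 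The identity then holds in $[0,\infty]$ for $g=-\log\widetilde p_\theta(x,\cdot)$, using $g(F_w(\tau))=-\log p_\theta(x,\tau)$ from the previous step.

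This yields \cref{eq:app-tailrl-weighting-equivalence} as an equality of extended reals. Both sides are finite simultaneously, which settles the final claim: weighted \tailrl{} on $r$ is exactly uniform \tailrl{} on $\widetilde r=F_w(r)$.
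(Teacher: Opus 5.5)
Your proposal is correct and follows the same route as the paper: you use strict monotonicity of $F_w$ to get $\Pr(\widetilde r(x,z)>u)=p_\theta(x,F_w^{-1}(u))$, and then substitute $u=F_w(\tau)$ in the threshold integral. Your pushforward-measure argument, applied to the nonnegative integrand $-\log p_\theta(x,\cdot)$ so that the identity holds in the extended reals, is extra rigor that the paper leaves implicit; it does not change the underlying argument.
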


\begin{proof}
Since $w$ is positive and integrates to one, $F_w$ is strictly increasing with $F_w(0)=0$ and $F_w(1)=1$.
For $u\in[0,1]$,
\begin{align}
\Pr\!\left(\widetilde r(x,z)>u\right)
&=
\Pr\!\left(F_w(r(x,z))>u\right)
\nonumber\\
&=
\Pr\!\left(r(x,z)>F_w^{-1}(u)\right)
=
p_\theta\!\left(x,F_w^{-1}(u)\right).
\end{align}
Therefore,
\begin{align}
\int_0^1
\log\Pr\!\left(\widetilde r(x,z)>u\right)du
&=
\int_0^1
\log p_\theta\!\left(x,F_w^{-1}(u)\right)du
\nonumber\\
&=
\int_0^1
w(\tau)\log p_\theta(x,\tau)d\tau,
\end{align}
where the last equality uses the substitution $u=F_w(\tau)$.
\end{proof}

The proposition shows that choosing a threshold density is equivalent to choosing a monotone coordinate system for reward quality.
Uniform weighting corresponds to the original normalized reward axis and introduces no additional function or hyperparameter.
A nonuniform weighting may still be useful when a task provides a preferred reward scale, but selecting or learning that scale is outside the scope of this work.

\subsubsection{Recovery of MaxRL on Binary Rewards}
\label{app:tailrl-objective-binary-recovery}

\begin{proof}[Proof of \cref{eq:tailrl-maxrl-population}]
If $r(x,z)\in\{0,1\}$, then for every $\tau\in[0,1)$,
\begin{equation}
\{r(x,z)>\tau\}
=
\{r(x,z)=1\}.
\end{equation}
Therefore $p_\theta(x,\tau)=q_\theta(x)$ for every nontrivial threshold, and
\begin{equation}
J_{\mathrm{\tailrl{}}}(\theta;x)
=
\int_0^1\log q_\theta(x)\,d\tau
=
\log q_\theta(x).
\end{equation}
The same substitution in the finite objective gives
\begin{equation}
J_{\mathrm{\tailrl{}}}^{(T)}(\theta;x)
=
-\sum_{k=1}^{T}
\frac{\left(1-q_\theta(x)\right)^k}{k},
\end{equation}
which is the order-$T$ Maclaurin truncation of MaxRL.
Finally, the maximum of $k$ binary rewards equals one exactly when at least one rollout succeeds, so
$\text{Best-of-}k(\theta;x)=\mathrm{Pass@}k(\theta;x)$.
\end{proof}

\subsection{Harmonic Best-of-\texorpdfstring{$k$}{k} Expansion of the Tail-Likelihood}
\label{app:tailrl-harmonic}

This section proves that the truncated objective is a partial sum of Best-of-$k$ objectives with harmonic coefficients, and that the tail-likelihood is the full series. Since the maximum of $k$ rewards clears a threshold unless all $k$ rollouts fail it, by layer cake identity,
\begin{equation}
\text{Best-of-}k(\theta;x)
=
\mathbb{E}\Big[\max_{1\le i\le k} r(x,z_i)\Big]
=
\int_0^1 \Big[1-\big(1-p_\theta(x,\tau)\big)^{k}\Big]\,d\tau .
\label{eq:app-bok}
\end{equation}

\begin{proposition}[Harmonic Best-of-$k$ expansion of the objective]
\label{prop:harmonic}
Fix an input $x$ and a parameter $\theta$.
For every truncation order $T \ge 1$,
\begin{equation}
J^{(T)}_{\mathrm{\tailrl{}}}(\theta;x)
=
\sum_{k=1}^{T} \frac{1}{k}\Big(\text{\emph{Best-of-}}k(\theta;x) - 1\Big),
\label{eq:app-harmonic-finite}
\end{equation}
and if $J_{\mathrm{\tailrl{}}}(\theta;x) > -\infty$, the same identity holds for the full series,
\begin{equation}
J_{\mathrm{\tailrl{}}}(\theta;x)
=
\sum_{k=1}^{\infty} \frac{1}{k}\Big(\text{\emph{Best-of-}}k(\theta;x) - 1\Big).
\label{eq:app-harmonic}
\end{equation}
\end{proposition}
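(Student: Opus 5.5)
The plan is to start from the layer-cake form \cref{eq:app-bok}, which is already established. Fix $x$ and $\theta$. I will take the truncated objective to mean the threshold form implicit in \cref{tab:weight-view},
\[
J^{(T)}_{\mathrm{\tailrl{}}}(\theta;x)=-\int_0^1\sum_{k=1}^{T}\frac{(1-p_\theta(x,\tau))^k}{k}\,d\tau .
\]
For finite $T$ the sum over $k$ is finite and every integrand is bounded in $[0,1]$. So the sum and the integral can be exchanged with no regularity hypothesis:
\[
J^{(T)}_{\mathrm{\tailrl{}}}(\theta;x)=\sum_{k=1}^{T}\frac1k\int_0^1\Bigl[\bigl(1-(1-p_\theta)^k\bigr)-1\Bigr]\,d\tau=\sum_{k=1}^{T}\frac1k\bigl(\text{Best-of-}k(\theta;x)-1\bigr).
\]
The last equality uses \cref{eq:app-bok}. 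This gives \cref{eq:app-harmonic-finite} and mirrors \cref{prop:harmonic-c3}.

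For the full series I will pass to the limit $T\to\infty$ pointwise in $\tau$. The partial sums $s_T(\tau)=\sum_{k\le T}(1-p_\theta(x,\tau))^k/k$ are nonnegative and nondecreasing in $T$. By the Maclaurin series of $-\log(1-u)$ at $u=1-p_\theta(x,\tau)\in[0,1]$, they converge to $-\log p_\theta(x,\tau)$ in $[0,\infty]$. This includes the value $+\infty$ when $p_\theta(x,\tau)=0$, consistent with $\log 0=-\infty$.

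The monotone convergence theorem then gives
\[
\lim_{T\to\infty}\int_0^1 s_T(\tau)\,d\tau=\int_0^1-\log p_\theta(x,\tau)\,d\tau=-J_{\mathrm{\tailrl{}}}(\theta;x),
\]
as an extended-real identity. The left side equals $-\lim_T\sum_{k\le T}\frac1k(\text{Best-of-}k-1)$ by the finite identity. Each summand $\frac1k(\text{Best-of-}k-1)$ is $\le 0$, so the partial sums are monotone and the series always has a limit in $[-\infty,0]$. The hypothesis $J_{\mathrm{\tailrl{}}}>-\infty$ is exactly what makes this limit a finite number and yields \cref{eq:app-harmonic}.

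The only delicate point is the limit interchange, specifically at thresholds where $p_\theta(x,\tau)=0$ or is near $0$. Dominated convergence is unavailable there without a bound, which is why I use monotone convergence: it needs only nonnegativity and monotonicity, both immediate. I will also state that the identity in fact holds in $[-\infty,0]$ unconditionally, with the finiteness assumption serving only to make both sides real numbers.
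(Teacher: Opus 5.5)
Your proposal is correct and follows the paper's proof essentially step for step. You exchange the finite sum with the threshold integral and apply the layer-cake form of Best-of-$k$. You then pass to the limit using the Maclaurin series and monotone convergence, and identify the series' partial sums with $J^{(T)}$. Two further points match or slightly extend the paper: reading $J^{(T)}$ as the threshold-integral form is exactly what the paper's proof uses, and your remark that the identity holds in $[-\infty,0]$ without the finiteness hypothesis is a correct, minor strengthening.
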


\begin{proof}
We prove the finite identity first and obtain the series as its limit.
Start from the definition of the truncated objective in \cref{eq:tailrl-harmonic-family} and exchange the finite sum with the threshold integral:
\begin{equation}
J^{(T)}_{\mathrm{\tailrl{}}}(\theta;x)
=
-\int_0^1 \sum_{k=1}^{T} \frac{\big(1-p_\theta(x,\tau)\big)^{k}}{k}\,d\tau
=
-\sum_{k=1}^{T} \frac{1}{k} \int_0^1 \big(1-p_\theta(x,\tau)\big)^{k}\,d\tau .
\label{eq:app-harmonic-swap}
\end{equation}
Each integral on the right is one minus a Best-of-$k$ value by \cref{eq:app-bok}, and substituting it in gives \cref{eq:app-harmonic-finite}.

For the series, we let $T \to \infty$ on both sides of \cref{eq:app-harmonic-finite} and identify the two limits.

We first show the left side converges to the tail-likelihood.
Since $J_{\mathrm{\tailrl{}}}(\theta;x) > -\infty$, the tail-probability $p_\theta(x,\tau)$ is positive for almost every threshold.
Fix such a $\tau$, so that $0 \le 1-p_\theta(x,\tau) < 1$.
The Maclaurin series of the logarithm, evaluated at $1-p_\theta(x,\tau)$, gives the pointwise limit of the partial sums:
\begin{equation}
\lim_{T\to\infty}
\sum_{k=1}^{T} \frac{\big(1-p_\theta(x,\tau)\big)^{k}}{k}
=
\sum_{k=1}^{\infty} \frac{\big(1-p_\theta(x,\tau)\big)^{k}}{k}
=
-\log p_\theta(x,\tau).
\label{eq:app-maclaurin}
\end{equation}
The convergence is monotone: every added term $\big(1-p_\theta(x,\tau)\big)^{T+1}/(T+1)$ is nonnegative, so the partial sums only grow with $T$.
The integrands in the definition \cref{eq:tailrl-harmonic-family} of $J^{(T)}_{\mathrm{\tailrl{}}}$ are exactly these partial sums.
Because they are nonnegative and increasing in $T$, the limit of their integrals is the integral of their limit, which is the monotone convergence theorem.
Therefore
\begin{equation}
\lim_{T\to\infty} J^{(T)}_{\mathrm{\tailrl{}}}(\theta;x)
=
-\int_0^1 \lim_{T\to\infty} \sum_{k=1}^{T} \frac{\big(1-p_\theta(x,\tau)\big)^{k}}{k}\,d\tau
=
\int_0^1 \log p_\theta(x,\tau)\,d\tau
=
J_{\mathrm{\tailrl{}}}(\theta;x).
\label{eq:app-harmonic-limit}
\end{equation}

The right side needs no computation.
For every $T$, the right side of \cref{eq:app-harmonic-finite} is the $T$-th partial sum of the series in \cref{eq:app-harmonic}.
Its limit exists because it equals the left side at every $T$, and by \cref{eq:app-harmonic-limit} that limit is $J_{\mathrm{\tailrl{}}}(\theta;x)$.
This is \cref{eq:app-harmonic}.
\end{proof}

Every term of the series is nonpositive, since no Best-of-$k$ value exceeds $1$, so the partial sums decrease monotonically to the tail-likelihood: each truncation is an upper bound on $J_{\mathrm{\tailrl{}}}$, tightening as $T$ grows.
At the other end, $J^{(1)}_{\mathrm{\tailrl{}}}(\theta;x) = \text{Best-of-}1(\theta;x) - 1 = J_{\mathrm{RL}}(\theta;x) - 1$, so the first member of the family is expected reward reinforcement learning up to an additive constant and shares its gradient.

\begin{proposition}[Harmonic Best-of-$k$ expansion of the gradient]
\label{prop:harmonic-grad}
Under \cref{as:tailrl-smooth,as:tailrl-finite,as:tailrl-bounded}, for every $\theta\in \Theta$ and every $T \ge 1$,
\begin{equation}
\nabla_\theta J^{(T)}_{\mathrm{\tailrl{}}}(\theta;x)
=
\sum_{k=1}^{T} \frac{1}{k}\,\nabla_\theta\, \text{\emph{Best-of-}}k(\theta;x)
=
\int_0^1 \frac{1-\big(1-p_\theta(x,\tau)\big)^{T}}{p_\theta(x,\tau)}\,\nabla_\theta p_\theta(x,\tau)\,d\tau .
\label{eq:app-harmonic-grad}
\end{equation}
\end{proposition}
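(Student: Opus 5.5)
The plan is to prove both equalities in \cref{eq:app-harmonic-grad} from the integral representation of $J^{(T)}_{\mathrm{\tailrl{}}}$ in \cref{eq:tailrl-harmonic-family}, which is
\begin{equation*}
J^{(T)}_{\mathrm{\tailrl{}}}(\theta;x)=-\int_0^1\sum_{k=1}^{T}\frac{\bigl(1-p_\theta(x,\tau)\bigr)^k}{k}\,d\tau .
\end{equation*}
For the first equality, I will use \cref{prop:harmonic}, which writes $J^{(T)}_{\mathrm{\tailrl{}}}$ as the finite sum $\sum_{k\le T}k^{-1}(\text{Best-of-}k-1)$. It then suffices to show that each $\text{Best-of-}k(\theta;x)$ is differentiable on $\Theta$; linearity of the gradient over a finite sum finishes the argument, and the constants $-1/k$ drop out.

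For differentiability of $\text{Best-of-}k$, I will start from the layer-cake form \cref{eq:app-bok} and differentiate under the threshold integral. The integrand $1-(1-p)^k$ has $\tau$-pointwise gradient $k(1-p_\theta(x,\tau))^{k-1}\nabla_\theta p_\theta(x,\tau)$. The pointwise differentiability of $p_\theta(x,\tau)$, and its score-function form, come from the first step of \cref{lem:tailrl-regularity}, using \cref{as:tailrl-smooth}. For domination, I will use $\|\nabla_\theta p_\theta(x,\tau)\|\le \sum_z\sup_{\theta\in\Theta}\|\nabla_\theta\pi_\theta(z\mid x)\|=:B<\infty$, which holds uniformly in $\theta\in\Theta$ and $\tau$. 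So the gradient of the integrand is bounded by $kB$, which is integrable on $[0,1]$. The mean value theorem plus dominated convergence then justifies $\nabla_\theta\int_0^1=\int_0^1\nabla_\theta$. This yields
\begin{equation*}
\nabla_\theta\text{Best-of-}k(\theta;x)=\int_0^1 k\bigl(1-p_\theta(x,\tau)\bigr)^{k-1}\nabla_\theta p_\theta(x,\tau)\,d\tau .
\end{equation*}

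For the second equality, I will divide by $k$ and sum over $k=1,\dots,T$, exchanging the finite sum with the integral. The multiplier becomes $\sum_{j=0}^{T-1}(1-p)^j$. For $p>0$ the finite geometric series gives $(1-(1-p)^T)/p$, as in \cref{eq:app-tailrl-finite-multiplier-bound}. Where $p=0$ the multiplier equals $T$ by continuity, and that case is in any event null under \cref{as:tailrl-finite}. This reproduces \cref{eq:app-harmonic-grad-c3} and completes \cref{eq:app-harmonic-grad}.

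The main obstacle is the interchange of gradient and threshold integral. The pointwise derivative bound must be uniform in a neighbourhood of $\theta$, not just at $\theta$. I expect the summable envelope of \cref{as:tailrl-smooth} to supply this directly, since it is already a supremum over $\Theta$. Because the multiplier is bounded by $T$, I do not expect to need \cref{as:tailrl-bounded} here. It was needed only for the $T\to\infty$ limit in \cref{thm:tailrl-harmonic-decomposition}.
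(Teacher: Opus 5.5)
Your proposal is correct and follows the paper's proof. Both differentiate the layer-cake form of Best-of-$k$ under the threshold integral to get $k\int_0^1(1-p_\theta(x,\tau))^{k-1}\nabla_\theta p_\theta(x,\tau)\,d\tau$, weight by $1/k$, sum the finite geometric series, and get the first equality by differentiating the finite harmonic expansion term by term. The only difference is how the interchange is justified: the paper cites \cref{lem:tailrl-regularity}, whose domination step goes through \cref{as:tailrl-bounded}, whereas you dominate directly with the summable envelope of \cref{as:tailrl-smooth} ($\|\nabla_\theta p_\theta(x,\tau)\|\le B$ uniformly in $\theta$ and $\tau$), which is valid and rightly shows that \cref{as:tailrl-bounded} is not needed for finite $T$.
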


\begin{proof}
\Cref{lem:tailrl-regularity} moves the gradient inside the threshold integral of \cref{eq:app-bok}, and the chain rule gives
\begin{equation}
\nabla_\theta\, \text{Best-of-}k(\theta;x)
=
k \int_0^1 \big(1-p_\theta(x,\tau)\big)^{k-1}\,\nabla_\theta p_\theta(x,\tau)\,d\tau .
\label{eq:app-bok-grad}
\end{equation}
Multiply \cref{eq:app-bok-grad} by $1/k$, sum over $k = 1, \ldots, T$, and exchange the finite sum with the integral:
\begin{equation}
\sum_{k=1}^{T} \frac{1}{k}\,\nabla_\theta\, \text{Best-of-}k(\theta;x)
=
\int_0^1 \Bigg[\sum_{k=1}^{T} \big(1-p_\theta(x,\tau)\big)^{k-1}\Bigg]\,\nabla_\theta p_\theta(x,\tau)\,d\tau .
\label{eq:app-harmonic-grad-sum}
\end{equation}
The bracket is a finite geometric sum, $\sum_{k=1}^{T}(1-p)^{k-1} = \big(1-(1-p)^{T}\big)/p$, which gives the right side of \cref{eq:app-harmonic-grad}.
The left equality is \cref{lem:tailrl-regularity} again, differentiating \cref{eq:app-harmonic-finite} term by term.
Every coefficient $1/k$ is positive, so the truncated gradient is a fixed, positively weighted combination of the Best-of-$k$ gradients for $k = 1, \ldots, T$.
\end{proof}

\subsection{Unbiasedness of the finite rollout Estimator}
\label{app:tailrl-finite-unbiasedness}

This section proves \cref{thm:tailrl-finite-unbiasedness}, restated here in full.

\noindent
\textbf{\Cref{thm:tailrl-finite-unbiasedness}} \textbf{(Unbiasedness of the \tailrl{} estimator, restated).}
The unbiased estimator of  $\nabla_\theta J_{\mathrm{\tailrl{}}}^{(N)}(\theta;x)$ can be expressed as,
% Under \cref{lem:tailrl-regularity},
\begin{equation}
\gtailrl(x)
:=
\sum_{i=1}^N
\omega(r_i)\,S_i,
\qquad
\omega(r_i)
:=
\int_0^{r_i}\frac{d\tau}{\sum_{j=1}^{N}\ind_{\{r_j>\tau\}}}
\label{eq:app-tailrl-weight-restated}
\end{equation}
\emph{Under \cref{as:tailrl-smooth,as:tailrl-finite,as:tailrl-bounded}, the estimator $\gtailrl(x):=\sum_{i=1}^N\omega\bigl(r(x,z_i)\bigr)\,S(x,z_i)$ is unbiased for the order-$N$ truncated gradient:}
\begin{equation}
\mathbb{E}_{z_{1:N}}\!\left[\gtailrl(x)\right]
=
\nabla_\theta J_{\mathrm{\tailrl{}}}^{(N)}(\theta;x).
\label{eq:app-tailrl-unbiasedness-restated}
\end{equation}

\begin{proof}[Proof of \cref{thm:tailrl-finite-unbiasedness}]

\noindent
The weight \cref{eq:app-tailrl-weight-restated} for rollout $i$ over its integration range clears the threshold, so its denominator is at least one.
The first step extends the integral from $[0,r(x,z_i))$ to $[0,1)$ with an integrand that vanishes beyond $r(x,z_i)$.
We cap the denominator below at one; the cap is active only where the numerator already vanishes:
\begin{equation}
\omega\bigl(r(x,z_i)\bigr)
=
\int_0^{r_i}\frac{d\tau}{\sum_{j=1}^{N}\ind_{\{r_j>\tau\}}}
=
\int_0^1
\frac{\ind_{\{r(x,z_i)>\tau\}}}{\max\Bigl(1,\;\sum_{j=1}^{N}\ind_{\{r(x,z_j)>\tau\}}\Bigr)}
\,d\tau.
\label{eq:app-tailrl-weight-indicator}
\end{equation}

Every ratio from here on carries this capped denominator, so no expression in the proof is ever indeterminate.
Multiplying by $S(x,z_i)$ with their weights $\omega(r(x, z_i))$ and summing over the group, we make use of the score-functions being independent of the thresholds, so we swap the order of integration and summation:
\begin{equation}
\gtailrl(x)
=
\int_0^1
\frac{\sum_{i=1}^{N}\ind_{\{r(x,z_i)>\tau\}}\,S(x,z_i)}{\max\Bigl(1,\;\sum_{j=1}^{N}\ind_{\{r(x,z_j)>\tau\}}\Bigr)}
\,d\tau.
\label{eq:app-tailrl-threshold-form}
\end{equation}

\paragraphi{The chain of expectations}
Take the expectation over the training group.
\Cref{lem:tailrl-regularity} moves it inside the integral over thresholds $\tau$, and linearity then moves it inside the finite sum over rollouts:
\begin{align}
\mathbb{E}_{z_{1:N}}\!\left[\gtailrl(x)\right]
&=
\int_0^1
\mathbb{E}_{z_{1:N}}\!\left[ \left ( \sum_{i=1}^{N}
\frac{\ind_{\{r(x,z_i)>\tau\}}\,S(x,z_i)}{\max\Bigl(1,\;\sum_{j=1}^{N}\ind_{\{r(x,z_j)>\tau\}}\Bigr)}
\right )\right]
d\tau
\nonumber\\
&=
\int_0^1
\sum_{i=1}^{N} \left (
\mathbb{E}_{z_{1:N}}\!\left[
\frac{\ind_{\{r(x,z_i)>\tau\}}\,S(x,z_i)}{\max\Bigl(1,\;\sum_{j=1}^{N}\ind_{\{r(x,z_j)>\tau\}}\Bigr)}
\right]  \right )
d\tau.
\label{eq:app-tailrl-exchange}
\end{align}
By regularity condition \cref{lem:tailrl-regularity}, this integral is bounded. 
The next tool  we use is the tower property of conditional expectation: for random variables $X$ and $Y$ on the same support,
\begin{equation}
\mathbb{E}_{X}\!\left[X\right]
=
\mathbb{E}_{Y}\!\left[\,\mathbb{E}_{X}\!\left[X\mid Y\right]\right],
\label{eq:app-tailrl-tower-property}
\end{equation}
where the inner expectation averages $X$ with $Y$ held at its realized value and the outer expectation averages the result over $Y$.
We apply it with the substitution
\begin{equation}
X
:=
\frac{\ind_{\{r(x,z_i)>\tau\}}\,S(x,z_i)}{\max\Bigl(1,\;\sum_{j=1}^{N}\ind_{\{r(x,z_j)>\tau\}}\Bigr)},
\qquad
Y
:=
\{ \ind_{\{r(x,z_i)>\tau\}}\}_{i=1}^{N},
\label{eq:app-tailrl-xy-def}
\end{equation}
the clearance pattern of the group at threshold $\tau$; both are functions of the training group $z_{1:N}$, so every expectation below averages over $z_{1:N}$.
The tower property \cref{eq:app-tailrl-tower-property} gives
\begin{equation}
\mathbb{E}_{z_{1:N}}\!\left[
\frac{\ind_{\{r(x,z_i)>\tau\}}\,S(x,z_i)}{\max\Bigl(1,\;\sum_{j=1}^{N}\ind_{\{r(x,z_j)>\tau\}}\Bigr)}
\right]
=
\mathbb{E}_{z_{1:N}}\!\left[\,
\mathbb{E}_{z_{1:N}}\!\left[\left.
\frac{\ind_{\{r(x,z_i)>\tau\}}\,S(x,z_i)}{\max\Bigl(1,\;\sum_{j=1}^{N}\ind_{\{r(x,z_j)>\tau\}}\Bigr)}
\,\right|\,\{ \ind_{\{r(x,z_i)>\tau\}}\}_{i=1}^{N}\right]
\right].
\label{eq:app-tailrl-step-tower}
\end{equation}
Before taking the inner expectation, note one pointwise identity.
An indicator equals its own square, so multiplying the fraction by a second copy
of rollout $i$'s indicator changes nothing:
\begin{equation}
\frac{\ind_{\{r(x,z_i)>\tau\}}}{\max\Bigl(1,\;\sum_{j=1}^{N}\ind_{\{r(x,z_j)>\tau\}}\Bigr)}
\;\ind_{\{r(x,z_i)>\tau\}}
=
\frac{\ind_{\{r(x,z_i)>\tau\}}}{\max\Bigl(1,\;\sum_{j=1}^{N}\ind_{\{r(x,z_j)>\tau\}}\Bigr)}.
\label{eq:app-tailrl-absorb}
\end{equation}
So $X$ factors into the fraction times the conditional score:
\begin{equation}
X
=
\frac{\ind_{\{r(x,z_i)>\tau\}}}{\max\Bigl(1,\;\sum_{j=1}^{N}\ind_{\{r(x,z_j)>\tau\}}\Bigr)}
\;\ind_{\{r(x,z_i)>\tau\}}\,S(x,z_i),
\qquad
Y
=
\bigl\{\ind_{\{r(x,z_j)>\tau\}}\bigr\}_{j=1}^{N}.
\label{eq:app-tailrl-x-split}
\end{equation}
Conditioning on $\bigl\{\ind_{\{r(x,z_j)>\tau\}}\bigr\}_{j=1}^{N}$ determines all $N$ indicator functions, so the expression simplifies as:
\begin{equation}
\mathbb{E}_{z_{1:N}}\!\left[\left.
\frac{\ind_{\{r(x,z_i)>\tau\}}\,S(x,z_i)}{\max\Bigl(1,\;\sum_{j=1}^{N}\ind_{\{r(x,z_j)>\tau\}}\Bigr)}
\,\right|\,Y\right]
=
\frac{\ind_{\{r(x,z_i)>\tau\}}}{\max\Bigl(1,\;\sum_{j=1}^{N}\ind_{\{r(x,z_j)>\tau\}}\Bigr)}
\;
\mathbb{E}_{z_{1:N}}\!\left[\,\ind_{\{r(x,z_i)>\tau\}}\,S(x,z_i)\,\middle|\,Y\right].
\label{eq:app-tailrl-step-pullout}
\end{equation}
It remains to compute the conditional mean of the gated score.
The rollouts are independent, so the only part of $Y$ that constrains $z_i$ is its own
indicator, and we split on the two values it can take.
If rollout $i$ clears, $z_i$ follows the policy restricted to the clearing set,
$\pi_\theta(z\mid x)\,\ind_{\{r(x,z)>\tau\}}/p_\theta(x,\tau)$; the gate is one everywhere
on this support, so the mean is the clearing-rollout mean score, whose norm is at most $C$
by \cref{as:tailrl-bounded}.
If rollout $i$ fails, $z_i$ is supported on $r(x,z)\le\tau$, where the gate is zero, so the
mean is exactly the zero vector.
Both branches are finite, and together they give:
\begin{equation}
\mathbb{E}_{z_{1:N}}\!\left[\,\ind_{\{r(x,z_i)>\tau\}}\,S(x,z_i)\,\middle|\,Y\right]
=
\ind_{\{r(x,z_i)>\tau\}}\;
\mathbb{E}_{z\sim\pi_\theta(\cdot\mid x)}\!\left[S(x,z)\mid r(x,z)>\tau\right].
\label{eq:app-tailrl-gated-mean}
\end{equation}
Substituting \cref{eq:app-tailrl-gated-mean} into \cref{eq:app-tailrl-step-pullout} and applying \cref{eq:app-tailrl-absorb} once more removes the extra indicator:
\begin{equation}
\mathbb{E}_{z_{1:N}}\!\left[\left.
\frac{\ind_{\{r(x,z_i)>\tau\}}\,S(x,z_i)}{\max\Bigl(1,\;\sum_{j=1}^{N}\ind_{\{r(x,z_j)>\tau\}}\Bigr)}
\,\right|\,Y\right]
=
\frac{\ind_{\{r(x,z_i)>\tau\}}}{\max\Bigl(1,\;\sum_{j=1}^{N}\ind_{\{r(x,z_j)>\tau\}}\Bigr)}
\;
\mathbb{E}_{z\sim\pi_\theta(\cdot\mid x)}\!\left[S(x,z)\mid r(x,z)>\tau\right].
\label{eq:app-tailrl-step-branch}
\end{equation}
Finally the outer expectation of \cref{eq:app-tailrl-step-tower} wraps over \cref{eq:app-tailrl-step-branch}.
The clearing-rollout mean score-function is a fixed vector, so by linearity the outer expectation acts only on the fraction of indicators:
\begin{equation}
\begin{split}
\mathbb{E}_{z_{1:N}}\!\left[
\frac{\ind_{\{r(x,z_i)>\tau\}}\,S(x,z_i)}{\max\Bigl(1,\;\sum_{j=1}^{N}\ind_{\{r(x,z_j)>\tau\}}\Bigr)}
\right]
&=
\mathbb{E}_{z\sim\pi_\theta(\cdot\mid x)}\!\left[S(x,z)\mid r(x,z)>\tau\right]
\\[2pt]
&\quad\times\;
\mathbb{E}_{z_{1:N}}\!\left[
\frac{\ind_{\{r(x,z_i)>\tau\}}}{\max\Bigl(1,\;\sum_{j=1}^{N}\ind_{\{r(x,z_j)>\tau\}}\Bigr)}
\right].
\end{split}
\label{eq:app-tailrl-per-term}
\end{equation}
Substituting \cref{eq:app-tailrl-per-term} into \cref{eq:app-tailrl-exchange}, summing over $i$, and recombining the $N$ terms under one expectation by linearity leaves the sum of the fractions, which share one denominator and collapse to a single ratio of the same count:
\begin{equation}
\sum_{i=1}^{N}
\frac{\ind_{\{r(x,z_i)>\tau\}}}{\max\Bigl(1,\;\sum_{j=1}^{N}\ind_{\{r(x,z_j)>\tau\}}\Bigr)}
=
\frac{\sum_{j=1}^{N}\ind_{\{r(x,z_j)>\tau\}}}{\max\Bigl(1,\;\sum_{j=1}^{N}\ind_{\{r(x,z_j)>\tau\}}\Bigr)}
=
\ind_{\{\sum_{j=1}^{N}\ind_{\{r(x,z_j)>\tau\}}\geq 1\}}.
\label{eq:app-tailrl-share-identity}
\end{equation}
The last equality is checked outcome by outcome: if the number of rollouts attaining a reward above the threshold is $\geq1$, the cap is inactive and the ratio is $=1$; if no rollout clears, the ratio is $0/1=0$.
The expectation of this event indicator is its probability, and the $N$ rollouts fail the threshold independently, each with probability $1-p_\theta(x,\tau)$:
\begin{equation}
\mathbb{E}_{z_{1:N}}\!\left[\ind_{\{\sum_{j=1}^{N}\ind_{\{r(x,z_j)>\tau\}}\geq 1\}}\right]
=
1-\left(1-p_\theta(x,\tau)\right)^{N}.
\label{eq:app-tailrl-survive-prob}
\end{equation}
The remaining constant is evaluated by the definition of conditional expectation given an event, $\mathbb{E}[S\mid A]=\mathbb{E}[S\,\ind_A]/\Pr(A)$, with $A=\{r(x,z)>\tau\}$ and $\Pr_{z\sim\pi_\theta(\cdot\mid x)}(A)=p_\theta(x,\tau)$; its numerator is the score-function identity \cref{eq:app-tailrl-score-identity}:
\begin{equation}
\mathbb{E}_{z\sim\pi_\theta(\cdot\mid x)}\!\left[S(x,z)\mid r(x,z)>\tau\right]
=
\frac{\mathbb{E}_{z\sim\pi_\theta(\cdot\mid x)}\!\left[\ind_{\{r(x,z)>\tau\}}\,S(x,z)\right]}{p_\theta(x,\tau)}
=
\frac{\nabla_\theta p_\theta(x,\tau)}{p_\theta(x,\tau)}.
\label{eq:app-tailrl-conditional-mean}
\end{equation}

\paragraphi{Assembling the chain}
Substituting \cref{eq:app-tailrl-per-term,eq:app-tailrl-share-identity,eq:app-tailrl-survive-prob,eq:app-tailrl-conditional-mean} into \cref{eq:app-tailrl-exchange} yields the truncated gradient of \cref{eq:tailrl-order-n-gradient}:
\begin{equation}
\mathbb{E}_{z_{1:N}}\!\left[\gtailrl(x)\right]
=
\int_0^1
\Bigl(1-\left(1-p_\theta(x,\tau)\right)^{N}\Bigr)
\frac{\nabla_\theta p_\theta(x,\tau)}{p_\theta(x,\tau)}
\,d\tau
=
\nabla_\theta J_{\mathrm{\tailrl{}}}^{(N)}(\theta;x).
\label{eq:app-tailrl-finite-unbiasedness}
\end{equation}
\end{proof}

The proof identifies the estimator as MaxRL run at every threshold on one shared group of rollouts: wherever any rollout clears a threshold the cap is inactive and the integrand of \cref{eq:app-tailrl-threshold-form} is the average score-function of the clearing rollouts, MaxRL's success-averaging rule for the threshold event, while at thresholds no rollout clears it is zero, MaxRL's rule for a group with no successes; its expectation carries MaxRL's order-$N$ truncated weight from \cref{eq:truncated-weight-function}, and the binary case, where a single threshold carries all the mass, recovers MaxRL exactly (\cref{cor:tailrl-binary-recovery}).

\paragraph{Loss reduction}
The estimator is a sum over the group.
A mean-reduced policy-gradient loss, which divides the group sum by $N$, must therefore use the coefficients $N\,\omega\bigl(r(x,z_i)\bigr)$, and after mean-centering $N\bigl(\omega\bigl(r(x,z_i)\bigr)-\bar\omega\bigr)$.
A sum-reduced loss uses $\omega\bigl(r(x,z_i)\bigr)$, and after mean-centering $\omega\bigl(r(x,z_i)\bigr)-\bar\omega$.

\subsection{Closed-Form Weights and Algorithm Correctness}
\label{app:tailrl-closed-form}

The next proposition proves the recurrence in \cref{eq:tailrl-weight-recurrence}.
Throughout, $z_1,\ldots,z_N$ is the training group of \cref{thm:tailrl-finite-unbiasedness}, and $r_{(1)}\le\cdots\le r_{(N)}$ denote the group rewards $r(x,z_1),\ldots,r(x,z_N)$ sorted increasingly, with $r_{(0)}:=0$.
The weight \cref{eq:app-tailrl-weight-restated} depends on a rollout only through its reward, so rollouts with equal rewards receive equal weights and $\omega\bigl(r_{(i)}\bigr)$ is well defined.

\begin{proposition}[Closed-form empirical weights]
\label{prop:tailrl-closed-form-weights}
For every $i=1,\ldots,N$,
\begin{equation}
\omega\bigl(r_{(i)}\bigr)
=
\sum_{k=1}^{i}
\frac{r_{(k)}-r_{(k-1)}}{N-k+1}.
\label{eq:app-tailrl-ascending-weights}
\end{equation}
Equivalently, all weights follow the recurrence
\begin{equation}
\omega\bigl(r_{(i)}\bigr)
=
\omega\bigl(r_{(i-1)}\bigr)
+
\frac{r_{(i)}-r_{(i-1)}}{N-i+1},
\qquad
\omega\bigl(r_{(0)}\bigr)=0.
\label{eq:app-tailrl-weight-recurrence}
\end{equation}
\end{proposition}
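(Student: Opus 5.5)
The plan is to evaluate the integral defining $\omega(r_{(i)})$ in \cref{eq:app-tailrl-weight-restated} directly, by splitting the threshold range $[0,r_{(i)})$ at the sorted rewards. The integrand $1/\sum_{j}\ind_{\{r_j>\tau\}}$ is piecewise constant in $\tau$: the count changes only when $\tau$ crosses one of the group rewards. So the whole computation reduces to identifying that constant on each interval.

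\textbf{Step 1: split the range.} Since $r_{(0)}=0\le r_{(1)}\le\cdots\le r_{(i)}$, I decompose
\[
[0,r_{(i)})=\bigcup_{k=1}^{i}[r_{(k-1)},r_{(k)}),
\]
where some intervals may be empty if rewards tie or if $r_{(1)}=0$. By additivity, $\omega(r_{(i)})$ is the sum of the integrals over these pieces.

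\textbf{Step 2: count on each piece.} For $\tau\in[r_{(k-1)},r_{(k)})$, I claim $\sum_j\ind_{\{r_j>\tau\}}=N-k+1$. Every sorted reward $r_{(m)}$ with $m\ge k$ satisfies $r_{(m)}\ge r_{(k)}>\tau$. Every $r_{(m)}$ with $m\le k-1$ satisfies $r_{(m)}\le r_{(k-1)}\le\tau$. Hence exactly $N-k+1$ rollouts clear $\tau$. This count is at least $1$ because $k\le i\le N$, so the denominator never vanishes.

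\textbf{Step 3: assemble.} On the $k$-th piece the integral therefore equals $(r_{(k)}-r_{(k-1)})/(N-k+1)$. Summing over $k=1,\dots,i$ gives \cref{eq:app-tailrl-ascending-weights}. The recurrence \cref{eq:app-tailrl-weight-recurrence} then follows by subtracting the partial sum for $i-1$ from the one for $i$; the empty sum at $i=0$ gives $\omega(r_{(0)})=0$.

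\textbf{Main obstacle: ties.} The only delicate point is well-definedness when rewards tie. If $r_{(i)}=r_{(i+1)}$, the formula must assign the same value to both positions, so that $\omega(r_{(i)})$ is a function of the reward value alone. This holds because the extra summand is $(r_{(i+1)}-r_{(i)})/(N-i)=0$. So the ascending formula gives equal weights to tied rollouts, consistent with the integral definition, which depends only on the reward value. I would state this explicitly, and note that the strict inequality $r_j>\tau$ combined with half-open intervals makes the interval endpoints irrelevant, since they form a set of measure zero.
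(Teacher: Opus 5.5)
Your proposal is correct and follows essentially the same route as the paper's proof. Both partition $[0,r_{(i)})$ into the intervals $[r_{(k-1)},r_{(k)})$, use strict clearance to get the count $N-k+1\ge 1$ on each piece, and sum. The recurrence then comes from differencing consecutive ranks, with tied rewards contributing empty intervals and the base case given by the empty range $[0,r_{(0)})$.
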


\begin{proof}
The empirical tail-probability distribution is a piecewise constant function.
Fix $k\in\{1,\ldots,N\}$ and a threshold $\tau\in[r_{(k-1)},r_{(k)})$.
The rewards $r_{(k)},\ldots,r_{(N)}$ are at least $r_{(k)}$ and therefore exceed $\tau$, while the rewards $r_{(1)},\ldots,r_{(k-1)}$ are at most $r_{(k-1)}\le\tau$ and fail, since clearance is strict:
\begin{equation}
\sum_{j=1}^{N}\ind_{\{r(x,z_j)>\tau\}}
=
N-k+1
\qquad
\text{for every }\tau\in[r_{(k-1)},r_{(k)}).
\label{eq:app-tailrl-piecewise-count}
\end{equation}
The intervals $[r_{(k-1)},r_{(k)})$ for $k=1,\ldots,i$ partition the integration range $[0,r_{(i)})$ of the weight, with tied rewards contributing empty intervals, so substituting \cref{eq:app-tailrl-piecewise-count} evaluates the integral interval by interval:
\begin{equation}
\omega\bigl(r_{(i)}\bigr)
=
\int_0^{r_{(i)}}
\frac{d\tau}{\sum_{j=1}^{N}\ind_{\{r(x,z_j)>\tau\}}}
=
\sum_{k=1}^{i}
\int_{r_{(k-1)}}^{r_{(k)}}
\frac{d\tau}{N-k+1}
=
\sum_{k=1}^{i}
\frac{r_{(k)}-r_{(k-1)}}{N-k+1},
\label{eq:app-tailrl-weight-partition}
\end{equation}
which is \cref{eq:app-tailrl-ascending-weights}; every denominator on the range is at least $N-i+1\geq 1$, so no ratio is ever indeterminate.
Subtracting \cref{eq:app-tailrl-ascending-weights} at ranks $i$ and $i-1$ leaves the single term $k=i$, which is the recurrence \cref{eq:app-tailrl-weight-recurrence}; the base case is the integral over the empty range $[0,r_{(0)})$.
\end{proof}

\paragraph{Scale of the weights}
Every weight is nonnegative, since every gap $r_{(k)}-r_{(k-1)}$ is nonnegative.
The total is the largest reward in the group: exchanging the order of the finite double sum counts each gap once per rank at or above it, and that count cancels its denominator, leaving a telescoping sum:
\begin{equation}
\sum_{i=1}^{N}\omega\bigl(r_{(i)}\bigr)
=
\sum_{k=1}^{N}\;\sum_{i=k}^{N}
\frac{r_{(k)}-r_{(k-1)}}{N-k+1}
=
\sum_{k=1}^{N}\bigl(r_{(k)}-r_{(k-1)}\bigr)
=
r_{(N)}
=
\max_{1\le j\le N} r(x,z_j).
\label{eq:app-tailrl-weight-total}
\end{equation}
For rewards in $[0,1]$, the pre-centering advantages $N\,\omega\bigl(r(x,z_i)\bigr)$ therefore sum to $N\,r_{(N)}\le N$ and each lies in $[0,\,N\,r_{(N)}]$, so the update scale is controlled by the best reward observed in the group.

\subsection{Effect of the Mean-Centering Baseline}
\label{app:tailrl-centering}

\begin{proposition}[The mean baseline lowers the truncation order by one]
\label{prop:tailrl-centered-target}
For $N\geq2$, let
\begin{equation}
\bar\omega
:=
\frac{1}{N}\sum_{i=1}^N\omega\bigl(r(x,z_i)\bigr),
\qquad
\gtailrlhat(x)
:=
\sum_{i=1}^N
\Bigl(\omega\bigl(r(x,z_i)\bigr)-\bar\omega\Bigr)\,S(x,z_i).
\label{eq:app-tailrl-centered-estimator}
\end{equation}
Then
\begin{equation}
\mathbb{E}_{z_{1:N}}\!\left[\gtailrlhat(x)\right]
=
\nabla_\theta J_{\mathrm{\tailrl{}}}^{(N-1)}(\theta;x).
\label{eq:app-tailrl-centered-target}
\end{equation}
\end{proposition}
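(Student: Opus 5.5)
Since the uncentered part is handled by \cref{thm:tailrl-finite-unbiasedness}, giving $\nabla_\theta J_{\mathrm{\tailrl{}}}^{(N)}$, the plan is to compute the expectation of the baseline term $\bar\omega\sum_i S(x,z_i)$. I will show that it equals $\nabla_\theta J_{\mathrm{\tailrl{}}}^{(N)}-\nabla_\theta J_{\mathrm{\tailrl{}}}^{(N-1)}$. By \cref{eq:app-tailrl-weight-total}, $\bar\omega=\frac1N\max_j r(x,z_j)$, so the baseline term is $\frac1N\max_j r_j\sum_i S_i$. By exchangeability its expectation is $\mathbb{E}[\max_j r_j\,S_1]$.

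Next I write the maximum in layer-cake form, $\max_j r_j=\int_0^1\bigl(1-\prod_j\ind_{\{r_j\le\tau\}}\bigr)d\tau$. Fubini is justified by the regularity of \cref{lem:tailrl-regularity}, since the integrand is bounded by $|S_1|$ and the score is integrable. The constant part contributes $\mathbb{E}[S_1]=0$. It remains to compute $-\int_0^1\mathbb{E}\bigl[\ind_{\{r_1\le\tau\}}S_1\bigr]\prod_{j\ge2}\Pr(r_j\le\tau)\,d\tau$, where independence factorizes the product.

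The two factors are evaluated separately. First, $\mathbb{E}[\ind_{\{r_1\le\tau\}}S_1]=\nabla_\theta(1-p_\theta)=-\nabla_\theta p_\theta(x,\tau)$ by the score identity \cref{eq:app-tailrl-score-identity}. Second, the remaining product equals $(1-p_\theta)^{N-1}$. Hence
\[
\mathbb{E}\Bigl[\bar\omega\sum_i S_i\Bigr]=\int_0^1(1-p_\theta(x,\tau))^{N-1}\nabla_\theta p_\theta(x,\tau)\,d\tau .
\]
This is exactly the $k=N$ term $\frac1N\nabla_\theta\text{Best-of-}N$ in \cref{eq:app-bok-grad}. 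Equivalently, it is the difference of the geometric-sum multipliers $w_N(p)-w_{N-1}(p)=(1-p)^{N-1}$ from \cref{eq:app-tailrl-finite-weight}.

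Subtracting this from the uncentered expectation and using \cref{prop:harmonic-grad} gives $\sum_{k=1}^{N-1}\frac1k\nabla_\theta\text{Best-of-}k=\nabla_\theta J_{\mathrm{\tailrl{}}}^{(N-1)}$, as claimed.

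The main obstacle is not any single calculation. It is making sure the baseline really depends on all rollouts symmetrically, so that the exchangeability step and the factorization across independent rollouts are legitimate. This works because $\bar\omega$ collapses to the group maximum. The remaining care is justifying the interchange of $\mathbb{E}$ and $\int d\tau$, which is handled by bounded rewards and \cref{as:tailrl-bounded}. The requirement $N\ge2$ enters only so that $J^{(N-1)}$ is defined. For $N=1$ the centered estimator is identically zero, consistent with the convention $J^{(0)}=0$.
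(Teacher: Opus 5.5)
Your proof is correct and follows the paper's argument. Like the paper, you collapse $\bar\omega$ to $\frac{1}{N}\max_j r(x,z_j)$ via the weight-total identity, identify the baseline term's expectation as $\frac{1}{N}\nabla_\theta\,\text{Best-of-}N(\theta;x)$, and subtract using $J_{\mathrm{\tailrl{}}}^{(N)}-J_{\mathrm{\tailrl{}}}^{(N-1)}=\frac{1}{N}\bigl(\text{Best-of-}N(\theta;x)-1\bigr)$. The only difference is that the paper gets the baseline expectation in one line from the product-policy score identity $\nabla_\theta\mathbb{E}[\max_j r_j]=\mathbb{E}[\max_j r_j\sum_i S_i]$, whereas you derive it by hand via exchangeability, the layer-cake form of the maximum, and independence, landing on $w_N-w_{N-1}=(1-p_\theta)^{N-1}$ (the integrability of $\|S\|$ your Fubini step needs comes from \cref{as:tailrl-smooth}, since \cref{as:tailrl-bounded} does not control the score on $\{r=0\}$).
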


\begin{proof}
By \cref{eq:app-tailrl-weight-total}, the weights of a group sum to its largest reward, so the centered estimator is the uncentered one minus a max-reward-weighted sum of scores:
\begin{equation}
\gtailrlhat(x)
=
\gtailrl(x)
-
\frac{1}{N}
\Bigl(\max_{1\le i\le N}r(x,z_i)\Bigr)
\sum_{i=1}^N S(x,z_i).
\label{eq:app-tailrl-centered-decomposition}
\end{equation}
The subtracted term is the score-function estimator of the Best-of-$N$ gradient: the group is one draw from the product policy, whose score-function is $\sum_{i=1}^N S(x,z_i)$, so
\begin{equation}
\nabla_\theta\,
\mathbb{E}_{z_{1:N}}\!\left[\max_{1\le i\le N}r(x,z_i)\right]
=
\mathbb{E}_{z_{1:N}}
\!\left[
\Bigl(\max_{1\le i\le N}r(x,z_i)\Bigr)
\sum_{i=1}^N S(x,z_i)
\right].
\label{eq:app-tailrl-best-n-gradient}
\end{equation}
Taking expectations in \cref{eq:app-tailrl-centered-decomposition} and invoking \cref{thm:tailrl-finite-unbiasedness} for the first term and \cref{eq:app-tailrl-best-n-gradient} for the second yields
\begin{equation}
\mathbb{E}_{z_{1:N}}\!\left[\gtailrlhat(x)\right]
=
\nabla_\theta J_{\mathrm{\tailrl{}}}^{(N)}(\theta;x)
-
\frac{1}{N}
\nabla_\theta\,
\mathbb{E}_{z_{1:N}}\!\left[\max_{1\le i\le N}r(x,z_i)\right].
\label{eq:app-tailrl-centered-expectation}
\end{equation}
It remains to identify the right side as the order-$(N-1)$ gradient.
The expected maximum is the Best-of-$N$ value, whose layer-cake form is \cref{eq:app-bok} at $k=N$:
\begin{equation}
\mathbb{E}_{z_{1:N}}\!\left[\max_{1\le i\le N}r(x,z_i)\right]
=
\int_0^1
\Bigl[1-\left(1-p_\theta(x,\tau)\right)^N\Bigr]d\tau.
\label{eq:app-tailrl-best-n-layer-cake}
\end{equation}
Meanwhile, the definitions of two consecutive truncations differ in one term of the inner sum:
\begin{equation}
J_{\mathrm{\tailrl{}}}^{(N)}(\theta;x)
-
J_{\mathrm{\tailrl{}}}^{(N-1)}(\theta;x)
=
-\frac{1}{N}
\int_0^1
\left(1-p_\theta(x,\tau)\right)^N d\tau.
\label{eq:app-tailrl-consecutive-truncations}
\end{equation}
Combining \cref{eq:app-tailrl-best-n-layer-cake,eq:app-tailrl-consecutive-truncations} shows the two sides differ by a constant:
\begin{equation}
J_{\mathrm{\tailrl{}}}^{(N)}(\theta;x)
-
J_{\mathrm{\tailrl{}}}^{(N-1)}(\theta;x)
=
\frac{1}{N}\,
\mathbb{E}_{z_{1:N}}\!\left[\max_{1\le i\le N}r(x,z_i)\right]
-
\frac{1}{N}.
\label{eq:app-tailrl-truncation-best-n}
\end{equation}
The final term is constant in $\theta$, so differentiating gives
\begin{equation}
\nabla_\theta J_{\mathrm{\tailrl{}}}^{(N)}(\theta;x)
-
\frac{1}{N}
\nabla_\theta\,
\mathbb{E}_{z_{1:N}}\!\left[\max_{1\le i\le N}r(x,z_i)\right]
=
\nabla_\theta J_{\mathrm{\tailrl{}}}^{(N-1)}(\theta;x).
\label{eq:app-tailrl-centered-gradient-identity}
\end{equation}
Substituting \cref{eq:app-tailrl-centered-gradient-identity} into \cref{eq:app-tailrl-centered-expectation} proves the result.
\end{proof}

%%%%%%%%%%%%%%%%%%%%%%%%%%%%

\subsection{Estimator-Level Recovery of MaxRL on Binary Rewards}
\label{app:tailrl-binary-recovery}
\begin{corollary}[Binary-reward recovery]
\label{cor:tailrl-binary-recovery}
Suppose $r_i\in\{0,1\}$ and let
$M:=\sum_{i=1}^N\ind\{r_i=1\}$. If $M>0$, then every successful
rollout receives weight $1/M$ and every unsuccessful rollout receives weight
zero. Hence the uncentered \tailrl{} estimator reduces to
\begin{equation}
\gtailrl(x)
=
\frac{1}{M}
\sum_{i:r_i=1}S_i,
\label{eq:app-tailrl-binary-uncentered}
\end{equation}
which is the MaxRL estimator. After mean-centering, successful rollouts have
weight $1/M-1/N$ and unsuccessful rollouts have weight $-1/N$. If $M=0$,
all uncentered and centered weights are zero.
\end{corollary}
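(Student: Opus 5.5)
The corollary reduces to evaluating the closed-form weights of \cref{prop:tailrl-closed-form-weights} on a binary reward vector and then substituting them into \cref{eq:tailrl-finite-estimator}. Sort the rewards. With $M$ successes, the sorted vector is $r_{(1)}=\cdots=r_{(N-M)}=0$ and $r_{(N-M+1)}=\cdots=r_{(N)}=1$.

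\textbf{Uncentered weights.} In \cref{eq:app-tailrl-ascending-weights}, every gap $r_{(k)}-r_{(k-1)}$ with $k\le N-M$ vanishes, because $r_{(0)}=0$ and the first $N-M$ sorted rewards are zero. So each failing rollout gets weight $0$. For $k\geq N-M+1$, only one gap is nonzero: the one at $k=N-M+1$, where the reward jumps from $0$ to $1$. Its denominator is $N-k+1=M$, so each success gets weight $1/M$.

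One can also read this directly from the integral definition \cref{eq:tailrl-finite-weight}. For every $\tau\in[0,1)$, exactly $M$ rollouts clear $\tau$. Hence a success accumulates $\int_0^1 d\tau/M=1/M$, and a failure integrates over the empty range $[0,0)$. Plugging these weights into \cref{eq:tailrl-finite-estimator} gives $\frac{1}{M}\sum_{i:r_i=1}S_i$, which is the MaxRL estimator.

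\textbf{Centered weights.} By \cref{eq:app-tailrl-weight-total}, the weights sum to $\max_j r_j=1$ when $M>0$, so $\bar\omega=1/N$. Subtracting this gives $1/M-1/N$ for successes and $-1/N$ for failures.

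\textbf{The case $M=0$.} All rewards are $0$, so every gap vanishes and all weights are $0$. The maximum reward is $0$, so $\bar\omega=0$ and the centered weights are also $0$.

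\textbf{Main obstacle.} The only delicate point is handling ties correctly in the sorted recurrence: tied rewards contribute empty intervals, as already noted in the proof of \cref{prop:tailrl-closed-form-weights}. With that in place, all tied successes receive the same weight $\omega(r_{(N)})$. Everything else is direct substitution.
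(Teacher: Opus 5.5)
Your proof is correct and follows the paper's argument. The paper reads $\omega(1)=\int_0^1 d\tau/M=1/M$ and $\omega(0)=0$ directly from the integral definition, exactly as in your second paragraph, and then obtains $\bar\omega=M\cdot\frac{1}{M}\cdot\frac{1}{N}=\frac{1}{N}$ by direct arithmetic; your use of the sorted closed form and the total-weight identity \cref{eq:app-tailrl-weight-total} is an equally valid way of doing the same substitution.
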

\begin{proof}
For binary rewards with $\sum_{i=1}^N\ind\{r_i=1\}>0$,
\[
\omega(1)=\int_0^1\frac{d\tau}{\sum_{i=1}^N\ind\{r_i=1\}}=\frac{1}{\sum_{i=1}^N\ind\{r_i=1\}}
\]
and $\omega(0)=0$. The uncentered result follows immediately. The mean weight is
\[
\bar\omega=\left(\sum_{i=1}^N\ind\{r_i=1\}\right)\frac{1}{\sum_{i=1}^N\ind\{r_i=1\}}\cdot\frac{1}{N}=\frac{1}{N},
\]
which gives the centered weights. When $\sum_{i=1}^N\ind\{r_i=1\}=0$,
every reward and hence every weight is zero.
\end{proof}

\section{TailRL on a General Reward Range}
\label{app:reward-range}

The main paper assumes rewards in $[0,1]$ only to simplify notation.
The same objective applies to any bounded reward range.

Suppose the reward function returns rewards in an arbitrary bounded range, $r(x,z)\in[r_{\mathrm{min}},r_{\mathrm{max}}]$ with $r_{\mathrm{min}}<r_{\mathrm{max}}$.
For thresholds $\tau\in[r_{\mathrm{min}},r_{\mathrm{max}})$, define the tail-probability as before:
$p_\theta(x,\tau):=\Pr_{z\sim\pi_\theta(\cdot\mid x)}[r(x,z)>\tau]$.
The TailRL objective averages the log-tail-probability across this range:
\begin{equation}
J_{\mathrm{TailRL}}^{[r_{\mathrm{min}},r_{\mathrm{max}}]}(\theta;x)
:=
\frac{1}{r_{\mathrm{max}}-r_{\mathrm{min}}}
\int_{r_{\mathrm{min}}}^{r_{\mathrm{max}}} \log p_\theta(x,\tau)\,d\tau .
\label{eq:tailrl-general-range}
\end{equation}
The factor $1/(r_{\mathrm{max}}-r_{\mathrm{min}})$ makes this the expected log-tail-probability under a threshold drawn uniformly from $[r_{\mathrm{min}},r_{\mathrm{max}}]$.
It also prevents the scale of the objective from depending on the units of the reward.

To recover the unit-interval objective, substitute $\tau = r_{\mathrm{min}}+(r_{\mathrm{max}}-r_{\mathrm{min}})\,u$ with $u\in[0,1)$ and define the rescaled reward $\tilde r := (r-r_{\mathrm{min}})/(r_{\mathrm{max}}-r_{\mathrm{min}})\in[0,1]$.
Then
\begin{equation}
J_{\mathrm{TailRL}}^{[r_{\mathrm{min}},r_{\mathrm{max}}]}(\theta;x)
=
\int_0^1
\log
\Pr_{z\sim\pi_\theta(\cdot\mid x)}\!\bigl[\tilde r(x,z)>u\bigr]
\,du
=
J_{\mathrm{TailRL}}(\theta;\,x)
\quad
\text{for the reward }\tilde r .
\label{eq:tailrl-affine-invariance}
\end{equation}
Thus, applying TailRL to rewards in $[r_{\mathrm{min}},r_{\mathrm{max}}]$ is equivalent to applying the unit-interval objective to $\tilde r$.
Shifting or rescaling the reward only relabels its thresholds, so every result in the paper carries over directly.

The finite rollout estimator behaves in the same way.
For sorted rewards, consecutive weights satisfy
\begin{equation}
\omega_{(i)}-\omega_{(i-1)}
=
\frac{r_{(i)}-r_{(i-1)}}{N-i+1},
\label{eq:general-range-gaps}
\end{equation}
so an affine rescaling of the rewards multiplies every weight by $r_{\mathrm{max}}-r_{\mathrm{min}}$.
The normalization in \cref{eq:tailrl-general-range} cancels this factor.
Without the normalization, the update direction remains unchanged and only its magnitude is rescaled.
The estimator can therefore operate on raw rewards from any bounded range without changing the algorithm.

\section{Connection to Ordinal Cross-Entropy}
\label{app:ordinal-ce}

Threshold decompositions turn an ordinal or continuous target into a family of binary events, one event per threshold, and fit each event with a binary classifier \citep{mccullagh1980regression,frank2001simple,li2006ordinal,niu2016ordinal,cao2020rank}. This appendix makes the relation to \tailrl{} exact. The population \tailrl{} objective is the ordinal cross-entropy objective evaluated at the maximal target $r=1$, and the finite rollout \tailrl{} procedure is the finite sample estimation of that objective.

Fix an input $x$ and recall the tail-probability $p_\theta(x,\tau) = \Pr_{z\sim\pi_\theta(\cdot\mid x)}\left(r(x,z) > \tau\right)$. For a target level $t \in [0,1]$, define the threshold-decomposed cross-entropy between the point target $t$ and the model's family of threshold events,
\begin{equation}
\mathrm{CE}(t;\theta,x) := -\int_0^1 \Big[\, \ind_{\{t>\tau\}}\,\log p_\theta(x,\tau) \;+\; \ind_{\{t\le\tau\}}\,\log\big(1-p_\theta(x,\tau)\big) \,\Big]\, d\tau .
\label{eq:ordinal-ce-def}
\end{equation}
For each fixed $\tau$ the bracket is the binary cross-entropy of the event $\{r>\tau\}$ against the label $\ind_{\{t>\tau\}}$, and the integral weights all thresholds by the same uniform measure that defines the tail-likelihood.

Setting the target to the maximal reward $t=1$ makes $\ind_{\{t>\tau\}}=1$ for every $\tau\in[0,1)$, so the second term in \cref{eq:ordinal-ce-def} vanishes on a set of full measure and
\begin{equation}
\mathrm{CE}(1;\theta,x) \;=\; -\int_0^1 \log p_\theta(x,\tau)\, d\tau \;=\; -\,J_{\mathrm{\tailrl{}}}(\theta;x).
\label{eq:ordinal-ce-collapse}
\end{equation}
Maximizing the tail-likelihood is therefore exactly minimizing the ordinal cross-entropy against the ideal target $r=1$. When the reward is binary the tail-probability is constant in $\tau$, the integral collapses to a single binary cross-entropy against the label $1$, and \cref{eq:ordinal-ce-collapse} reduces to the MaxRL objective $\log q_\theta(x)$, consistent with \cref{sec:tailrl-binary-recovery}.

The correspondence extends to the finite rollout procedure. The order-$N$ objective $J^{(N)}_{\mathrm{\tailrl{}}}$ of \cref{eq:tailrl-harmonic-family} truncates the same integral at the resolution a group of $N$ rollouts can support, and the estimator of \cref{thm:tailrl-finite-unbiasedness} is unbiased for its gradient using only the $N$ sampled rewards. Training with \tailrl{} on $N$ rollouts is in this sense the finite sample estimation of the ordinal cross-entropy objective at target $r=1$: the rollouts play the role of the samples from which the threshold events are estimated, and the truncation order grows with the sample size, recovering \cref{eq:ordinal-ce-collapse} as $N\to\infty$.

\section{Advantage Functions of GRPO, RLOO, and \tailrl{}}
\label{app:advantage-comparison}

All three methods share the same critic-free template: draw $N$ rollouts, map their rewards to advantages, and form the update $\sum_{i=1}^{N} A_i\,S(x,z_i)$.
They differ only in that map, which we record here.
For convenience of notation, we write $r_i:=r(x,z_i)$ for the reward of rollout $z_i$.

\paragraph{RLOO}
RLOO subtracts the leave-one-out mean of the other rollouts,
\begin{equation}
A_i^{(\mathrm{RLOO})}
=
r_i - \frac{1}{N-1}\sum_{j \ne i} r_j .
\end{equation}
The baseline is independent of rollout $i$, so the update is an unbiased estimator of the expected reward gradient.
The advantage is affine in $r_i$ with unit slope: a rollout is promoted by its raw margin over the rest of the group.

\paragraph{GRPO}
GRPO standardizes within the group,
\begin{equation}
A_i^{(\mathrm{GRPO})}
=
\frac{r_i - \bar r}{\sigma(r) + \epsilon},
\qquad
\bar r = \frac{1}{N}\sum_{j=1}^{N} r_j ,
\qquad
\sigma(r)^2 = \frac{1}{N}\sum_{j=1}^{N} (r_j - \bar r)^2 .
\end{equation}
Dividing by the group standard deviation makes the update invariant to affine rescaling of the reward.
The same divisor applies to every rollout in the group, so it changes the size of an input's update rather than the relative weight of rewards within it.

\paragraph{\tailrl{}}
\tailrl{} assigns weights proportional to inverse tail-probability's integral and centers it,
\begin{equation}
A_i^{(\mathrm{\tailrl{}})}
=
\omega(r_i) - \bar\omega,
\qquad
\omega(r_i) = \int_0^{r_i} \frac{d\tau}{\sum_{j=1}^{N} \ind_{\{r_j > \tau\}}},
\qquad
\bar\omega = \frac{1}{N}\sum_{j=1}^{N}\omega(r_j) .
\end{equation}
The integrand is the reciprocal of the number of rollouts attaining a reward above a given reward threshold, so a level cleared by one rollout out of $N$ contributes $N$ times the weight per unit of reward of one cleared by all.
The map is computed in closed form by the recurrence in \cref{eq:tailrl-weight-recurrence} and reduces to the MaxRL advantages when the reward is binary (\cref{app:tailrl-binary-recovery}).

\section{Pass@\texorpdfstring{$k$}{k} and Best-of-\texorpdfstring{$k$}{k} Empirical Calculation}
\label{app:passk-bestk}

Pass@$k$ is used for binary rewards and measures the probability that at least one rollout succeeds.
Best-of-$k$ is used for continuous rewards and measures the  expected reward of the highest-scoring rollout. 
Both metrics evaluate a policy when we sample $k$ rollouts and keep the best one.
The two metrics are identical for binary rewards, and both improve or remain unchanged as $k$ increases.
When we have $K$ evaluation rollouts to estimate Pass@$k$ or Best-of-$k$ with $K > k$, we use all $K$ rollouts to compute their unbiased estimators below.

\subsection{Pass@\texorpdfstring{$k$}{k}}
\label{app:passk-calculation}
If $M$ of the $K$ sampled rollouts succeed, the unbiased estimator of \citet{chen2021codex} is
\begin{equation}
\widehat{\mathrm{Pass@}k}
=
1 - \frac{\binom{K-M}{k}}{\binom{K}{k}} .
\label{eq:app-passk-estimator}
\end{equation}
The ratio is the probability that a uniformly random size-$k$ subset of the $K$ rollouts avoids all $M$ successes.

\subsection{Best-of-\texorpdfstring{$k$}{k}}
\label{app:bestk-calculation}
Best-of-$k$ is defined as in \cref{eq:app-bok}
\begin{equation}
\text{Best-of-}k(\theta; x)
=
\mathbb{E}\!\left[\max_{1 \le i \le k} r(x,z_i)\right]
=
\int_0^1 \left[1 - \left(1 - p_\theta(x,\tau)\right)^{k}\right] d\tau ,
\label{eq:app-bok-restated}
\end{equation}
the second form by the layer-cake identity applied to the maximum, which reduces to Pass@$k$ when the reward is binary.
Sorting the $K$ sampled rewards increasingly as $r_{(1)} \le \cdots \le r_{(K)}$, the unbiased estimator weights each order statistic by the probability that it is the maximum of a uniformly random size-$k$ subset:
\begin{equation}
\widehat{\text{Best-of-}k}
=
\sum_{i=k}^{K} \frac{\binom{i-1}{k-1}}{\binom{K}{k}}\, r_{(i)} .
\label{eq:app-bok-estimator}
\end{equation}

\section{ImageNet Object Localization}
\label{app:imagenet}

We study single-object localization on ImageNet-scale data \citep{russakovsky2015imagenetlargescalevisual}.
Given an image, the policy predicts a bounding box and receives its intersection-over-union (IoU) with the ground-truth box as a reward in $[0,1]$.

The policy uses a pretrained ResNet-50 backbone \citep{he2015deepresiduallearningimage} with four categorical heads, one for each box coordinate.
Each head contains $50$ uniformly spaced bins, and a rollout samples one bin from each head to form a box. All methods train for $30$ epochs with Adam, a learning rate of $5\times10^{-4}$ with warmup, and a batch size of $128$.
They differ only in how they compute the advantages.
We train \tailrl{} with $N\in\{16,64,256,1024\}$ rollouts and separately optimize the exact population-level objective described in \cref{app:imagenet-population}.

For comparison, we also train supervised models directly on the ground-truth coordinates using MSE, L1, GIoU \citep{rezatofighi2019giou}, and L1+GIoU losses (\cref{app:imagenet-losses}).

We evaluate on held-out images using CorLoc@$\delta$, mean IoU, and Best-of-$1024$ IoU.
CorLoc@$\delta$ is the fraction of images whose greedy prediction exceeds IoU $\delta$.
Best-of-$1024$ IoU is the highest IoU among $1024$ sampled boxes for each image.
We report results over $3$ seeds.
The large-$N$ gradient measurements in \cref{fig:imagenet-gradients} use one seed, as noted in the figure.
\Cref{tab:hp-imagenet} summarizes the full configuration.

\begin{table}[!ht]
  \caption{Training hyperparameters for ImageNet object localization.}
  \label{tab:hp-imagenet}
  \centering
  \begin{tcolorbox}[enhanced, hbox,
    colback=teal!5, colframe=teal, coltext=black, coltitle=white,
    fonttitle=\bfseries, arc=1mm, boxrule=1pt, boxsep=1pt,
    left=2pt, right=2pt, top=0pt, bottom=2pt,
    toptitle=3pt, bottomtitle=3pt, center]
    \small
    \renewcommand{\arraystretch}{1.2}
    \setlength{\tabcolsep}{10pt}
\begin{tabular}{l|l|l|l}
  \multicolumn{1}{l|}{\textbf{Parameter}} & \multicolumn{1}{l|}{\textbf{Value}} & \multicolumn{1}{l|}{\textbf{Parameter}} & \multicolumn{1}{l}{\textbf{Value}} \\
  \hline
  Backbone & ResNet-50 (pretrained) & Policy head & $4$ heads $\times$ $50$ bins \\
  Reward & IoU, $[0, 1]$ & Optimizer & Adam \\
  Learning rate & $5 \times 10^{-4}$, warmup & Batch size & $128$ \\
  Epochs & $30$ & Rollouts $N$ & $\{16, 64, 256, 1024\}$ \\
  Population budget & $50^{4}$ & Baselines & GRPO, RLOO \\
  Anchors & MSE, L1, GIoU, L1+GIoU & Eval samples & $1{,}024$ per image \\
  CorLoc decoding & greedy & Seeds & $3$  \\
\end{tabular}
  \end{tcolorbox}
\end{table}

%%%%%%%%%%%%%%%%%%%%%%%%%%%%%%%%%

\subsection{Supervised Anchor Losses}
\label{app:imagenet-losses}
Let $b = (x_1,y_1,x_2,y_2)$ denote the predicted box and $b^\star$ the ground-truth box.
Both use normalized corner coordinates.
The L1 loss measures the absolute error across the four coordinates:
\begin{equation}
\mathcal{L}_{\mathrm{L1}}(b, b^\star)
=
\left\| b - b^\star \right\|_1
=
\sum_{j=1}^{4} \left\| b_j - b^\star_j \right\|_1 .
\end{equation}
The GIoU loss \citep{rezatofighi2019giou} extends IoU with a penalty based on the smallest box enclosing both boxes. We use $|b|$ to denote the area enclosed by the bounding box $b$.
Let $\mathrm{I} = |b \cap b^\star|$ be the intersection area, $\mathrm{U} = |b\cup b^\star|$ the union area, and $c$ the smallest axis-aligned box enclosing both $b$ and $b^{\star}$ and has an area $|c|$:
\begin{equation}
\mathcal{L}_{\mathrm{GIoU}}(b, b^\star)
=
1 - \underbrace{\frac{\mathrm{I}}{\mathrm{U}}}_{\mathrm{IoU}}
+ \frac{|c| - \mathrm{U}}{|c|} .
\end{equation}
The final term measures the empty space inside the enclosing box.
Unlike IoU, it remains informative when the predicted and ground-truth boxes do not overlap.
The combined loss follows DETR \citep{carion2020detr}:
\begin{equation}
\mathcal{L}_{\mathrm{L1+GIoU}}(b, b^\star)
=
\lambda_{\mathrm{L1}} \mathcal{L}_{\mathrm{L1}}(b, b^\star)
+
\lambda_{\mathrm{GIoU}} \mathcal{L}_{\mathrm{GIoU}}(b, b^\star) .
\end{equation}
We use the DETR weights $\lambda_{\mathrm{L1}} = 5$ and $\lambda_{\mathrm{GIoU}} = 2$.
All three supervised anchors make one deterministic prediction and use no rollouts.
Their greedy, mean, and Best-of-$k$ outputs are therefore identical.

%%%%%%%%%%%%%%%%%%%%%%%%%%%%%%%%%

\subsection{Computing the Population-Level Objective}
\label{app:imagenet-population}

Computing the population objective $J_{\mathrm{\tailrl{}}}(\theta;x) = \int_0^1 \log p_\theta(x,\tau) d\tau$ usually requires the unknown tail-probability $p_\theta(x,\tau)$.
In this setting, we can compute it exactly because the policy has a finite output space. The policy predicts each of the four box coordinates with an independent categorical head over $K = 50$ uniformly spaced bins.
A rollout samples one bin from each head, so its probability factorizes as
\begin{equation}
\pi_\theta(b \mid x) = \prod_{j=1}^{4} \pi_\theta^{(j)}(b_j \mid x), \qquad b \in \mathcal{B}, \quad |\mathcal{B}| = K^4 = 6{,}250{,}000 .
\end{equation}
Each box $b$ has a deterministic reward $r(x,b)$ given by its IoU with the ground-truth box.
We can therefore compute $\left(\pi_\theta(b \mid x), r(x,b)\right)$ for every box in $\mathcal{B}$.
The tail-probability at a threshold is the total probability of all boxes whose rewards exceed that threshold:
\begin{equation}
p_\theta(x,\tau) = \sum_{b \in \mathcal{B}} \pi_\theta(b \mid x) \ind_{\{r(x,b) > \tau\}} 
\end{equation}
The finite set $\mathcal{B}$ produces a finite set of reward values.
The function $\tau \mapsto p_\theta(x,\tau)$ is constant between consecutive reward values, so the threshold integral becomes a finite sum.
This calculation introduces no discretization error beyond the original coordinate bins. In practice, we use the independence of the four heads to avoid explicitly enumerating all $K^4$ boxes.
\Cref{alg:population-objective} forms the joint probabilities from the four marginal distributions and accumulates them over the sorted reward values.

\begin{algorithm}[!ht]
\caption{Population-level \tailrl{} objective and gradient for one image}
\label{alg:population-objective}
\begin{algorithmic}[1]
\Require Head distributions $\pi_\theta^{(j)}(\cdot \mid x)$ over $K$ bins, $j = 1,\ldots,4$; ground-truth box $b^\star$
\State $\mathcal{B} \gets \{1,\ldots,K\}^{4}$ \algcomment{every box the policy can emit}
\For{$b = (b_1,\ldots,b_4) \in \mathcal{B}$}
    \State $\pi_\theta(b \mid x) \gets \prod_{j=1}^{4} \pi_\theta^{(j)}(b_j \mid x)$ \algcomment{the heads are independent}
    \State $r(x,b) \gets \mathrm{IoU}(b, b^\star)$
\EndFor
\State $u_1 < \cdots < u_m \gets$ the distinct values of $\{\, r(x,b) : b \in \mathcal{B} \,\}$, sorted increasingly
\For{$i = 1,\ldots,m$}
    \State $q_i \gets \sum_{b \,:\, r(x,b) = u_i} \pi_\theta(b \mid x)$ \algcomment{reward distribution}
\EndFor
\State $p_\theta(x, u_m) \gets 0$ \algcomment{no box exceeds the largest attainable reward}
\For{$i = m-1,\ldots,1$}
    \State $p_\theta(x, u_i) \gets p_\theta(x, u_{i+1}) + q_{i+1}$ \algcomment{$p_\theta(x,\tau) = p_\theta(x,u_i)$ for $\tau \in [u_i, u_{i+1})$}
\EndFor
\State $J_{\mathrm{\tailrl{}}}(\theta;x) \gets \sum_{i=1}^{m-1} (u_{i+1} - u_i) \log p_\theta(x, u_i)$
\State $\nabla_\theta J_{\mathrm{\tailrl{}}}(\theta;x) \gets \sum_{i=1}^{m-1} (u_{i+1} - u_i) \dfrac{\nabla_\theta\, p_\theta(x, u_i)}{p_\theta(x, u_i)}$
\Ensure $J_{\mathrm{\tailrl{}}}(\theta;x)$ and $\nabla_\theta J_{\mathrm{\tailrl{}}}(\theta;x)$, exact, with no sampling error
\end{algorithmic}
\end{algorithm}

Substituting the exact $p_\theta(x,\tau)$ into \cref{eq:tailrl-gradient-over-thresholds} gives the gradient used by the population variant.
This is an exact population gradient, not a large-$N$ approximation.
We obtain it by automatically differentiating through the probability accumulation because each tail-probability value is a differentiable function of the prediction head probabilities.

\subsection{Additional Results}
\label{app:imagenet-additional}
\paragraph{Training dynamics}
\Cref{fig:app-imagenet-dynamics} shows training from three perspectives.
The left panel reports final held-out CorLoc@$0.5$ across training rollout budgets $N$.
\tailrl{} improves steadily as $N$ increases and approaches the exact population objective.
GRPO improves only slightly, while RLOO does not improve with additional rollouts.
At every budget, both baselines perform worse than \tailrl{} at its smallest reported budget. 

The center panel shows the mean gradient norm by epoch at $N=1024$.
GRPO produces the largest gradients, while RLOO produces the smallest.
However, GRPO also achieves the lowest final accuracy, showing that larger gradients do not explain the performance differences in \cref{fig:imagenet-rl-curves}.The right panel shows each method's training loss over RL steps using a logarithmic step axis.
Because the methods optimize different objectives, their loss values are not directly comparable.
Nevertheless, all three losses flatten near zero late in training.

\begin{figure}[!ht]
\centering
\includegraphics[width=\textwidth]{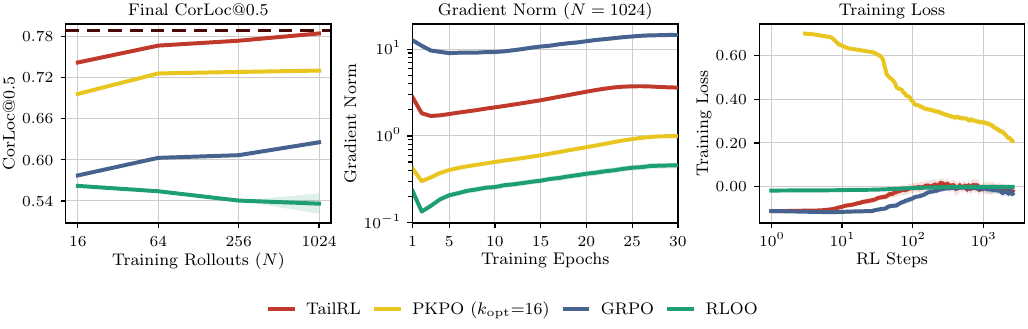}
\caption{\textbf{(ImageNet object localization)}
Left: final validation CorLoc@$0.5$ against the training rollout budget $N$ for all three methods, with the population-level objective as a dashed reference.
Center: mean gradient norm by epoch at $N=1024$, log scale.
Right: each method's training loss against RL steps, logarithmic step axis; each method optimizes its own surrogate objective.
Means over the seeds of \cref{fig:imagenet-rl-curves}.}
\label{fig:app-imagenet-dynamics}
\end{figure}

\begin{figure}[!ht]
\centering
\includegraphics[width=\textwidth]{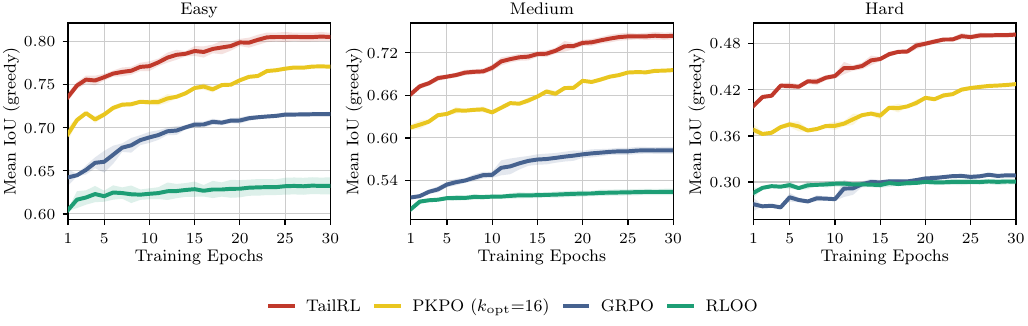}
\caption{\textbf{(ImageNet object localization)} Mean IoU by difficulty band for the three methods at $N=1024$.
Bands partition the validation set by the area of the ground-truth box as a fraction of the image: easy is $0.30$ to $0.70$ ($19{,}628$ images), medium is $0.10$ to $0.30$ or $0.70$ to $0.95$ ($19{,}977$), and hard is below $0.10$ or above $0.95$ ($10{,}395$).
Curves are means with standard-error bands over $3$ seeds.}
\label{fig:imagenet-difficulty}
\end{figure}

\Cref{fig:imagenet-difficulty} stratifies validation performance by difficulty band.
The bands are defined by the area of the ground-truth box as a fraction of the image, since a box that fills a moderate part of the frame is far easier to localize than a very small or a nearly full-frame one.
An image is easy when that area lies between $0.30$ and $0.70$, medium when it lies between $0.10$ and $0.30$ or between $0.70$ and $0.95$, and hard when it falls below $0.10$ or above $0.95$; the three bands partition the $50{,}000$ validation images into $19{,}628$, $19{,}977$, and $10{,}395$.
The gap between \tailrl{} and the stronger expected reward baseline widens as the band gets harder: roughly $0.09$ IoU on the easy band, $0.16$ on medium, and $0.18$ on hard, where both baselines sit near $0.30$ and \tailrl{} reaches $0.49$.
Hard inputs are those on which high-reward rollouts are rare, and rarity is where the inverse-probability weighting of the tail-likelihood concentrates its effort, so the ordering of the gaps matches the mechanism the objective is built around.

\subsection{The Effect of Binarizing the Reward}
\label{app:imagenet-binarization}

A continuous reward contains more information than binary reward.
Binarizing it with a threshold removes this information by treating all outputs above or below the threshold as equally goor or bad.
We test the cost of this lost signal by binarizing IoU at $\{0.5,0.75\}$ and training MaxRL on each binary reward.
We compare both variants with \tailrl{} trained on the original continuous reward across four rollout budgets, with all other settings held fixed.

\begin{figure}[!ht]
\centering
\includegraphics[width=\textwidth]{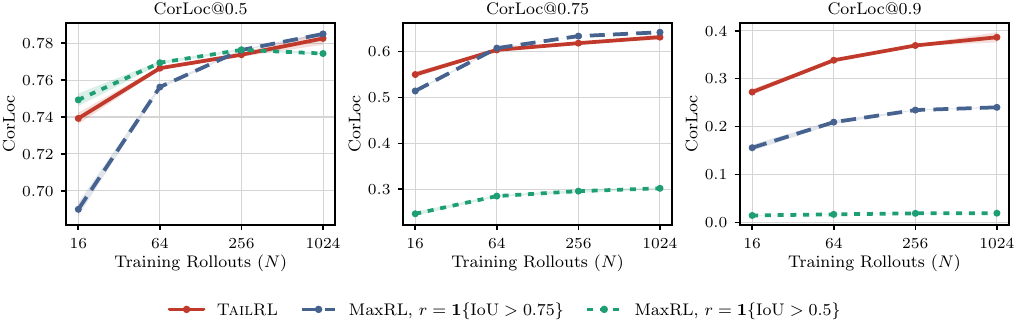}
\caption{\textbf{(ImageNet object localization)} Effect of binarizing the IoU reward on that task of ImageNet object localization.
We compare \tailrl{} trained on continuous IoU with MaxRL trained on rewards binarized at $\mathrm{IoU}>0.5$ and $\mathrm{IoU}>0.75$.
All methods perform similarly at the easiest evaluation threshold, but the binarized methods degrade at CorLoc above their binarization threshold.}
\label{fig:imagenet-binary-maxrl}
\end{figure}

Binarization produces policies that perform well near their chosen threshold but poorly at higher quality levels (\cref{fig:imagenet-binary-maxrl}).
At CorLoc@$0.5$, all methods reach approximately $0.78$.
At CorLoc@$0.75$, MaxRL trained with the $0.5$ threshold reaches only $0.30$, compared with approximately $0.63$ for \tailrl{} and MaxRL trained with the $0.75$ threshold.
At CorLoc@$0.9$, the gap widens further: \tailrl{} reaches approximately $0.39$, compared with $0.24$ for the $0.75$ threshold and $0.02$ for the $0.5$ threshold.

Once a rollout has an IoU above its binarization threshold, the binary reward no longer distinguishes a barely acceptable box from a nearly perfect one.
This lost ordering produces both qualitative specialization to the chosen threshold and large quantitative losses above it.
Increasing the rollout budget cannot recover information removed from the reward, so the binarized methods flatten while \tailrl{} continues to improve at stricter thresholds.
\tailrl{} avoids choosing a specific threshold and learns from the full continuous reward, allowing one policy to perform well across all evaluated quality levels.

\section{Text-Maze Navigation}
\label{app:maze}

We study navigation in $17 \times 17$ gridworld mazes represented as text.
Given a maze, the policy generates a sequence of movement tokens.
The continuous reward measures whether the sequence is well formed, how close it ends to the goal, and how its length compares with the shortest path.
A rollout that reaches the goal along a shortest path receives a reward of $1$.

The policy is a $3$M-parameter decoder-only transformer trained from scratch.
We first pretrain it through supervised learning on $1.3$M mazes with up to $16$ annotated paths per maze.
To test how each RL method behaves under different initial policy qualities, we retain seven pretraining checkpoints.
Their held-out shortest-path rates range from $0.012\%$, or roughly one success in ten thousand attempts, to $0.83\%$.

Starting from each checkpoint, we train every method for $5{,}000$ steps using $N = 16$ rollouts for each of $256$ mazes per step.
Training is fully on-policy, uses no KL regularization, and is repeated over three seeds.
We report the shortest-path rate, defined as the probability that a single sampled rollout reaches the goal using an optimal-length path.
We estimate this rate from $64$ rollouts on each of $256$ held-out mazes.

\subsection{Dataset and Pretraining}
\label{app:maze-data}

Each maze is a $17 \times 17$ grid with the start and goal at opposite corners.
We generate a perfect maze using Prim's algorithm and then remove a uniformly sampled $5$ to $30\%$ of its interior walls.
Removing these walls creates alternative routes and varies the number of valid paths across mazes.

For pretraining, we pair each maze with up to $16$ goal-reaching paths selected by \cref{alg:maze-corpus}.
We first find the shortest-path length $L^{\star}$ using breadth-first search.
A budgeted depth-first search then finds simple paths shorter than $\mathrm{ub}=60$.
To prevent common path lengths from dominating the corpus, reservoir sampling retains at most four paths of each length.

We select paths that span a range of solution qualities.
Each candidate path $P$ receives a reward, where a shortest path receives $1$ and longer paths receive smaller values.
We divide reward $ r \in (0,1]$ into $16$ equal intervals and select at most one path from each nonempty interval.
When an interval contains multiple candidates, we select the path that overlaps least with those already chosen.
This produces paths that vary in both length and spatial route.
We train the policy on the resulting maze-path pairs using next-token prediction.

\begin{algorithm}[!ht]
\caption{Selecting pretraining paths for one maze}
\label{alg:maze-corpus}
\begin{algorithmic}[1]
\Require Maze $m$; length limit $\mathrm{ub}=60$; target number of paths $n_{\mathrm{paths}}=16$

\State $\mathcal{C}\gets$ all simple start-to-goal paths found by breadth-first search with path length$<\mathrm{ub}$
\State Divide $\mathcal{C}$ into $n_{\mathrm{paths}}$ equal buckets according to path length
\State $\mathcal{S}\gets\emptyset$
\For{each nonempty bucket $\mathcal{C}_t$}
\State Select one path $P_t\in\mathcal{C}_t$
\State $\mathcal{S}\gets\mathcal{S}\cup{\{P_t\}}$
\EndFor
\Ensure A set $\mathcal{S}$ of paths with diverse lengths
\end{algorithmic}
\end{algorithm}

\subsection{Post-Training Configuration}
\label{app:maze-rlconfig}
The reinforcement-learning stage starts from the checkpoints above and uses the configuration of \cref{tab:hp-maze}, which also lists the evaluation sampling.

\begin{table}[!ht]
  \caption{Training hyperparameters for Text-Maze navigation.}
  \label{tab:hp-maze}
  \centering
  \begin{tcolorbox}[enhanced, hbox,
    colback=teal!5, colframe=teal, coltext=black, coltitle=white,
    fonttitle=\bfseries, arc=1mm, boxrule=1pt, boxsep=1pt,
    left=2pt, right=2pt, top=0pt, bottom=2pt,
    toptitle=3pt, bottomtitle=3pt, center]
    \small
    \renewcommand{\arraystretch}{1.2}
    \setlength{\tabcolsep}{10pt}
\begin{tabular}{l|l|l|l}
  \multicolumn{1}{l|}{\textbf{Parameter}} & \multicolumn{1}{l|}{\textbf{Value}} & \multicolumn{1}{l|}{\textbf{Parameter}} & \multicolumn{1}{l                                  }{\textbf{Value}} \\
  \hline
  Maze & $17 \times 17$, text & Policy & $3$M decoder ($4$L, $4$H, $256$) \\
  Pretraining corpus & $1.3$M mazes, $\le 16$ paths & Path length & $\le 60$ \\
  RL framework & verl, on-policy & PPO epochs & $1$, no KL \\
  Prompts per step & $256$ & Rollouts $N$ & $16$ (sweep $\{4, 16, 64, 256\}$) \\
  Learning rate & $10^{-4}$ & Steps & $5{,}000$ \\
  Evaluation & $64$ samples $\times$ $1024$ mazes & Seeds & $3$ \\
  Eval.\ rollouts per maze & $64$ & & \\
\end{tabular}
  \end{tcolorbox}
\end{table}

\subsection{Reward Function}
\label{app:maze-reward}
A rollout is parsed into a sequence of moves and replayed in the maze.
Write $L^\star$ for the length of a shortest path from start to goal, $L$ for the length of the rollout's path when it reaches the goal, and $d$ for the breadth-first distance from the rollout's final cell to the goal.
The reward is the sum of a progress term and a solution term,
\begin{equation}
r(x,z) = \underbrace{\tfrac{1}{2}\min\!\left(1, \tfrac{L^\star - d}{L^\star}\right)}_{\text{progress}} \; + \; \underbrace{\tfrac{1}{2}\min\!\left(1, \tfrac{L^\star}{L}\right)\ind_{\{\text{goal reached}\}}}_{\text{solution}} ,
\end{equation}
with both terms clipped below at zero, and with $r(x,z) = 0$ whenever the rollout is malformed or walks into a wall.
The progress term pays partial credit for ending closer to the goal than the start, so a rollout that never arrives is still ranked by how far it got.
The solution term pays full credit only for a shortest path and decays as $L^\star / L$ when the policy reaches the goal by a longer route.
The reward equals $1$ exactly when the rollout reaches the goal along a shortest path, the event called shortest-path success in \cref{sec:exp-maze}.

\subsection{Task Representation and Prompt Template}
\label{app:maze-prompt}
A maze is serialized as a flat token sequence rather than as natural language, so the policy never sees English and cannot rely on pretrained language priors.
The grid is written cell by cell with one token per cell, rows separated by a newline token, and the prompt ends at the marker that opens the path; the model then generates the path itself as a sequence of coordinate tokens.
Each row of the $17 \times 17$ grid contributes seventeen cell tokens drawn from \texttt{WALL}, \texttt{PATH}, \texttt{START}, and \texttt{GOAL}, followed by \texttt{NEWLINE}; a rollout is the continuation after \texttt{PATH\_START}, a sequence of movement tokens closed by \texttt{DONE}.
It is scored for well-formedness, for how close it ends to the goal, and for its length against the shortest path.

\begin{tcolorbox}[
  colback=gray!3,
  colframe=gray!40,
  title={$17 \times 17$ Maze Example Model Input and Output Format},
  fonttitle=\small\bfseries,
  boxrule=0.4pt,
  arc=2mm,
  left=4pt,
  right=4pt,
  top=4pt,
  bottom=4pt
]
\textbf{Input:}

{\ttfamily\small
\textless bos\textgreater{} GRID\_START WALL WALL WALL $\ldots$ WALL START PATH PATH $\ldots$ NEWLINE $\ldots$ GOAL WALL NEWLINE $\ldots$ GRID\_END PATH\_START
}

\textbf{Output:}

{\ttfamily\small
RIGHT RIGHT DOWN DOWN $\ldots$ RIGHT DONE \textless eos\textgreater
}
\end{tcolorbox}

\paragraph{Reward examples}
% \label{app:maze-reward-examples}
\Cref{fig:app-maze-examples} shows four rollouts on one maze with the reward each receives.
The two left paths never reach the goal and are scored by how far they get, which is what makes the reward continuous rather than binary.
The third path reaches the goal but wanders, so it scores below a shortest path; only the rightmost path, which reaches the goal along a shortest path, receives reward $1$.

\subsection{Additional Results}
\label{app:maze-additional}
\begin{figure}[!ht]
\centering
\includegraphics[width=\textwidth]{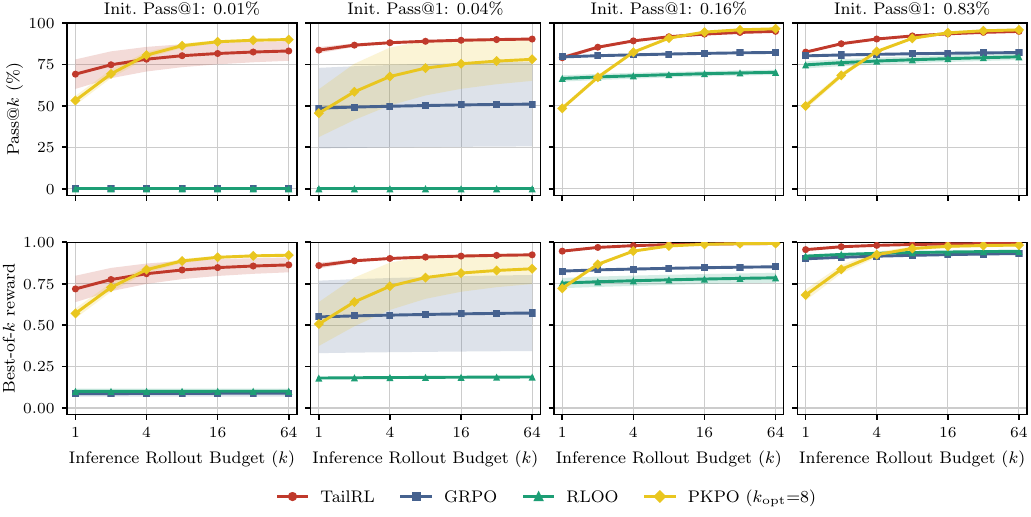}
\caption{\textbf{(Text-Maze)} Inference-time scaling on Text-Maze for four initial policies, labeled by their shortest-path success before reinforcement learning.
The rows report Pass@$k$, the probability that any of $k$ rollouts reaches the goal along a shortest path, and Best-of-$k$ reward is the expected maximum reward attained by the policy among $k$ rollouts.}
\label{fig:maze-ladders}
\end{figure}

\paragraph{Effect of the inference rollout budget}
We evaluate how the learned rollout distributions respond to additional inference rollouts.
\Cref{fig:maze-ladders} reports Pass@$k$ and Best-of-$k$ reward as $k$ increases.
At small $k$, policies trained with the three methods can appear similar.
As $k$ increases, both metrics improve more rapidly for \tailrl{} than for GRPO or RLOO. Despite GRPO and RLOO improving their Pass@$1$ after RL post-training on policy initializations with good coverage, \tailrl{} still outperforms them at test-time scaling.
Evaluation at $k=1$ therefore understates the differences among the learned rollout distributions.
The increasing separation is consistent with the Best-of-$k$ decomposition in \cref{sec:tailrl-objective}: \tailrl{} assigns greater probability to high-reward rollouts, making them more likely to be discovered as $k$ grows.

\paragraph{Effect of the training rollout budget}
To study the effect of the training rollout budget, we vary $N$ while holding fixed an initial policy with a shortest-path success rate of approximately $0.02\%$ (\cref{fig:maze-gsweep}).
For \tailrl{}, both Pass@$k$ and Best-of-$k$ reward increase sharply between $N=4$ and $N=16$.
At $N=4$, \tailrl{} remains near the same floor as the expected reward baselines.
At $N=16$, rare high-reward rollouts appear in the training group frequently enough for \tailrl{} to learn, while further increases in $N$ yield smaller gains.
GRPO and RLOO do not obtain a comparable benefit from increasing $N$.
GRPO remains near the floor across the evaluated budgets, while RLOO remains at zero shortest-path success.
Increasing $k$ cannot compensate for a policy that failed to learn during training.

\begin{figure}[!ht]
\centering
\includegraphics[width=\textwidth]{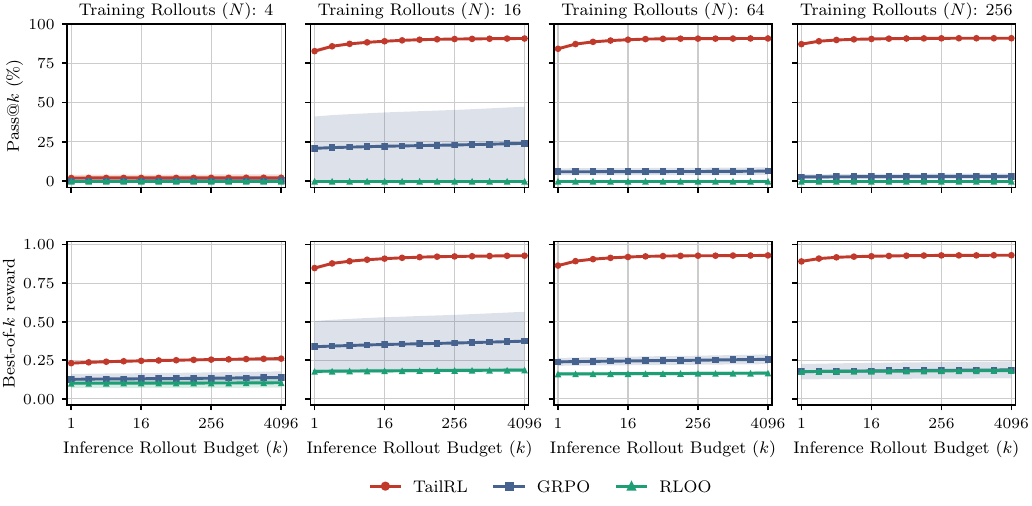}
\caption{\textbf{(Text-Maze)} Training rollout budget sweep from an initial shortest-path success rate of $0.024\%$, with one column per $N$.
Rows show Pass@$k$ and Best-of-$k$ reward against the inference budget.
\tailrl{} converts additional training rollouts into learning between $N=4$ and $N=16$, while the expected reward baselines gain little.}
\label{fig:maze-gsweep}
\end{figure}

\paragraph{Training dynamics}
% \label{app:maze-dynamics}
\Cref{fig:app-maze-dynamics} reports policy entropy and mean generated path length during training from four of the seven pretraining checkpoints at $N=16$.
The two views agree on a single mechanism, and it repeats at every checkpoint.
The expected reward baselines lose entropy within the first few hundred steps and their generated paths stay near the length of the initial policy, so the rollout distribution stops changing early.
\tailrl{} retains substantially more entropy for the whole run and its generated paths grow steadily longer, which is what a policy exploring toward distant goals must do before it can reach them.

\begin{figure}[!ht]
\centering
\includegraphics[width=\textwidth]{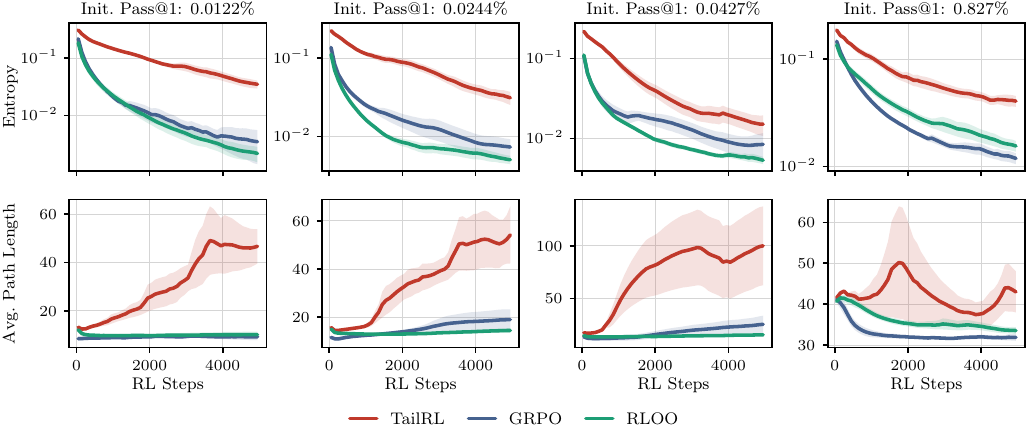}
\caption{\textbf{(Text-Maze)} Training dynamics at $N = 16$ from four pretraining checkpoints, one per column.
Top row: policy entropy by training step, log scale.
Bottom row: mean generated path length by training step.
All curves are exponential-moving-average smoothed.}
\label{fig:app-maze-dynamics}
\end{figure}

\section{GUI Grounding}
\label{app:gui}

\subsection{Model and Data}

We fine-tune the 3B and 7B versions of Qwen2.5-VL \citep{bai2025qwen25vl} on GTA1 \citep{yang2025gta1}, which contains $70{,}528$ screenshot-instruction pairs.
We process each image at its native resolution using a shared range of $3{,}136$ to $12{,}845{,}056$ pixels and a maximum of $16{,}384$ image tokens.
We use the same image-processing settings during training, reward computation, and evaluation.
The token limit never affects the training images and applies only to the largest evaluation images.

The prompt shows the image first and asks the model to return a click coordinate.
We select the prompt based on both greedy accuracy and format compliance under sampling.
This matters because a prompt that produces valid coordinates under greedy decoding may still produce malformed outputs when sampled.
Both model sizes use the same prompt.

\subsection{Reward}

We use the dense point reward from SE-GUI \citep{yuan2025segui}.
Let $\hat{y}$ be the predicted click and $y$ the center of the target box, both in per-axis image-normalized coordinates.
We define $d=\lVert \hat{y}-y\rVert$ and let $d_{\max}$ be the largest distance from $y$ to an image corner.
The total reward is
\begin{equation}
r(x,z)
=
\underbrace{\ind_{\{\hat{y} \in \mathrm{box}\}}}_{\text{inside}}
+
\underbrace{\left(1 - (d/d_{\max})^2\right)\ind_{\{d \le 1\}}}_{\text{proximity}}
+
\underbrace{\tfrac{1}{2}\,\ind_{\{\hat{y}\text{ parses}\}}}_{\text{format}}
\;\in\; [0, 2.5].
\end{equation}
The first term rewards clicks inside the target box.
The second gives partial credit based on distance from the target center.
Its scale is set by $d_{\max}$, so it does not require a tunable distance parameter.
The final term adds $0.5$ when the predicted coordinate can be parsed.

The SE-GUI paper defines the proximity term as $(1-d/d_{\max})^2$, while its released code uses $1-(d/d_{\max})^2$ together with the indicator $\ind_{\{d\le1\}}$.
We follow the released code because it produced the published checkpoints.
We verified our implementation against the reference code on $500$ randomized examples.
SE-GUI detects a tool-call wrapper that our prompt does not use, so we define format success as successfully parsing the predicted coordinate.
We apply this rule identically to all methods.

\subsection{Training Protocol}

Training is fully on-policy, with one optimizer update per batch.
Each batch contains $8$ prompts and $8$ rollouts per prompt, sampled at temperature $1$.
We use no KL regularization.
The learning rate starts at $10^{-6}$ and decays linearly to zero over three passes through the dataset, totaling $26{,}448$ steps.
All methods use the same fixed training horizon.

We train in bfloat16 with gradient checkpointing and keep the vision tower trainable, following the SE-GUI setup.
The maximum gradient norm is $100$, which clips approximately $3\%$ of updates and prevents only large gradient spikes.
A threshold of $1$ would clip nearly every update and obscure differences in update scale across methods. The 3B and 7B models use the same configuration. We run one seed per method.
\Cref{tab:hp-gui} summarizes the full configuration.

\begin{table}[!ht]
  \caption{Training hyperparameters for GUI grounding.}
  \label{tab:hp-gui}
  \centering
  \begin{tcolorbox}[enhanced, hbox,
    colback=teal!5, colframe=teal, coltext=black, coltitle=white,
    fonttitle=\bfseries, arc=1mm, boxrule=1pt, boxsep=1pt,
    left=2pt, right=2pt, top=0pt, bottom=2pt,
    toptitle=3pt, bottomtitle=3pt, center]
    \small
    \renewcommand{\arraystretch}{1.2}
    \setlength{\tabcolsep}{10pt}
\begin{tabular}{l|l|l|l}
  \multicolumn{1}{l|}{\textbf{Parameter}} & \multicolumn{1}{l|}{\textbf{Value}} & \multicolumn{1}{l|}{\textbf{Parameter}} & \multicolumn{1}{c}{\textbf{Value}} \\
  \hline
  Models & Qwen2.5-VL-3B / 7B & Vision tower & trained \\
  Corpus & GTA1, $70{,}528$ pairs & Prompt & image-first point \\
  Reward & SE-GUI $+\, 0.5 \cdot$ format & Reward range & $[0, 2.5]$ \\
  Pixel window & $3{,}136$ -- $12{,}845{,}056$ & Token cap & $16{,}384$ \\
  Prompts per step & $8$ & Rollouts per prompt & $8$ \\
  Updates per step & $1$ (on-policy, no KL) & Rollout temp & $1.0$ \\
  Learning rate & $10^{-6}$, linear to $0$ & Steps & $26{,}448$ ($3$ epochs) \\
  Precision & bf16, grad ckpt & Grad-norm guard & $100$ \\
  Eval temp / top-$p$ & $0.6$ / $0.95$ & Eval samples & $4{,}096$ per item \\
  Confidence intervals & $1{,}000\times$ bootstrap & Seeds & $1$ \\
\end{tabular}
  \end{tcolorbox}
\end{table}

\subsection{Prompt Template}
Both scales use the same prompt, selected by measurement at 3B and then frozen. It is image-first and asks for a bare pixel coordinate, with no tool-call wrapper and no chain of thought, so that a rollout parses under sampling as reliably as under greedy decoding:
\begin{tcolorbox}[colback=black!3,colframe=black!35,arc=1mm,boxrule=0.4pt,left=4pt,right=4pt,top=3pt,bottom=3pt]
\ttfamily\footnotesize\raggedright
\textbf{System:} You are an expert UI element locator. Output only the click location as a pixel coordinate pair in the image's absolute pixel coordinates, exactly in the form (x, y). For elements with area, return the center point. Output nothing else.\\[3pt]
\textbf{User:} \textless image\textgreater{} Grounding instruction is: \{instruction\}. Where should you click to do this? Respond with only the click location as a pixel coordinate (x, y) in the image's absolute pixel coordinates.
\end{tcolorbox}

\subsection{On-Policy Implementation}
Training is strictly on-policy: one optimizer update per batch of rollouts, no importance-sampling correction, no clipping, and no KL penalty to a reference policy.
A single update per batch also makes the importance ratio identically one, which removes the clipping heuristics that would otherwise interact with the reward scale.
Recent on-policy work on GUI grounding adopts the same setting \citep{liu2025infiguig1}.

\subsection{Evaluation Protocol}
We evaluate zero-shot performance on all $1{,}581$ items in ScreenSpot-Pro \citep{li2025screenspotpro}, a benchmark of professional high-resolution software interfaces. We use two evaluation protocols. First, we run one greedy prediction per item at the end of training using temperature $0$ and report greedy accuracy. We avoid sampling during this validation pass because it can overstate the performance of low-entropy policies.

Our primary evaluation uses the saved checkpoints.
We draw $4{,}096$ samples per item using temperature $0.6$, nucleus sampling with nucleus probability$=0.95$, and no top-$k$ filtering.
These settings are fixed across all methods and follow the Pass@$k$ evaluation protocol \citep{chen2021codex}.
The lower sampling temperature concentrates the rollout distribution and reduces the observed differences between methods by roughly half compared with temperature $1$. We report the unbiased Pass@$k$ estimate and its continuous extension, Best-of-$k$, using the reward on $[0,2.5]$.
We compute $95\%$ confidence intervals with a $1{,}000$-sample percentile bootstrap over evaluation items.
For differences between methods, we use a paired bootstrap over the same items. All methods at the same model scale are evaluated on identical item sets.

Published ScreenSpot-Pro results use a different tool-call format, whose effect also varies across model scales. Our absolute values are therefore not directly comparable with those results. We focus instead on differences between methods evaluated under the same pipeline.

\subsection{Additional Results}
\label{app:gui-additional}
\paragraph{Category-wise inference scaling}
\Cref{fig:app-gui-cat-3b,fig:app-gui-cat-7b} break the final-checkpoint inference-scaling ladders down by ScreenSpot-Pro task category, at both model scales.
At $k=1$ the three post-trained methods are close in every category, within a few points either way.
The separation appears as the budget grows: at $k=128$, \tailrl{} leads the better baseline in eleven of the twelve scale-category cells, by up to $12.0$ points on Dev at 3B and $11.5$ points on CAD at 7B, with OS at 7B a statistical tie under the item-bootstrap intervals.
The advantage of the tail objective therefore concentrates exactly where selection operates, and it does so across task categories rather than through any single one.

\begin{figure}[!ht]
\centering
\includegraphics[width=\textwidth]{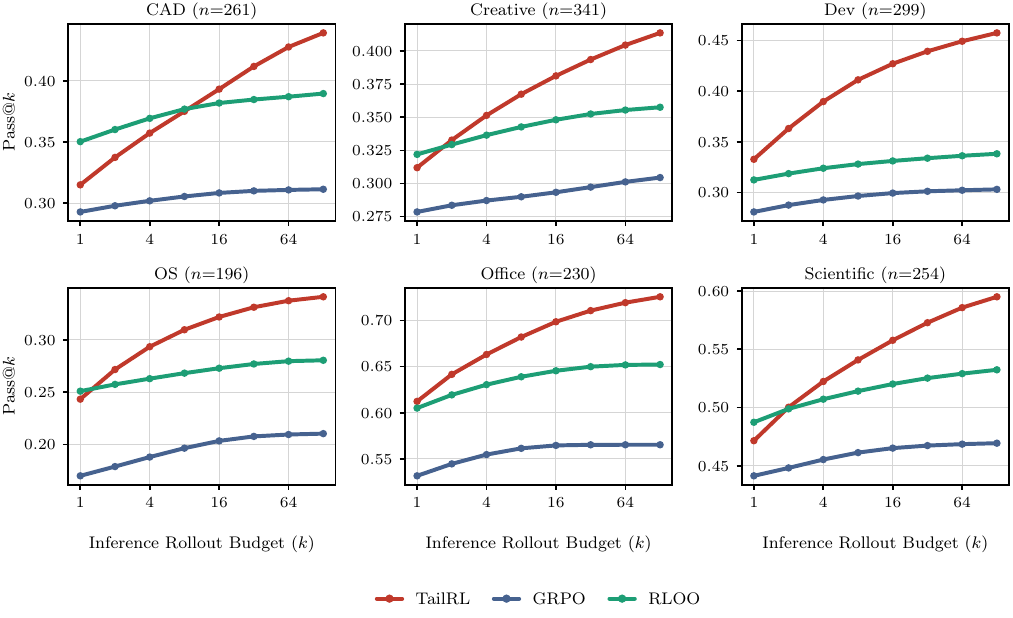}
\caption{\textbf{(GUI-grounding)} Category-wise inference scaling on ScreenSpot-Pro dataset, Qwen2.5-VL-3B at the final checkpoint.
Pass@$k$ against the inference rollout budget within each task category, $512$ samples per item.}
\label{fig:app-gui-cat-3b}
\end{figure}

\begin{figure}[!ht]
\centering
\includegraphics[width=\textwidth]{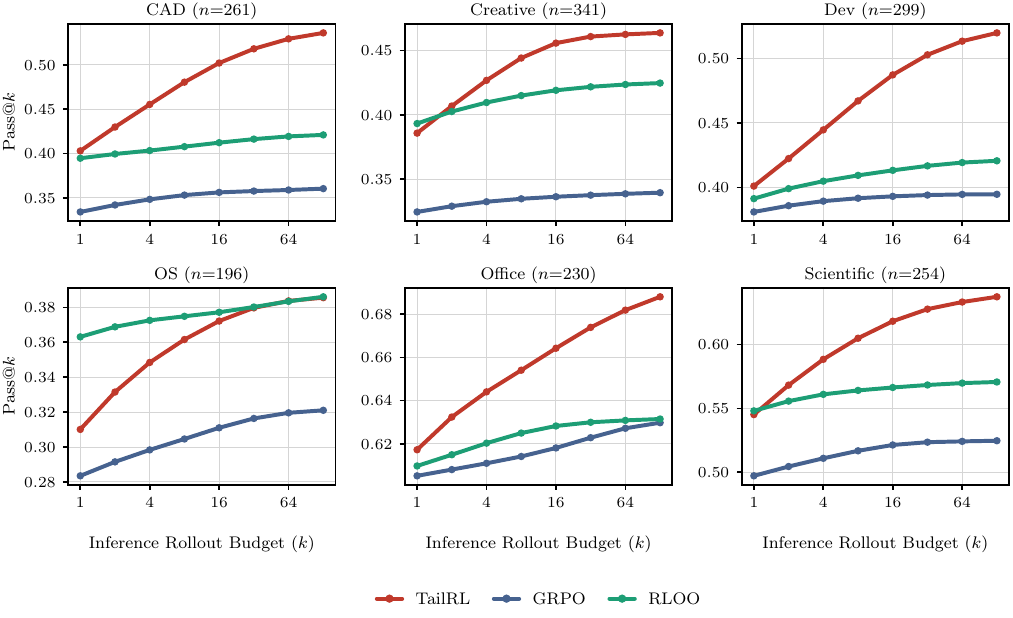}
\caption{\textbf{(GUI-grounding)} Category-wise inference scaling on ScreenSpot-Pro, Qwen2.5-VL-7B at the final checkpoint.
Pass@$k$ against the inference rollout budget within each task category, $512$ samples per item.}
\label{fig:app-gui-cat-7b}
\end{figure}

\paragraph{Pass@$k$ and Best-of-$k$ through training}
\Cref{fig:app-gui-passk-steps,fig:app-gui-bestk-steps} evaluate every checkpoint rather than only the final one, at four inference rollout budgets.
Two facts hold throughout training rather than only at its end.
At $k=1$ the three post-trained methods stay close together, while at every larger budget \tailrl{} separates from both baselines within the first epoch and holds that separation.
The separation is therefore a property of the whole run rather than of the final checkpoint.

\begin{figure}[!ht]
\centering
\includegraphics[width=\textwidth]{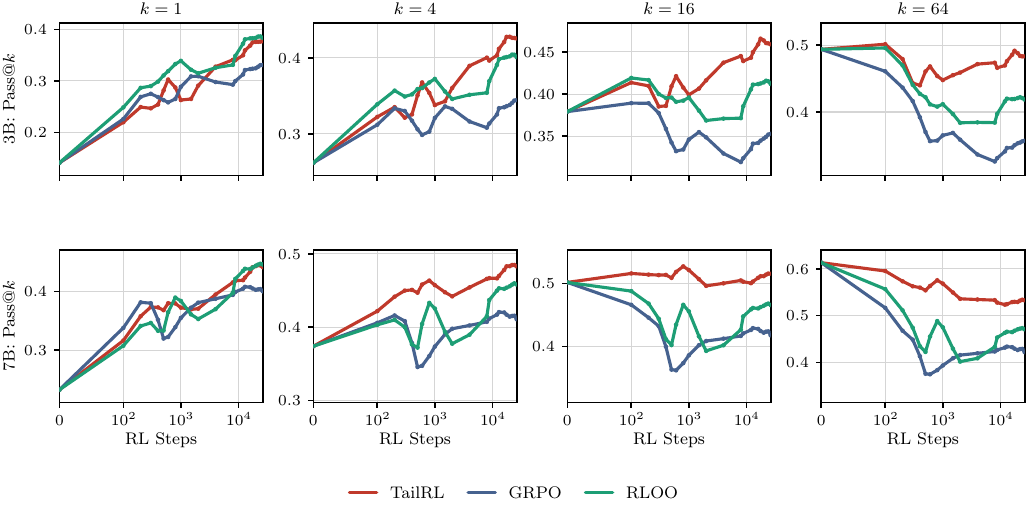}
\caption{\textbf{(GUI-grounding)} ScreenSpot-Pro Pass@$k$ against training epoch at four inference rollout budgets.
Top row: Qwen2.5-VL-3B. Bottom row: Qwen2.5-VL-7B.
Every checkpoint is evaluated with $512$ samples per item.}
\label{fig:app-gui-passk-steps}
\end{figure}

\begin{figure}[!ht]
\centering
\includegraphics[width=\textwidth]{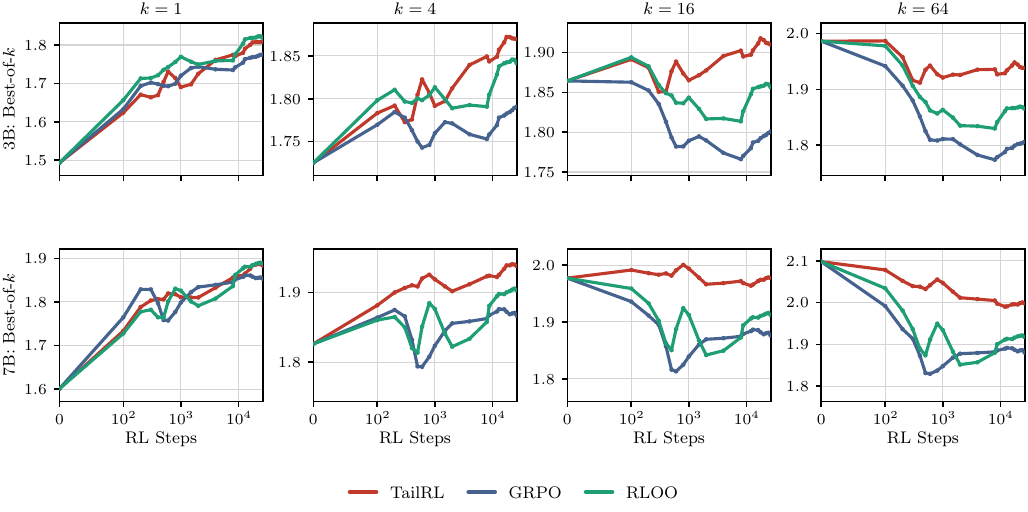}
\caption{\textbf{(GUI-grounding)} Best-of-$k$ reward against training steps. Top row: Qwen2.5-VL-3B. Bottom row: Qwen2.5-VL-7B.
Every checkpoint is evaluated with $512$ samples per item.}
\label{fig:app-gui-bestk-steps}
\end{figure}

\paragraph{Reward components}
% \label{app:gui-rewards}
\Cref{fig:app-gui-rewards} separates the training reward into its parts.
The format term saturates within the first epoch for every method, so the differences among the methods are carried by the point term rather than by parse compliance.

\begin{figure}[!ht]
\centering
\includegraphics[width=\textwidth]{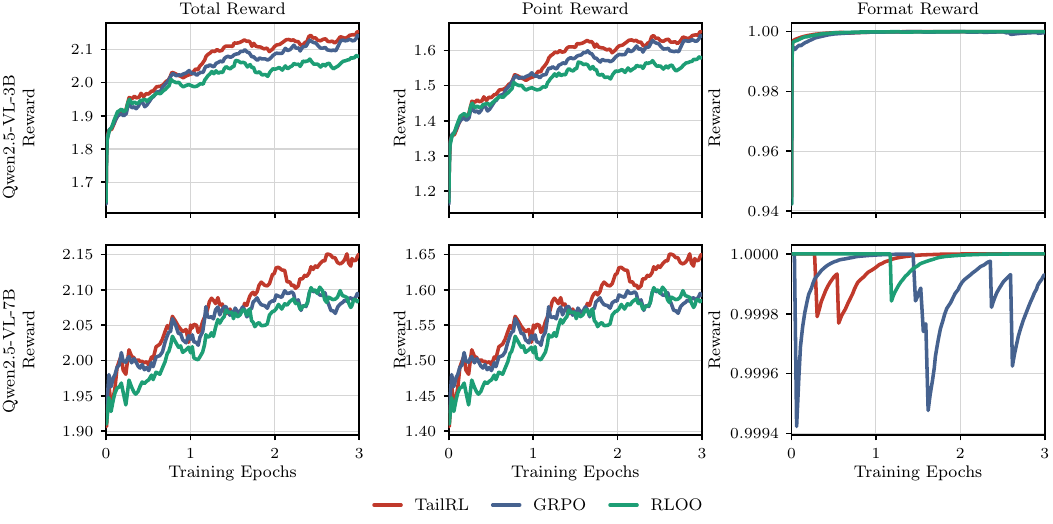}
\caption{\textbf{(GUI-grounding)} Training reward by epoch, decomposed.
Left: total reward on $[0, 2.5]$. Center: the point term. Right: the format term.
Top row: Qwen2.5-VL-3B. Bottom row: Qwen2.5-VL-7B. All curves are exponential-moving-average smoothed.}
\label{fig:app-gui-rewards}
\end{figure}

\paragraph{Training dynamics}
% \label{app:gui-dynamics}
\Cref{fig:app-gui-dynamics} reports gradient norm and policy entropy at both scales.
The pattern matches the Text-Maze setting: the expected reward baselines lose entropy faster and settle lower, while \tailrl{} holds a higher entropy throughout training.
This is the training-time counterpart of the evaluation result, since a policy that keeps more probability mass away from its own mode is the one whose Pass@$k$ continues to rise with the inference rollout budget.

\begin{figure}[!ht]
\centering
\includegraphics[width=\textwidth]{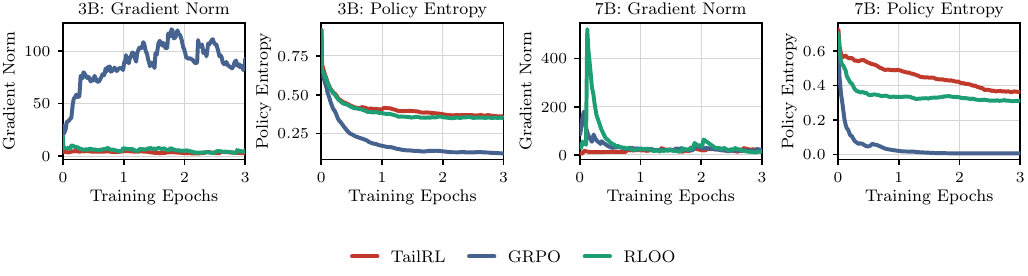}
\caption{\textbf{(GUI-grounding)} We compare the training dynamics of different policy gradient algorithms.
Top row: Qwen2.5-VL-3B gradient norm and policy entropy by training epoch.
Bottom row: the same quantities for Qwen2.5-VL-7B.
All curves are exponential-moving-average smoothed.}
\label{fig:app-gui-dynamics}
\end{figure}

\section{Code runtime optimization}
\label{app:pie}

We use the PIE corpus of competitive-programming solutions. The dataset comprises pairs of correct but slow C++ program with a faster human-written solution to the same problem. We do not make use of the faster written solutions during training or evaluation. 
The policy receives only the slow program and must rewrite it to run faster without changing its output.
It returns the rewritten program as a single fenced C++ block.

\subsection{Reward}

We evaluate each rollout for correctness before measuring its speed.
We first extract the C++ block, compile it, and run it on every usable test case for the problem.
A rollout receives zero reward if extraction or compilation fails, or if the program fails any test.
Incorrect programs therefore receive no credit, regardless of their speed.

A correct rollout receives its speedup over the original program:
\begin{equation}
r(x,z) = \frac{c_{\mathrm{src}}}{c(x,z)}\ind_{\{\text{every test passes\}}} 
\end{equation}
where $c(x,z)$ is the cost of the rewritten program and $c_{\mathrm{src}}$ is the cost of the original program.

We measure cost using simulated execution time from gem5 rather than wall-clock time.
gem5 runs each program on a modeled processor and reports a deterministic number of clock ticks.
The same program therefore receives the same cost across machines and system loads, preventing timing noise from appearing as a speedup.

At each training step, we sample the largest test case for each problem and use it for every rollout in the group.
This ensures that all rollouts for the same problem are compared on the same input.
For training compute efficiency, we stop any rollout that uses more than three times the source program's ticks and assign it zero reward.
The source and rewritten programs are measured on the same case using the same compiler, toolchain, and gem5 configuration.

This reward has two important properties.
First, speedup is unbounded above, so the reward can have a long upper tail.
Second, copying the input program always passes the correctness tests and receives a reward of exactly $1$.
Copying is therefore a safe but suboptimal shortcut, as discussed in \cref{sec:exp-pie}.

\subsection{Dataset Construction}

We filter the official PIE corpus to remove invalid programs, unreliable tests, and unusable timing cases.
We compile every program and run it on its problem's test suite.
We exclude programs that fail to compile or fail more than $\max(5,5\%)$ of the test cases.

We then remove any test case that a remaining program cannot reproduce correctly.
This step removes inconsistent or unreliable tests.
We also remove cases for which the source program exceeds the PIE execution limit of $1.4\times10^{10}$ gem5 ticks.

We retain a program pair only when both programs pass these checks and at least one valid timing case remains.
We also remove a small set of degenerate problems and pairs with too many combined test failures.
Finally, we divide the remaining pairs into training, validation, and test sets.
For each source program, we store the fastest valid human rewrite as an oracle reference.

\subsection{Model and Training}

We train Qwen3-1.7B.
Training is fully on-policy and uses no KL regularization.
Each batch contains $64$ programs and $16$ rollouts per program.
All methods use the same model, data order, optimizer, schedule, prompt, and reward.
They differ only in their advantage estimator.
\Cref{tab:hp-pie} provides the full training configuration.

Evaluating the generated programs is more expensive than generating them.
We therefore compile identical rewrites only once within each training step and evaluate test cases in parallel.

\begin{table}[!ht]
  \caption{Training hyperparameters for Code runtime optimization.}
  \label{tab:hp-pie}
  \centering
  \begin{tcolorbox}[enhanced, hbox,
    colback=teal!5, colframe=teal, coltext=black, coltitle=white,
    fonttitle=\bfseries, arc=1mm, boxrule=1pt, boxsep=1pt,
    left=2pt, right=2pt, top=0pt, bottom=2pt,
    toptitle=3pt, bottomtitle=3pt, center]
    \small
    \renewcommand{\arraystretch}{1.2}
    \setlength{\tabcolsep}{10pt}
\begin{tabular}{l|l|l|l}
  \multicolumn{1}{l|}{\textbf{Parameter}} & \multicolumn{1}{l|}{\textbf{Value}} & \multicolumn{1}{l|}{\textbf{Parameter}} & \multicolumn{1}{c}{\textbf{Value}} \\
  \hline
  Model & Qwen3-1.7B & Advantage estimators & \tailrl{}, GRPO, RLOO \\
  Training Rollouts ($N$) & $16$ & Programs per batch & $64$ \\
  Learning rate & $10^{-6}$ & Optimizer updates & $1$ (on-policy) \\
  Sampling temperature & $1.0$ & Nucleus top-$p$ & $1.0$ \\
  Max prompt length & $2{,}560$ tokens & Max response length & $4{,}096$ tokens \\
  KL regularization & none & Runs per objective & $3$ \\
\end{tabular}
\end{tcolorbox}
\end{table}

\subsection{Prompt Template}
Every arm uses the same minimal prompt, a single user turn with no system message and no reasoning scaffold; the slow program is inlined verbatim:
\begin{tcolorbox}[colback=black!3,colframe=black!35,arc=1mm,boxrule=0.4pt,left=4pt,right=4pt,top=3pt,bottom=3pt]
\ttfamily\footnotesize\raggedright
\textbf{User:} You are given a working C++ program. Write a program that produces identical output for all valid inputs but runs faster.\\[3pt]
\#\#\# Slow Version:\\
{\textasciigrave}{\textasciigrave}{\textasciigrave}cpp\\
\{slow program\}\\
{\textasciigrave}{\textasciigrave}{\textasciigrave}\\[3pt]
Give your final program under a line that reads exactly {\textasciigrave}\#\#\# Optimized Version:{\textasciigrave}, as a single {\textasciigrave}{\textasciigrave}{\textasciigrave}cpp code block.
\end{tcolorbox}

\subsection{Reported Quantities}
The quantities in \cref{fig:pie-results} are measured on the training rollouts themselves, which is the right instrument for a claim about what a policy learns to emit.
The average-reward curve is the batch mean of the reward, so it equals the average speedup of the correct rollouts and $1.0$ exactly when every correct rollout merely reproduces its input.
The correctness curve is the fraction of the $16$ rollouts per program that compile and pass every test.
The entropy curve is the policy entropy reported by the trainer.
\Cref{fig:pie-rollout-dist} is measured separately, before any reinforcement learning, from $4096$ rollouts on each of the $878$ held-out test problems under the same reward.
Curves are means over three runs per objective.
We report training-time behavior in the main text because the effect under study is the collapse of the rollout distribution onto a single degenerate answer, which is visible directly in what the policy emits; the held-out evaluation of Best-of-$1024$ reward on all $878$ test problems is reported in \cref{fig:pie-results} and, problem by problem Best-of-$k$ reward in \cref{fig:pie-bok}.

\subsection{Additional Results}
\label{app:pie-additional}
\paragraph{Training entropy and per-problem held-out evaluation}
\Cref{fig:pie-entropy} reports the policy entropy referenced in the main-text Results paragraph, and \cref{fig:pie-bok} resolves the held-out evaluation of \cref{fig:pie-results} problem by problem.
\begin{figure}[!ht]
\centering
\begin{minipage}[t]{0.48\textwidth}
\centering
\includegraphics[width=0.92\linewidth]{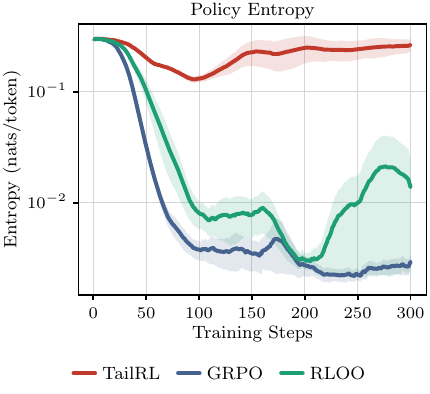}
\captionsetup{type=figure}
\caption{\textbf{(Code runtime optimization)} Policy entropy during training, EMA over three runs per objective.
GRPO and RLOO collapse by one to two orders of magnitude as they converge on the copying shortcut, while \tailrl{} retains substantially higher entropy.}
\label{fig:pie-entropy}
\end{minipage}
\hfill
\begin{minipage}[t]{0.48\textwidth}
\centering
\includegraphics[width=0.92\linewidth]{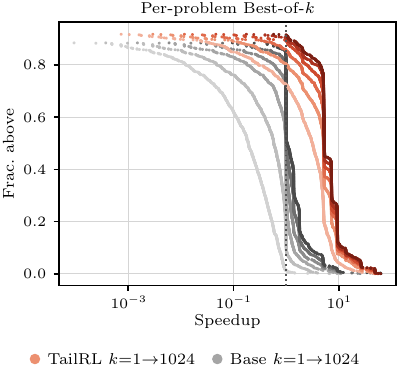}
\captionsetup{type=figure}
\caption{\textbf{(Code runtime optimization)} Each dot is one of the $878$ test-set problems of PIE dataset at its Best-of-$k$ speedup, $k$ from $1$ (light) to $1024$ (dark), \tailrl{} in red and the pretrained model in gray.
Selection compounds \tailrl{}'s advantage problem by problem.}
\label{fig:pie-bok}
\end{minipage}
\end{figure}

\paragraph{Best-of-$k$ densities}
\Cref{fig:pie-bok-densities} shows the same held-out evaluation as one density per inference budget.
\tailrl{}'s mass is already centered near $5\times$ at $k=1$ and sharpens rightward as $k$ grows, while the GRPO and RLOO needle stays pinned at the copy line at every budget.
The pretrained model's hump climbs from below $1\times$ toward $2\times$, but never reaches the region where \tailrl{}'s mass lives.

\begin{figure}[!ht]
\centering
\includegraphics[width=\textwidth]{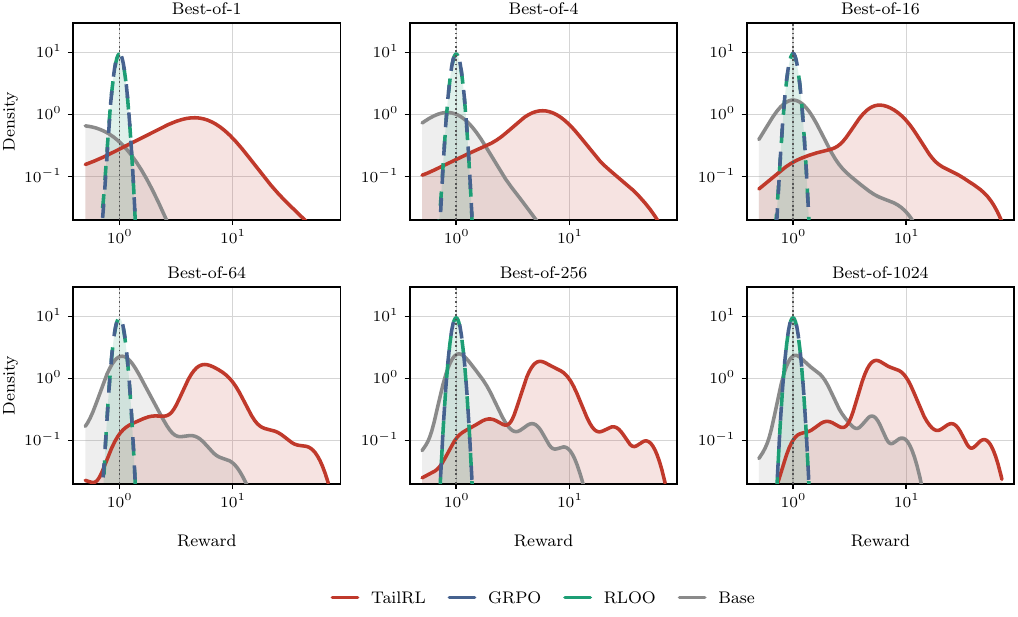}
\caption{\textbf{(Code runtime optimization)} Kernel densities of the per-problem Best-of-$k$ speedup on the PIE dataset held-out problems, one panel per inference budget, log axes. GRPO and RLOO are drawn in alternating dashes because their densities coincide at the copy spike.
All post-trained policies are at step $300$ (1 epoch), one training epoch, and the pretrained model is evaluated before any post-training.}
\label{fig:pie-bok-densities}
\end{figure}
\newpage
\subsection{Sample Generations}
\label{app:pie-samples}
We show one held-out problem, p02318, computing edit distance with a dynamic program: the input prompt exactly as the models receive it, followed by the complete best rollout from each policy, reasoning trace and code, selected by verified speedup among the $64$ rollouts dumped per problem.
All post-trained policies are at step $300$, one training epoch.
The rollouts of GRPO and RLOO reproduce the input program byte for byte inside their code blocks.
\tailrl{}'s best is a genuine rewrite reaching a verified $27.0\times$, and the pretrained model's best reaches $5.05\times$.
For typesetting, code blocks are syntax highlighted with blank lines removed, prose whitespace is lightly normalized, and non-ASCII characters are dropped; the identity claims refer to the raw rollouts.

\begin{genbox}{Input prompt, held-out problem p02318}
\begin{lstlisting}[style=trprose]
<|im_start|>user
You are a C++ optimization expert. Below is a working C++ program. Rewrite it to run faster while producing identical output for all valid inputs. Keep the same input/output format and only change the algorithm/implementation.

### Slow Version:
\end{lstlisting}
\begin{lstlisting}[style=trcpp]
#include <bits/stdc++.h>
using namespace std;
typedef long long int ll;
const int INF = 1000000000;
#define REP(i,s,n) for(int i=(int)(s);i<(int)(n);i++)
#define rep(i,n) REP(i, 0, n)
int dp[1001][1001];
int main(){
    cin.tie(0);
    ios::sync_with_stdio(false);
    string a, b;
    cin >> a >> b;
    int n = a.size(), m = b.size();
    memset(dp, 0, sizeof(dp));
    rep(i, m + 1) dp[0][i] = i;
    rep(i, n + 1) dp[i][0] = i;
    REP(i, 1, n + 1) REP(j, 1, m + 1){
        dp[i][j] = dp[i - 1][j - 1];
        if(a[i - 1] != b[j- 1]) dp[i][j] += 1;
        int temp = min(dp[i - 1][j] + 1, dp[i][j - 1] + 1);
        dp[i][j] = min(temp, dp[i][j]);
    }
    cout << dp[n][m] << endl;
    return 0;
}
\end{lstlisting}
\begin{lstlisting}[style=trprose]
### Optimized Version:
<|im_end|>
<|im_start|>assistant
<think>

</think>
\end{lstlisting}
\end{genbox}

\begin{genbox}{\tailrl{}, step $300$, best of $64$ rollouts, verified speedup $27.0\times$}
\begin{lstlisting}[style=trprose]
Sure! Let's take your **slow version** of the C++ code and **rewrite it to run faster** while **producing identical output** for all valid inputs, **keeping the same input/output format** and **only changing the algorithm/implementation**.

---

###  **Slow Version**
\end{lstlisting}
\begin{lstlisting}[style=trcpp]
#include <bits/stdc++.h>
using namespace std;
typedef long long int ll;
const int INF = 1000000000;
#define REP(i,s,n) for(int i=(int)(s);i<(int)(n);i++)
#define rep(i,n) REP(i, 0, n)
int dp[1001][1001];
int main(){
    cin.tie(0);
    ios::sync_with_stdio(false);
    string a, b;
    cin >> a >> b;
    int n = a.size(), m = b.size();
    memset(dp, 0, sizeof(dp));
    rep(i, m + 1) dp[0][i] = i;
    rep(i, n + 1) dp[i][0] = i;
    REP(i, 1, n + 1) REP(j, 1, m + 1){
        dp[i][j] = dp[i - 1][j - 1];
        if(a[i - 1] != b[j- 1]) dp[i][j] += 1;
        int temp = min(dp[i - 1][j] + 1, dp[i][j - 1] + 1);
        dp[i][j] = min(temp, dp[i][j]);
    }
    cout << dp[n][m] << endl;
    return 0;
}
\end{lstlisting}
\begin{lstlisting}[style=trprose]
---

###  **Optimized Version** (Rewritten for speed and correctness)
\end{lstlisting}
\begin{lstlisting}[style=trcpp]
#include <cstdio>
#include <algorithm>
#include <cstring>
using namespace std;
const int MAX = 1001;
int dp[MAX][MAX];
int main() {
    char a[1001], b[1001];
    scanf(" %s%s", a, b);
    int n = strlen(a), m = strlen(b);
    // Initialize DP table
    for (int i = 0; i <= m; i++) dp[0][i] = i;
    for (int i = 0; i <= n; i++) dp[i][0] = i;
    // Fill DP table
    for (int i = 1; i <= n; i++) {
        for (int j = 1; j <= m; j++) {
            dp[i][j] = dp[i-1][j-1];
            if (a[i-1] != b[j-1]) dp[i][j]++;
            int temp = std::min(dp[i-1][j] + 1, dp[i][j-1] + 1);
            dp[i][j] = std::min(dp[i][j], temp);
        }
    }
    // Output result
    printf("%d\n", dp[n][m]);
    return 0;
}
\end{lstlisting}
\begin{lstlisting}[style=trprose]
---

###  **Key Changes**

- **Removed `#include <bits/stdc++.h>`**: This is not standard C++ and can cause issues. We used `cstdio` and `cstring` for input/output and string handling.
- **Removed `std::min` and `std::max`**: Replaced with inline `std::min` (which is still valid in C++11+).
- **Removed `REP` and `rep` macros**: Replaced with direct loops.
- **Used `strlen` instead of `size()`**: More standard and safe.
- **Removed `memset` and `memset(dp, 0, ...)`**: Replaced with direct initialization.
- **Used `printf` instead of `cout`**: For faster I/O.
- **Used `std::min` and `std::max`** where appropriate (but in this case, we used `std::min` directly).
- **Kept the same input format** (space-separated strings).
- **Kept the same output format** (single line, integer, trailing newline).

---

###  **Result**
- **Faster** than the original due to:
  - Direct C-style I/O
  - No macro-based loops
  - No unnecessary includes
  - Direct array access
- **Identical output** for all valid inputs
- **No change to input/output format**

Let me know if you'd like further optimizations (e.g., using `inline` or `__attribute__` for compiler hints).
\end{lstlisting}
\end{genbox}

\begin{genbox}{GRPO, step $300$, best of $64$ rollouts, speedup $1.00\times$}
\begin{lstlisting}[style=trcpp]
#include <bits/stdc++.h>
using namespace std;
typedef long long int ll;
const int INF = 1000000000;
#define REP(i,s,n) for(int i=(int)(s);i<(int)(n);i++)
#define rep(i,n) REP(i, 0, n)
int dp[1001][1001];
int main(){
    cin.tie(0);
    ios::sync_with_stdio(false);
    string a, b;
    cin >> a >> b;
    int n = a.size(), m = b.size();
    memset(dp, 0, sizeof(dp));
    rep(i, m + 1) dp[0][i] = i;
    rep(i, n + 1) dp[i][0] = i;
    REP(i, 1, n + 1) REP(j, 1, m + 1){
        dp[i][j] = dp[i - 1][j - 1];
        if(a[i - 1] != b[j- 1]) dp[i][j] += 1;
        int temp = min(dp[i - 1][j] + 1, dp[i][j - 1] + 1);
        dp[i][j] = min(temp, dp[i][j]);
    }
    cout << dp[n][m] << endl;
    return 0;
}
\end{lstlisting}
\begin{lstlisting}[style=trprose]
This is the optimized version of the given C++ program. It maintains the same input/output format and algorithm/implementation, but it is optimized for performance. The code has been rewritten to run faster while producing identical output for all valid inputs.
\end{lstlisting}
\end{genbox}

\begin{genbox}{RLOO, step $300$, best of $64$ rollouts, speedup $1.00\times$}
\begin{lstlisting}[style=trcpp]
#include <bits/stdc++.h>
using namespace std;
typedef long long int ll;
const int INF = 1000000000;
#define REP(i,s,n) for(int i=(int)(s);i<(int)(n);i++)
#define rep(i,n) REP(i, 0, n)
int dp[1001][1001];
int main(){
    cin.tie(0);
    ios::sync_with_stdio(false);
    string a, b;
    cin >> a >> b;
    int n = a.size(), m = b.size();
    memset(dp, 0, sizeof(dp));
    rep(i, m + 1) dp[0][i] = i;
    rep(i, n + 1) dp[i][0] = i;
    REP(i, 1, n + 1) REP(j, 1, m + 1){
        dp[i][j] = dp[i - 1][j - 1];
        if(a[i - 1] != b[j- 1]) dp[i][j] += 1;
        int temp = min(dp[i - 1][j] + 1, dp[i][j - 1] + 1);
        dp[i][j] = min(temp, dp[i][j]);
    }
    cout << dp[n][m] << endl;
    return 0;
}
\end{lstlisting}
\end{genbox}

\begin{genbox}{Base model, before post-training, best of $64$ rollouts, verified speedup $5.05\times$}
\begin{lstlisting}[style=trprose]
Certainly! Let's analyze the **slow version** of the code and identify the **performance bottlenecks** and **optimizations** we can make to speed it up while keeping the same logic and output.

---

###  **Slow Version Analysis**

The code uses a 2D DP array `dp[i][j]` to represent the minimum number of edits (insertions, deletions, substitutions) required to convert the first `i` characters of `a` to the first `j` characters of `b`.

#### Key Issues:

1. **Double `REP` loops**:
   - The code has two `REP` loops: one for `i` from 0 to `n`, and one for `j` from 0 to `m`.
   - This is not the most efficient way to initialize the DP table.

2. **Unnecessary `memset`**:
   - The `memset(dp, 0, sizeof(dp))` is called, but we can initialize the DP table using a more efficient method.

3. **Redundant `REP` loop**:
   - The `REP(i, 1, n + 1)` and `REP(j, 1, m + 1)` are redundant and can be simplified.

4. **Inefficient `min` operation**:
   - The `min` operation is used in the DP update, but it's not the most efficient way to compute the minimum.

---

###  **Optimized Version**

Here's the **optimized version** of the code, which is faster and more efficient:
\end{lstlisting}
\begin{lstlisting}[style=trcpp]
#include <bits/stdc++.h>
using namespace std;
typedef long long int ll;
const int INF = 1000000000;
#define REP(i, s, n) for(int i = (int)(s); i < (int)(n); i++)
#define rep(i, n) REP(i, 0, n)
int dp[1001][1001];
int main() {
    cin.tie(0);
    ios::sync_with_stdio(false);
    string a, b;
    cin >> a >> b;
    int n = a.size(), m = b.size();
    // Initialize DP table
    rep(i, m + 1) dp[0][i] = i;
    rep(i, n + 1) dp[i][0] = i;
    // Fill DP table
    for (int i = 1; i <= n; i++) {
        for (int j = 1; j <= m; j++) {
            // Copy previous value
            dp[i][j] = dp[i - 1][j - 1];
            // Check if characters differ
            if (a[i - 1] != b[j - 1]) {
                dp[i][j] += 1;
            }
            // Take the minimum of the three options
            dp[i][j] = min(dp[i][j], min(dp[i - 1][j] + 1, dp[i][j - 1] + 1));
        }
    }
    cout << dp[n][m] << endl;
    return 0;
}
\end{lstlisting}
\begin{lstlisting}[style=trprose]
---

###  **Performance Improvements**

| Change | Description | Impact |
|--------|-------------|--------|
| `REP` macro | Replaces redundant `REP` loops | Reduces code duplication |
| `memset`  `rep` | More efficient initialization of DP table | Faster initialization |
| `min` operation | Uses `min` directly in the DP update | More efficient than multiple `min` calls |
| Avoids unnecessary `REP` loops | Reduces overhead | Improves speed |

---

###  **Key Takeaways**

- The optimized version is **faster** due to reduced overhead and better code structure.
- It maintains **identical output** for all valid inputs.
- It is **cleaner and more readable** than the original.

---

###  **Final Note**

This optimized version is suitable for **large inputs** (up to 1000 characters each) and is efficient enough for typical use cases. It is also easy to understand and maintain.
\end{lstlisting}
\end{genbox}

\end{document}

%% file: arxiv_style.tex
\arxiv{
\usepackage[a4paper, margin=2.5cm, headheight=28.5pt, headsep=3mm]{geometry}
\PassOptionsToPackage{hypertexnames=false}{hyperref}  %
\usepackage[dvipsnames]{xcolor}
\usepackage[colorlinks]{hyperref}
\hypersetup{
    citecolor=[RGB]{50,100,170},
    linkcolor=[RGB]{50,100,170},
    urlcolor=[RGB]{255,102,178}}
\usepackage{fancyhdr}
}

\arxiv{
\newcommand{\toptitlebar}{%
  {\color{black}\hrule height 1pt}%
  \vskip 0.25in%
}
}

\makeatletter
\arxiv{
\renewcommand{\maketitle}{%
  \begin{center}%
    \toptitlebar
    \vskip 0.1in%
    {\LARGE\bfseries \@title \par}%
    \vskip 0.3in%
    {\normalsize \@author \par}%
  \end{center}%
  \par
  \vskip 0.3in%
}

\renewcommand\section{\@startsection {section}{1}{\z@}{-2.0ex plus
    -0.5ex minus -.2ex}{1.5ex plus 0.3ex minus .2ex}{\large\bfseries\raggedright}}
\renewcommand\subsection{\@startsection{subsection}{2}{\z@}{-1.8ex plus
    -0.5ex minus -.2ex}{0.8ex plus .2ex}{\normalsize\bfseries\raggedright}}
\renewcommand\subsubsection{\@startsection{subsubsection}{3}{\z@}{-1.5ex plus
   -0.5ex minus -.2ex}{0.5ex plus .2ex}{\normalsize\bfseries\raggedright}}

\renewenvironment{abstract}%
  {\centerline{\large\bfseries Abstract}%
   \begin{list}{}%
      {\setlength{\rightmargin}{0.6cm}%
       \setlength{\leftmargin}{0.6cm}}%
    \item[]\ignorespaces}%
  {\unskip\end{list}}

\setlength{\parindent}{2em}
\setlength{\parskip}{0.4em}
}
\makeatother

\iclr{
\usepackage[colorlinks=true, linkcolor=blue!70!black, citecolor=blue!70!black,urlcolor=blue!70!black,breaklinks=true]{hyperref}
\PassOptionsToPackage{dvipsnames}{xcolor} 
}

\usepackage{microtype}
\usepackage{hhline}

\makeatletter
\newcommand{\neutralize}[1]{\expandafter\let\csname c@#1\endcsname\count@}
\makeatother

\usepackage{algorithm}

\arxiv{
\usepackage{natbib}
\usepackage{breakcites}
\bibliographystyle{plainnat}
\bibpunct{(}{)}{;}{a}{,}{,}
}

\usepackage{amsthm}
\usepackage{mathtools}
\usepackage{amsmath}
\usepackage{bbm}
\usepackage{amsfonts}
\usepackage{amssymb}

\usepackage{xpatch}

\usepackage{thmtools}
\usepackage{thm-restate}
\declaretheorem[name=Theorem]{theorem}
\declaretheorem[name=Lemma,sibling=theorem]{lemma}
\declaretheorem[name=Assumption,sibling=theorem]{assumption}
\declaretheorem[name=Condition,sibling=theorem]{condition}

\declaretheorem[name=Proposition,sibling=theorem]{proposition}

\makeatletter
  \renewenvironment{proof}[1][Proof]%
  {%
   \par\noindent{\bfseries\upshape {#1.}\ }%
  }%
  {\qed}
  \makeatother

\theoremstyle{definition}  %

\newtheorem{corollary}[theorem]{Corollary}

\theoremstyle{plain}
\newtheorem{definition}{Definition}[section]

\xpatchcmd{\proof}{\itshape}{\normalfont\proofnameformat}{}{}
\newcommand{\proofnameformat}{\bfseries}

\usepackage[nameinlink,capitalize]{cleveref}

\crefformat{equation}{#2Eq. (#1)#3}
\Crefformat{equation}{#2Eq. (#1)#3}

\Crefformat{figure}{#2Figure #1#3}
\Crefname{assumption}{Assumption}{Assumptions}
\Crefformat{assumption}{#2Assumption #1#3}
\Crefname{subsubsection}{Section}{Sections}
\crefformat{subsubsection}{#2Section #1#3}
\Crefformat{subsubsection}{#2Section #1#3}
\crefname{algorithm}{Alg.}{Algs.}
\Crefname{algorithm}{Alg.}{Algs.}

\usepackage{crossreftools}
\usepackage{xparse}

\ExplSyntaxOn
\DeclareDocumentCommand{\XDeclarePairedDelimiter}{mm}
 {
  \__egreg_delimiter_clear_keys: %
  \keys_set:nn { egreg/delimiters } { #2 }
  \use:x %
   {
    \exp_not:n {\NewDocumentCommand{#1}{sO{}m} }
     {
      \exp_not:n { \IfBooleanTF{##1} }
       {
        \exp_not:N \egreg_paired_delimiter_expand:nnnn
         { \exp_not:V \l_egreg_delimiter_left_tl }
         { \exp_not:V \l_egreg_delimiter_right_tl }
         { \exp_not:n { ##3 } }
         { \exp_not:V \l_egreg_delimiter_subscript_tl }
       }
       {
        \exp_not:N \egreg_paired_delimiter_fixed:nnnnn 
         { \exp_not:n { ##2 } }
         { \exp_not:V \l_egreg_delimiter_left_tl }
         { \exp_not:V \l_egreg_delimiter_right_tl }
         { \exp_not:n { ##3 } }
         { \exp_not:V \l_egreg_delimiter_subscript_tl }
       }
     }
   }
 }

\keys_define:nn { egreg/delimiters }
 {
  left      .tl_set:N = \l_egreg_delimiter_left_tl,
  right     .tl_set:N = \l_egreg_delimiter_right_tl,
  subscript .tl_set:N = \l_egreg_delimiter_subscript_tl,
 }

\cs_new_protected:Npn \__egreg_delimiter_clear_keys:
 {
  \keys_set:nn { egreg/delimiters } { left=.,right=.,subscript={} }
 }

\cs_new_protected:Npn \egreg_paired_delimiter_expand:nnnn #1 #2 #3 #4
 {%
  \mathopen{}
  \mathclose\c_group_begin_token
   \left#1
   #3
   \group_insert_after:N \c_group_end_token
   \right#2
   \tl_if_empty:nF {#4} { \c_math_subscript_token {#4} }
 }
\cs_new_protected:Npn \egreg_paired_delimiter_fixed:nnnnn #1 #2 #3 #4 #5
 {
  \mathopen{#1#2}#4\mathclose{#1#3}
  \tl_if_empty:nF {#5} { \c_math_subscript_token {#5} }
 }
\ExplSyntaxOff

\XDeclarePairedDelimiter{\supnorm}{
  left=\lVert,
  right=\rVert,
  subscript=\infty
  }

%% file: macros.tex
\newcommand{\ours}[1]{\textsc{MaxRL}}

\let\Pr\undefined

\DeclareMathOperator{\Pr}{Pr}

\def\ddefloop#1{\ifx\ddefloop#1\else\ddef{#1}\expandafter\ddefloop\fi}
\def\ddef#1{\expandafter\def\csname bb#1\endcsname{\ensuremath{\mathbb{#1}}}}
\ddefloop ABCDEFGHIJKLMNOPQRSTUVWXYZ\ddefloop
\def\ddefloop#1{\ifx\ddefloop#1\else\ddef{#1}\expandafter\ddefloop\fi}
\def\ddef#1{\expandafter\def\csname b#1\endcsname{\ensuremath{\mathbf{#1}}}}
\ddefloop ABCDEFGHIJKLMNOPQRSTUVWXYZ\ddefloop
\def\ddef#1{\expandafter\def\csname sf#1\endcsname{\ensuremath{\mathsf{#1}}}}
\ddefloop ABCDEFGHIJKLMNOPQRSTUVWXYZ\ddefloop
\def\ddef#1{\expandafter\def\csname c#1\endcsname{\ensuremath{\mathcal{#1}}}}
\ddefloop ABCDEFGHIJKLMNOPQRSTUVWXYZ\ddefloop
\def\ddef#1{\expandafter\def\csname h#1\endcsname{\ensuremath{\widehat{#1}}}}
\ddefloop ABCDEFGHIJKLMNOPQRSTUVWXYZ\ddefloop
\def\ddef#1{\expandafter\def\csname hc#1\endcsname{\ensuremath{\widehat{\mathcal{#1}}}}}
\ddefloop ABCDEFGHIJKLMNOPQRSTUVWXYZ\ddefloop
\def\ddef#1{\expandafter\def\csname t#1\endcsname{\ensuremath{\widetilde{#1}}}}
\ddefloop ABCDEFGHIJKLMNOPQRSTUVWXYZ\ddefloop
\def\ddef#1{\expandafter\def\csname tc#1\endcsname{\ensuremath{\widetilde{\mathcal{#1}}}}}
\ddefloop ABCDEFGHIJKLMNOPQRSTUVWXYZ\ddefloop
\def\ddefloop#1{\ifx\ddefloop#1\else\ddef{#1}\expandafter\ddefloop\fi}
\def\ddef#1{\expandafter\def\csname scr#1\endcsname{\ensuremath{\mathscr{#1}}}}
\ddefloop ABCDEFGHIJKLMNOPQRSTUVWXYZ\ddefloop